\newtheorem{theorem}{Theorem}
\newtheorem{lemma}{Lemma}
\newtheorem{remark}{Remark}
\def\BibTeX{{\rm B\kern-.05em{\sc i\kern-.025em b}\kern-.08em T\kern-.1667em\lower.7ex\hbox{E}\kern-.125emX}}
\newenvironment{customdefinition}[1]{%
\par\vspace{\baselineskip}\noindent\textbf{Definition #1:}\quad\ignorespaces
}{%
\par\vspace{\baselineskip}\ignorespacesafterend
}
\DeclareMathOperator*{\argmin}{arg\,min}
\newcommand{\distas}[1]{\mathbin{\overset{#1}{\kern\z@\sim}}}%
\newsavebox{\mybox}\newsavebox{\mysim}
\newcommand{\distras}[1]{%
  \savebox{\mybox}{\hbox{\kern3pt$\scriptstyle#1$\kern3pt}}%
  \savebox{\mysim}{\hbox{$\sim$}}%
  \mathbin{\overset{#1}{\kern\z@\resizebox{\wd\mybox}{\ht\mysim}{$\sim$}}}%
}
 \def\BibTeX{{\rm B\kern-.05em{\sc i\kern-.025em b}\kern-.08em
     T\kern-.1667em\lower.7ex\hbox{E}\kern-.125emX}}
 \title{Federated Smoothing Proximal Gradient for Quantile Regression with Non-Convex Penalties}
\author{Reza Mirzaeifard~\IEEEmembership{Student Member,~IEEE}, 
        Diyako Ghaderyan, 
        Stefan Werner$^{\star}$~\IEEEmembership{Fellow,~IEEE}
\thanks{Stefan Werner and Reza Mirzaeifard are with the Department
of Electronic Systems, Norwegian University of Science and Technology-NTNU, Norway,
Trondheim, 7032 Norway (e-mail: \{stefan.werner, reza.mirzaeifard\}@ntnu.no). Stefan Werner is also with the Department of Information and Communications Engineering, Aalto University, 00076, Finland. 
Diyako Ghaderyan is with the Department of Information and Communications Engineering, Aalto University, 00076, Finland (e-mail: diyako.ghaderyan@aalto.fi).
This work was partially supported by the Research Council of Norway and the Research Council of Finland (Grant 354523).}
}
\begin{document}
\maketitle
 \begin{abstract}
 Distributed sensors in the internet-of-things (IoT) generate vast amounts of sparse data. Analyzing this high-dimensional data and identifying relevant predictors pose substantial challenges, especially when data is preferred to remain on the device where it was collected for reasons such as data integrity, communication bandwidth, and privacy. This paper introduces a federated quantile regression algorithm to address these challenges. Quantile regression provides a more comprehensive view of the relationship between variables than mean regression models. However, traditional approaches face difficulties when dealing with nonconvex sparse penalties and the inherent non-smoothness of the loss function. For this purpose, we propose a federated smoothing proximal gradient (FSPG) algorithm that integrates a smoothing mechanism with the proximal gradient framework, thereby enhancing both precision and computational speed. This integration adeptly handles optimization over a network of devices, each holding local data samples, making it particularly effective in federated learning scenarios. The FSPG algorithm ensures steady progress and reliable convergence in each iteration by maintaining or reducing the value of the objective function. By leveraging nonconvex penalties, such as the minimax concave penalty (MCP) and smoothly clipped absolute deviation (SCAD), the proposed method can identify and preserve key predictors within sparse models. Comprehensive simulations validate the robust theoretical foundations of the proposed algorithm and demonstrate improved estimation precision and reliable convergence.

\end{abstract}

 \begin{IEEEkeywords}
 Distributed learning,
 quantile regression, non-convex and non-smooth penalties, weak convexity, sparse learning
 \end{IEEEkeywords}
\IEEEpeerreviewmaketitle
\section{Introduction}
\label{sec:intro}

The internet-of-things (IoT) landscape has been transformed by the adoption of cyber-physical systems characterized by numerous distributed devices and sensors that revolutionize data collection and decision-making processes. In this decentralized environment, traditional methods relying on centralized data aggregation are impractical due to substantial computational, energy, and bandwidth demands, as well as significant privacy concerns \cite{chen2019quantile,zhou2022admm}. Federated learning (FL) offers a solution by enabling collaborative model training across edge devices, reducing privacy risks and the need for centralized data storage \cite{yin2021comprehensive,zhou2023decentralized}. 

Despite these advancements, FL struggles with outlier data, particularly from heavy-tailed distributions, which can distort learning outcomes and undermine model reliability \cite{zhao2022participant,tsouvalas2022federated}. This highlights the need for robust approaches like quantile regression \cite{koenker1987algorithm}, which analyzes predictor-response relationships across different data quantiles. Quantile regression is particularly beneficial in applications such as wind power prediction \cite{yu2020probabilistic}, uncertainty estimation in smart meter data \cite{taieb2016fore}, and load forecasting in smart grids \cite{happy2021stat}, where it manages data variability and intermittency, enhancing stability and efficiency.

Sparse regression techniques are crucial for managing complex IoT data, as they efficiently process vast, heterogeneous data from distributed sensors, providing robust solutions across various fields \cite{wu2009variable,xue2012positive}. For instance, in genetics, they help elucidate quantitative traits \cite{he2016regularized}; in bioinformatics, they refine gene selection in microarray studies \cite{algamal2018gene}; in finance, they enhance risk management models \cite{tibshirani2014adaptive}; and in ecology, they clarify relationships between environmental factors and species distribution \cite{chen2021quantile}. To further improve adaptability and efficacy within federated learning, sophisticated penalization methods like the minimax concave penalty (MCP) \cite{zhang2010nearly} and smoothly clipped absolute deviation (SCAD) \cite{fan2001variable} could be integrated. Despite their non-convex and non-smooth nature, MCP and SCAD have some advantages that make them interesting alternatives to the traditional $l_1$-penalty, e.g., they selectively shrink coefficients, effectively reducing the bias and adapting more flexibly across the sparsity spectrum of models \cite{mirzaeifard2022dynamic,mirzaeifard2022robust,mirzaeifard2022admm}.

However, despite numerous available optimization techniques for $l_1$-penalized quantile regression, including sub-gradient methods, primal-dual approaches, and the alternating direction method of multipliers (ADMM) which have established a solid framework for addressing sparsity in centralized settings \cite{belloni2011l1,koenker2005frisch,wang2017distributed,boyd2011distributed}, the optimization techniques for SCAD or MCP penalized models remains limited. Initially, techniques such as majorization-minimization (MM) and local linear approximation (LLA) have been employed to construct and solve surrogate convex functions that approximate the original non-convex problems \cite{peng2015iterative,zou2008one}. Although these methods facilitate the optimization process, they often involve secondary convergence iterations within each loop, which can result in slower overall convergence and potential precision losses. Building on these concepts, the recently proposed sub-gradient algorithm specifically addresses weakly convex functions and offers a more streamlined approach to handling non-convex penalties in quantile regression \cite{davis2019stochastic}. A recent innovation in this field is the Single-loop Iterative ADMM (SIAD) algorithm, which refines non-convex optimization strategies by employing smoothing techniques that eliminate the need for a smooth component or an implicit Lipschitz condition, thereby enabling efficient convergence \cite{mirzaeifard2022admm}. Despite the promising advancements introduced by SIAD, its application has been limited to centralized settings. Given the growing interest in distributed models for quantile regression, especially within the federated framework, there is a need for algorithms capable of operating effectively across multiple nodes without sharing actual data.

The federated learning field is rich with algorithms offering unique features such as asynchronicity \cite{chen2020asynchronous}, multiple node updates per iteration \cite{li2019convergence}, and stochasticity \cite{karimireddy2020scaffold}. However, most approaches require  convexity and smoothness \cite{li2019convergence,chen2020asynchronous}, making them unsuitable for MCP or SCAD penalized quantile regression, which is non-convex and non-smooth. Some methods, such as smooth federated learning techniques using continuous gradients \cite{davis2022proximal}, can address non-convex challenges but still require a smooth component, limiting their use in sparse penalized regression. Sub-gradient methods, designed for non-smooth objectives, often result in oscillatory and slow convergence  \cite{davis2019stochastic,mirzaeifard2023distributed}, reducing their effectiveness in identifying active coefficients essential for model sparsity and interpretability \cite{mirzaeifard2023smoothing}. Proximal gradient methods represent a significant advance, combining gradient steps with proximal updates and an increasing penalty parameter to better handle non-smooth objectives \cite{gao2024stochastic}. Despite their potential for managing non-convexity and distinguishing between zero and non-zero coefficients, these methods still lack reliable assurances of consistent reductions in each iteration, failing to guarantee continuous improvement in model accuracy. Given these shortcomings, there is a pressing need for a federated learning method tailored for non-convex environments. Such a method would effectively identify active and non-active coefficients and ensure substantial model accuracy improvements in each iteration, meeting the demands for high precision and robustness in decentralized IoT settings.

This paper introduces the \textbf{F}ederated \textbf{S}moothing \textbf{P}roximal \textbf{G}radient (FSPG) approach to address the challenges of federated quantile regression involving non-convex and non-smooth sparse penalties, such as MCP and SCAD, without requiring data sharing with a centralized node. The FSPG algorithm effectively manages non-convex optimization issues by employing smoothing techniques that do not require a smooth component or an implicit Lipschitz condition. Our approach transforms the non-smooth, non-convex problem into a sequence of smooth approximations, thus promoting efficient and robust convergence. The FSPG method utilizes an upper-bound smooth function to approximate the quantile regression function, combining the MM framework \cite{peng2015iterative} with smooth proximal gradient descent optimization \cite{davis2022proximal}. This strategy ensures a significant reduction in each update step, with the convergence of this smooth function closely approximating the quantile regression function. As the smoothness of the approximation diminishes with each iteration, we implement a time-varying increasing penalty parameter to maintain control over the process. Additionally, we incorporate a proximal update step at the centralized node, which is analogous in complexity to averaging and serves as a shared step for each node, thereby reducing complexity at the local level. In summary, the contributions of this paper are outlined as follows:

\begin{itemize}
\item We introduce FSPG, a federated smoothing gradient descent algorithm for sparse-penalized quantile regression with non-convex penalties, specifically targeting MCP and SCAD.
\item We address the challenges posed by the non-convex and non-smooth characteristics of the quantile regression function and penalties through an iterative smoothing process, enabling convergence under mild conditions.
\item We provide a comprehensive analysis of the FSPG, demonstrating a convergence rate of $o\left(k^{-\frac{1}{2}+\frac{d}{2}}\right)$ for the sub-gradient and $o\left(k^{-\frac{1}{2}-\frac{d}{2}}\right)$ for the norm of the coefficients' fluctuations over successive iterations, where $d \in (0,1)$ represents the power of the updating factor for the penalty parameter.

\item Through extensive simulations, we validate the enhanced performance of our proposed algorithm compared to existing methods in terms of accuracy and convergence.
\end{itemize}

The remainder of this paper is organized as follows: Section \ref{sec2} outlines the necessary preliminaries for our study. In Section \ref{sec3}, we present our proposed algorithms. The convergence proof is provided in Section \ref{sec4}. Section \ref{sec5} is dedicated to numerical simulations to validate the performance and effectiveness of our algorithm. Lastly, conclusions and discussions on future work are presented in Section \ref{sec6}.

\noindent\textit{\textbf{Mathematical Notations}}: Lowercase letters represent scalars, bold lowercase letters represent column vectors, and bold uppercase letters represent matrices. The transpose of a matrix is represented by $(\cdot)^\top$, and the $j$th column of a matrix $\mathbf{A}$ is denoted as $\mathbf{a}_{j}$. Additionally, the entry in the $i$th row and $j$th column of $\mathbf{A}$ is denoted as $a_{ij}$. Lastly, $\partial f(u)$ represents the sub-gradient of the function $f(\cdot)$ evaluated at $u$.

\section{Preliminaries}\label{sec2}
This section defines concepts and notations necessary for deriving the federated smoothing proximal gradient (FSPG) algorithm in Section III. In particular, we briefly review sparse quantile regression utilizing non-convex penalties for sparsity and outlier handling, and smoothing approximations aiding gradient-based optimization for non-smooth objectives.
\subsection{Sparse Quantile Regression Framework}
Consider a scalar variable $Y$ and a predictor vector $\mathbf{x}$ of dimension $P$. The conditional cumulative distribution function is defined as $F_{Y}(y|\mathbf{x}) = P(Y \leq y | \mathbf{x})$. For a given $\tau \in (0,1)$, the $\tau$th conditional quantile, $Q_Y(\tau|\mathbf{x})$, is given by $Q_Y(\tau|\mathbf{x}) = \inf\{y : F_Y(y|\mathbf{x}) \geq \tau\}$. The linear model for quantile regression relates $Q_Y(\tau|\mathbf{x})$ to $\mathbf{x} \in \mathbb{R}^P$ as follows \cite{koenker1982robust}:

\begin{equation}\label{eq1}
Q_Y(\tau|\mathbf{x}) = \mathbf{x}^\top \boldsymbol{\beta}_{\tau} + q_{\tau}^{\epsilon},
\end{equation}
where $\boldsymbol{\beta}{\tau} \in \mathbb{R}^P$ represents the coefficients of the regression model, and $q_{\tau}^{\epsilon} \in \mathbb{R}$ is the $\tau$th quantile of the error term, both being unknown and require estimation.

With a dataset consisting of $n$ pairs $\left\{\mathbf{x}_i, y_i\right\}_{i=1}^{n}$ and a chosen $\tau$, we can estimate the model parameters by solving the following optimization problem \cite{koenker1982robust}:

\begin{equation}\label{eq2}
\hat{\mathbf{w}} = \argmin_{\mathbf{w}} \frac{1}{n} \sum_{i=1}^{n} \rho_{\tau}\mathopen{}\left(y_i - \bar{\mathbf{x}}_i^\top \mathbf{w}\right)\mathclose{},
\end{equation}
where $\bar{\mathbf{x}}_j = \mathclose{}\left[\mathbf{x}_j^\top, 1\right]^\top\mathclose{} \in \mathbb{R}^{P+1}$  is the augmented input (feature) vector including a bias term, and $\mathbf{w} = \left[\boldsymbol{\beta}_{\tau}^\top, q_{\tau}^{\epsilon}\right]^\top \in \mathbb{R}^{P+1}$ encapsulates the regression parameters and the intercept, while function $\rho_{\tau} (u) = u\mathopen{}\left(\tau - I\mathopen{}\left(u < 0\right)
\mathclose{}\right)\mathclose{}$ is the check loss function with $I(\cdot)$ being the indicator function \cite{koenker1982robust}.

To enhance inference by incorporating model coefficient priors, we integrate a penalty function $P_{\lambda,\gamma} (\mathbf{w})$, transforming the optimization problem in \eqref{eq2} to the following form \cite{mirzaeifard2022admm}:

\begin{equation}\label{eq3}
\hat{\mathbf{w}} = \argmin_{\mathbf{w}} \frac{1}{n} \sum_{i=1}^{n} \rho_{\tau}\mathopen{}\left(y_i - \bar{\mathbf{x}}_i^\top \mathbf{w}\right)\mathclose{} + P_{\lambda,\gamma}(\mathbf{w}),
\end{equation}
which can be alternatively expressed as:
\begin{multline}\label{eq4}
\hat{\mathbf{w}} = \argmin_{\mathbf{w}}\frac{1}{2} \left\|\mathbf{y} - \bar{\mathbf{X}}^\top\mathbf{w}\right\|_1 \\+ \left(\tau - \frac{1}{2}\right)\mathbf{1}_n^\top\left(\mathbf{y} - \bar{\mathbf{X}}^\top\mathbf{w}\right) + n P_{\lambda,\gamma} (\mathbf{w}).
\end{multline}

In the federated setting, we consider a network consisting of $L$ edge devices (clients), each with its local dataset $\{\bar{\mathbf{X}}^{(l)}, \mathbf{y}^{(l)}\}$, where $\mathbf{y}^{(l)} \in \mathbb{R}^{M_l}$ is a column vector of responses and $\bar{\mathbf{X}}^{(l)} \in \mathbb{R}^{M_l \times (P+1)}$ is a matrix presenting the extended feature vectors, including a bias term. We assume $n=\sum_{l=1}^{L}M_l$, $\bar{\mathbf{X}}=\left[(\bar{\mathbf{X}}^{(1)})^{\top},\cdots,(\bar{\mathbf{X}}^{(L)})^{\top}\right]^{\top}$, and $\mathbf{y}=\left[(\mathbf{y}^{(1)})^{\top},\cdots,(\mathbf{y}^{(L)})^{\top}\right]^{\top}$. Thus,  the objective function \eqref{eq4} can thus be rewritten as:

\begin{equation}\label{eq5}
\min_{\mathbf{w}} \sum_{l=1}^{L} g_l (\mathbf{w}) + n P_{\lambda,\gamma} (\mathbf{w}),
\end{equation}
where $g_l (\mathbf{w}) = \frac{1}{2}\mathopen{}\left\|\mathbf{y}^{(l)} - (\bar{\mathbf{X}}^{(l)})^\top\mathbf{w}\right\|_1\mathclose{} + \mathopen{}\left(\tau - \frac{1}{2}\right)\mathclose{}\mathbf{1}_{M_l}^\top\mathopen{}\left(\mathbf{y}^{(l)} - \mathopen{}\left(\bar{\mathbf{X}}^{(l)}\right)^\top\mathbf{w}\mathclose{}\right)\mathclose{}$ represents the local loss function for client $l$.

While the $l_1$ norm penalty is widely adopted for inducing sparsity, it often leads to biased estimates. To address this issue, our approach utilizes the minimax concave penalty (MCP) \cite{zhang2010nearly} and the smoothly clipped absolute deviation (SCAD) \cite{fan2001variable}  as alternatives. These penalties are formulated as $P_{\lambda,\gamma}\mathopen{}\left(\mathbf{w}\right)\mathclose{}=\sum_{p=1}^P g_{\lambda,\gamma}\mathopen{}\left(w_p\right)\mathclose{}$, which help to reduce this bias and enhance model sparsity. These penalties are defined as follows, with the constraints that $\gamma\geq 1$ for MCP and $\gamma\geq 2$ for SCAD \cite{fan2001variable,zhang2010nearly}:

\begin{equation}\label{eq6}
 g_{\lambda,\gamma}^{\text{MCP}}\mathopen{}\left(w_p\right)\mathclose{}=
\begin{cases}
\lambda|w_p|-\frac{w_p^2}{2\gamma}, & |w_p| \leq \gamma \lambda 
\\
\frac{\gamma\lambda^2}{2}, & |w_p|> \gamma\lambda   
\end{cases}
\end{equation}
and 
\begin{equation}\label{eq7}
 g_{\lambda,\gamma}^{\text{SCAD}}\mathopen{}\left(w_p\right)\mathclose{}=
\begin{cases}
\lambda|w_p|, & |w_p| \leq \lambda 

\\
-\frac{|w_p|^2-2\gamma\lambda|w_p|+\lambda^2}{2\mathopen{}\left(\gamma-1\right)\mathclose{}}, & \lambda < |w_p| \leq \gamma\lambda 
\\
\frac{\mathopen{}\left(\gamma+1\right)\mathclose{}\lambda^2}{2}. &  |w_p| > \gamma\lambda 
\end{cases}
\end{equation}
These penalties effectively distinguish between active and inactive coefficients, which makes them highly suitable for sparse modeling. Notably, both MCP and SCAD exhibit weak convexity for specific $\rho$ values, ensuring that $g_{\lambda,\gamma}\mathopen{}\left(w\right)\mathclose{}+\frac{\rho}{2} w^2$ remains convex \cite{varma2019vector}. This aspect, detailed in prior work, highlights the nuanced efficiency of these functions in the context of sparse quantile regression \cite{mirzaeifard2023distributed,mirzaeifard2023smoothing}.

The proximal gradient method is a powerful optimization algorithm that enables optimization in a central machine without access to local data. However, conventional proximal gradient algorithms in the non-convex setting typically assume the presence of a smooth component \cite{davis2022proximal}. Modifications such as a time-varying penalty parameter have been proposed to extend proximal gradient methods to non-smooth functions \cite{gao2024stochastic}. However, these approaches do not guarantee that each iteration will monotonically improve the objective function value. This limitation underscores the need for a novel federated proximal gradient algorithm that is better suited to the non-smooth and federated setting of federated penalized quantile regression \cite{gao2024stochastic}. In this context, a smoothing approach can be practical to ensure improvement in each iteration.

\subsection{Smoothing Approximation}
Handling non-smooth functions in optimization presents significant challenges, primarily as we cannot exploit the beneficial properties of smooth functions, such as gradient-based convergence guarantees. A common strategy to overcome this limitation is to employ smoothing techniques, which involve replacing non-smooth functions with smooth approximations. These smoothed functions are easier to optimize, particularly within the proximal gradient framework we propose in this paper.

We begin by introducing a smoothing function, which approximates a non-smooth function $g$ with a smooth surrogate $\tilde{g}$, thereby facilitating a more streamlined optimization trajectory.
\begin{customdefinition}{1\cite{chen2012smoothing}}  Let $\tilde{g} : \Omega \subseteq \mathbb{R}^n \times \mathopen{}\left(0, +\infty\right)\mathclose{} \to \mathbb{R}$ serve as a smoothing approximation of $g$, with $g : \Omega \subseteq \mathbb{R}^n \to \mathbb{R}$ exhibits local Lipschitz continuity. The smoothing function $\tilde{g}$ satisfies the following properties:
\begin{enumerate}
\item Differentiability: $\tilde{g} (\cdot, \mu)$ is continuously differentiable over $\mathbb{R}^m$ for any $\mu > 0$. Additionally, for every $\mathbf{x} \in \Omega$, $\tilde{g} (x, \cdot)$ is differentiable over $(0, +\infty]$.
\item Convergence Criterion: For every $\mathbf{x} \in \Omega$,  $\lim_{\mu \to 0^+} \tilde{g} (\mathbf{x}, \mu) = g(x)$.
\item Bound on Gradient: There exists a positive constant $\kappa_{\tilde{g}}$ such that $|\nabla_\mu \tilde{g} (\mathbf{x}, \mu)| \leq \kappa_{\tilde{g}}$ for all $\mu \in (0, +\infty)$ and $\mathbf{x} \in \Omega$.
\item Gradient Convergence: The condition $\lim_{\mathbf{z} \to \mathbf{x}, \mu \to 0} \nabla_z \tilde{g} (\mathbf{z}, \mu) \subseteq \partial g(\mathbf{x})$ is satisfied.
Moreover, for each $\mathbf{x} \in \mathbb{R}^n$, the smoothing function $\tilde{g}$ upholds:
\item Universal Convergence: The relation $\lim_{\mathbf{z} \to \mathbf{x}, \mu \to 0} \tilde{g} (\mathbf{z}, \mu) = g(\mathbf{x})$ is maintained.
\item Lipschitz Continuity in $\mu$: A constant $L > 0$ exists such that $|\tilde{g} (\mathbf{x}, \mu_1) - \tilde{g} (\mathbf{x}, \mu_2)| \leq L|\mu_1 - \mu_2|$.
\item Gradient's Lipschitz Continuity: Given the convexity of $\tilde{g} (\mathbf{x}, \mu)$, a constant $l_\mu > 0$ ensures $\left\|\nabla\tilde{g} (\mathbf{x}, \mu) - \nabla \tilde{g} (\mathbf{y}, \mu) \right\|  \leq l_\mu \left\|\mathbf{x} - \mathbf{y}\right\|$ for any $\mathbf{x}, \mathbf{y} \in \Omega$. 

\end{enumerate}
\end{customdefinition}
Moreover, it has been shown in \cite{bian2013neural} that given smoothing approximations $\tilde{g}_1, \ldots, \tilde{g}_n$ for functions ${g}_1, \ldots, {g}_n$, where each $a_i \geq 0$ and every $g_i$ is regular, the linear combination $\sum_{i=1}^{n} a_i \tilde{g}i$ serves as a smoothing function for $\sum_{i=1}^{n} a_i {g}_i$.

The smoothing function $\tilde{g}$, as characterized above, introduces desirable features such as smoothness and comprehensive convergence. These attributes significantly reduce the complexity of the optimization problem, thereby enhancing our understanding of the convergence behavior. The smoothing gradient framework, based on this approach, was initially proposed in \cite{chen2012smoothing} for convex problems. Recently, it has been extended to matrix rank minimization applications, utilizing some inner loops but without a detailed analysis of the convergence rates \cite{yu2022smoothing}. In the subsequent section, we will explore how the concept of upper-bound smoothing approximations can be effectively applied within the single-loop proximal gradient approach to overcome non-smoothness challenges. 

\section{Federated Smoothing Proximal Gradient for Penalized Quantile Regression}\label{sec3}
In order to address the challenges arising from the lack of Lipschitz differentiability in our objective function, we here introduce the federated smoothing proximal gradient (FSPG) algorithm. This algorithm incorporates a dynamically increasing penalty parameter to enhance convergence properties. The core of FSPG involves approximating the non-smooth term $\left\|\mathbf{y}^{(l)} - (\bar{\mathbf{X}}^{(l)})^\top\mathbf{w}\right\|_1$ with a sum of smooth functions, following the smoothing approach described in \cite{chen2012smoothing}. Specifically, we employ a smoothing approximation function for each term $\left|{y}^{(l)}_{i} - (\bar{\mathbf{x}}^{(l)}_{i})^\top\mathbf{w}\right|$, defined as:
\begin{multline}\label{eq9}
f\mathopen{}\left({y}^{(l)}_{i} - \left(\bar{\mathbf{x}}^{(l)}_{i}\right)^\top\mathbf{w},\mu\right)\mathclose{}=\\
\begin{cases}
\left|{y}^{(l)}_{i} - (\bar{\mathbf{x}}^{(l)}_{i})^\top\mathbf{w}\right|, &  \mu \leq \left|{y}^{(l)}_{i} - \left(\bar{\mathbf{x}}^{(l)}_{i}\right)^\top\mathbf{w}\right|
\\
\frac{\left({y}^{(l)}_{i} - (\bar{\mathbf{x}}^{(l)}_{i})^\top\mathbf{w}\right)^2}{2\mu}+\frac{\mu}{2}. &  \left|{y}^{(l)}_{i} - \left(\bar{\mathbf{x}}^{(l)}_{i}\right)^\top\mathbf{w}\right| < \mu 
\end{cases}
\end{multline}
To aggregate these approximations, we introduce the function $h(\cdot)$, which sums up the smooth approximations for all elements in each local dataset. This results in the expression:
\begin{multline}\label{eq10}
h\mathopen{}\left(\mathbf{y}_l - (\bar{\mathbf{X}}^{(l)})^\top\mathbf{w},\mu\right)\mathclose{}=\frac{1}{2}\sum_{i=1}^{M_l} f\mathopen{}\left({y}^{(l)}_{i} - (\bar{\mathbf{x}}^{(l)}_{i})^\top\mathbf{w},\mu\right)\mathclose{} \\+ (\tau - \frac{1}{2})\mathbf{1}_{M_l}^\top(\mathbf{y}^{(l)} - (\bar{\mathbf{X}}^{(l)})^\top\mathbf{w}).
\end{multline}

Using the approximation provided, we reformulate the objective function as follows:
\begin{equation}\label{eq11}
\min_{\mathbf{w}}  \sum_{l=1}^{L} \tilde{g}_l (\mathbf{w},\mu) + n P_{\lambda,\gamma} (\mathbf{w}),
\end{equation}
where $\tilde{g}_l (\mathbf{w},\mu) =  h(\mathbf{y}^{(l)} - (\bar{\mathbf{X}}^{(l)})^\top\mathbf{w},\mu)$ represents the smoothed local loss function for client $l$. The approximation becomes increasingly precise as $\mu$ approaches zero, aligning the approximate objective function with the true objective. 

To iteratively refine the approximation, we update the parameters $\mu$ and $\sigma$ at each iteration using the following rules, where $c>0$, $\beta>0$, and $d \in (0,1)$:
 \begin{equation}\label{eq12}
     \sigma^{\mathopen{}\left(k+1\right)\mathclose{}}=c(k+1)^{d},
 \end{equation}
 and 
  \begin{equation}\label{eq13}
     \mu^{\mathopen{}\left(k+1\right)\mathclose{}}=\frac{\beta}{(k+1)^{d}}.
 \end{equation}
The above update rules balance smoothness and precision. As the algorithm advances, $\mu$ decreases, aligning the smoothed objective function more closely with the original non-smooth function, enhancing precision. Simultaneously, $\sigma$ increases, stabilizing iterative updates with a stronger penalty and improving convergence properties.

 Subsequently, each client $l$ computes its gradient with respect to $\mathbf{w}^{(k)}$ as follows:
 \begin{multline}\label{eq14}
     \nabla \tilde{g}_l\mathopen{}\left(\mathbf{w}^{(k)}\right)\mathclose{}= \\ \frac{1}{2}  \sum_{i=1}^{M_l} \nabla f\mathopen{}\left({y}^{(l)}_{i} - (\bar{\mathbf{x}}^{(l)}_{i})^\top\mathbf{w}^{(k)},\mu\right)\mathclose{} - (\tau-\frac{1}{2})\bar{\mathbf{X}}^{(l)}\mathbf{1}_{M_l},
 \end{multline}
 where 
 \begin{multline}\label{eq15}
\nabla f\mathopen{}\left({y}^{(l)}_{i} - \left(\bar{\mathbf{x}}^{(l)}_{i}\right)^\top\mathbf{w}^{(k)},\mu\right)\mathclose{}=-\bar{\mathbf{x}}^{(l)}_{i} \hspace{2mm}\times\\
\begin{cases}\operatorname{sgn}\mathopen{}\left({y}^{(l)}_{i} - \left(\bar{\mathbf{x}}^{(l)}_{i}\right)^\top\mathbf{w}^{(k)}\right)\mathclose{}, &  \mu \leq \left|{y}^{(l)}_{i} - (\bar{\mathbf{x}}^{(l)}_{i})^\top\mathbf{w}^{(k)}\right|
\\\mathopen{}\left(
\frac{{y}^{(l)}_{i} - \left(\bar{\mathbf{x}}^{(l)}_{i}\right)^\top\mathbf{w}}{\mu}\right)\mathclose{}. &  \left|{y}^{(l)}_{i} - (\bar{\mathbf{x}}^{(l)}_{i})^\top\mathbf{w}^{(k)}\right| < \mu 
\end{cases}
\end{multline}
 
 Following the local gradient updates, each client communicates this gradient to the central server. The server aggregates these gradients to update the global model parameter $\mathbf{w}$ using the following formula:
\begin{multline}\label{eq16}
   \mathbf{w}^{(k+1)} =\argmin_{\mathbf{w}} \sum_{l=1}^{L}  \left\langle \nabla (\tilde{g}_l\left(\mathbf{w}^{(k)}),\mu^{(k+1)}\right),\mathbf{w}-\mathbf{w}^{(k)}\right\rangle \\+\tilde{g}_l\mathopen{}\left(\mathbf{w}^{(k)}\right)\mathclose{}+\hspace{0.2mm}n \hspace{1mm} P_{\lambda,\gamma}\mathopen{}\left(\mathbf{w}\right)\mathclose{}+ \frac{\sigma^{(k+1)}}{2} \left\|\mathbf{w}-\mathbf{w}^{(k)}\right\|_2^2,
\end{multline}
where each component $w_p$ of $\mathbf{w}$ is individually updated via the proximal operator to maintain sparsity:
\begin{align}\label{eq17}\nonumber
w_p^{\mathopen{}\left(k+1\right)\mathclose{}}&= \argmin_{w_p}  n g_{\lambda,\gamma}\mathopen{}\left(w_p\right)\mathclose{}+\frac{\sigma_{\Psi}^{\mathopen{}\left(k+1\right)\mathclose{}}\mathopen{}}{2}\mathopen{}\left\|w_p-a_p  \right\|_2^2\mathclose{}, \\
&= \textbf{Prox}_{ g_{\lambda,\gamma}}\mathopen{}\left(a_p;\frac{ n}{\sigma_{\Psi}^{\mathopen{}\left(k+1\right)\mathclose{}}}\right)\mathclose{},
\end{align}
with $\mathbf{a}=\mathbf{w}^{(k)}-\frac{\sum_{l=1}^{L}\nabla \tilde{g}_i(\mathbf{w}^{(k)})}{\sigma^{(k+1)}}$. 

Finally, the server then disseminates $\mathbf{w}^{(k+1)}$ back to the clients for the subsequent iteration.

This iterative process is summarized in Algorithm \ref{alg:1}, which outlines the FSPG algorithm for federated sparse-penalized quantile regression. This approach effectively addresses the challenges associated with non-smooth objective functions, leveraging the benefits of smoothing approximations and proximal gradients within a federated learning paradigm.

\begin{algorithm}[t]
 \caption{Federated Smoothing Proximal Gradient for Penalized Quantile Regression (FSPG)}
 \label{alg:1}
\SetAlgoLined
Initialize $\mathbf{w}^{(0)}$, $c$ and $\beta$ such that $\beta c \geq \frac{\lambda_{\max}\left(\bar{\mathbf{X}}^\top\bar{\mathbf{X}}\right)}{2}$ and $d\in (0,1)$ and regularized parameters $\gamma$ and $\lambda$.\;
 \For{$k=0,\cdots$}{
  Update $\sigma^{(k+1)}$ by   \eqref{eq12}\;
  Update $\mu^{(k+1)}$ by   \eqref{eq13}\;
\For{$l=1,\cdots,L$}{
  Receive $\mathbf{w}^{(k)}$ from server\;
   Update $\nabla \tilde{g}_l\left(\mathbf{w}^{(k+1)},\mu^{(k+1)}\right)$ by \eqref{eq14} and \eqref{eq15}\;
   Send $\tilde{g}_l(\mathbf{w}^{(k+1)})$ to the server\;
 }
 Update $\mathbf{w}^{(k+1)}$ by   \eqref{eq16} and \eqref{eq17}\;
 }
\end{algorithm}
\section{Convergence Proof}\label{sec4}
Establishing the convergence of the proposed proximal gradient algorithm requires validating four essential conditions: boundedness, sufficient descent, subgradient bound, and global convergence, as highlighted in \cite[Theorem 2.9]{attouch2013convergence}. To this end, we start by demonstrating the boundedness of the augmented Lagrangian, formalized in Lemma \ref{lem2} below.

\begin{lemma}\label{lem2}
The function $\Phi_\sigma\left(\mathbf{w},\mathbf{w}',\mu\right) = \sum_{l=1}^{L} \tilde{g}_l\mathopen{}\left(\mathbf{w}',\mu\right)\mathclose{} + n P_{\lambda,\gamma} (\mathbf{w})+\sigma\left\|\mathbf{w}-\mathbf{w}'\right\|_2^2$ is lower bounded.
\end{lemma}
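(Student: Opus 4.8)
The plan is to show that $\Phi_\sigma(\mathbf{w},\mathbf{w}',\mu)$ is bounded below by exhibiting a lower bound for each of its constituent pieces. First I would observe that the penalty term $nP_{\lambda,\gamma}(\mathbf{w}) = n\sum_{p=1}^{P} g_{\lambda,\gamma}(w_p)$ is nonnegative for both MCP and SCAD — indeed, from the explicit formulas \eqref{eq6} and \eqref{eq7}, $g_{\lambda,\gamma}(w_p)\geq 0$ everywhere (each branch is a nonnegative quantity), so this term can simply be discarded from below. Likewise the quadratic coupling term $\sigma\|\mathbf{w}-\mathbf{w}'\|_2^2\geq 0$ for $\sigma>0$ and is dropped. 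Hence it suffices to bound $\sum_{l=1}^L \tilde g_l(\mathbf{w}',\mu)$ below, uniformly in $\mathbf{w}'$ and $\mu$.

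Next I would analyze the smoothed local loss $\tilde g_l(\mathbf{w}',\mu) = h(\mathbf{y}^{(l)} - (\bar{\mathbf{X}}^{(l)})^\top\mathbf{w}',\mu)$. Writing $r_i = y_i^{(l)} - (\bar{\mathbf{x}}_i^{(l)})^\top\mathbf{w}'$, the function splits into the smoothed $\ell_1$ part $\tfrac12\sum_i f(r_i,\mu)$ and the linear part $(\tau-\tfrac12)\sum_i r_i$. The smoothing function $f(r,\mu)$ from \eqref{eq9} satisfies $f(r,\mu)\geq |r|$ pointwise (in the region $|r|<\mu$ one has $\tfrac{r^2}{2\mu}+\tfrac{\mu}{2}\geq |r|$ by AM--GM, and equality of forms on the boundary), so $\tfrac12\sum_i f(r_i,\mu)\geq \tfrac12\|\mathbf{r}\|_1 = \tfrac12\|\mathbf{y}^{(l)} - (\bar{\mathbf{X}}^{(l)})^\top\mathbf{w}'\|_1$. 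Combining this with the linear term, $\tilde g_l(\mathbf{w}',\mu)\geq \sum_{i=1}^{M_l}\rho_\tau(r_i) = \sum_{i}\rho_\tau(y_i^{(l)} - (\bar{\mathbf{x}}_i^{(l)})^\top\mathbf{w}')$, i.e. the smoothed local loss dominates the exact check-loss objective $g_l$. Since $\rho_\tau(u) = u(\tau - I(u<0))\geq 0$ for $\tau\in(0,1)$, we get $\tilde g_l(\mathbf{w}',\mu)\geq 0$, and therefore $\Phi_\sigma(\mathbf{w},\mathbf{w}',\mu)\geq 0$, which establishes the claim. Summing over $l$, the whole objective is bounded below by $0$.

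The only genuinely delicate point — and the step I would write out most carefully — is the pointwise inequality $f(r,\mu)\geq |r|$, since everything else is a matter of assembling nonnegativity facts; once that is in place the lemma follows immediately. One should note that the bound is uniform in $\mu>0$, which is exactly what is needed later (the iterates use a shrinking $\mu^{(k)}$), and that $0$ is a valid lower bound independent of $\sigma$, $\mathbf{w}$, $\mathbf{w}'$ and $\mu$; alternatively one can state the sharper bound $\Phi_\sigma \geq \min_{\mathbf{w}}\sum_l g_l(\mathbf{w}) + nP_{\lambda,\gamma}(\mathbf{w})$, i.e. the optimal value of the original problem \eqref{eq5}, if a tighter constant is preferred for the subsequent descent argument. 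A secondary check is that $g_{\lambda,\gamma}\geq 0$ for SCAD on the middle branch $\lambda<|w_p|\leq\gamma\lambda$: the quadratic $-\tfrac{|w_p|^2 - 2\gamma\lambda|w_p| + \lambda^2}{2(\gamma-1)}$ is increasing in $|w_p|$ on that interval and equals $\lambda^2/2>0$ at $|w_p|=\lambda$, so it stays positive — a brief remark suffices.
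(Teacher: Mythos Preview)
Your proposal is correct and follows essentially the same approach as the paper: both argue that each of the three summands in $\Phi_\sigma$ is nonnegative, hence $\Phi_\sigma\geq 0$. The only difference is that you actually justify the key inequality $\tilde g_l(\mathbf{w}',\mu)\geq g_l(\mathbf{w}')$ via the pointwise bound $f(r,\mu)\geq |r|$ (AM--GM), whereas the paper simply asserts that the smoothed loss is ``designed to be always greater than or equal to the actual quantile loss function''; your extra care on the SCAD middle branch and the uniform-in-$\mu$ remark are likewise refinements rather than a different route.
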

\begin{proof}
Consider the function $\Phi_{\sigma'}\mathopen{}\left(\mathbf{w},\mathbf{w}',\mu\right)\mathclose{}$ defined as above. We will examine each term to establish the lower bound:
\begin{enumerate}
    \item The term $\sum_{l=1}^{L} \tilde{g}_l\mathopen{}\left(\mathbf{w}',\mu\right)\mathclose{}$ represents the sum of smoothed local loss functions across $L$ clients. By construction, the smoothed approximation of the quantile loss function $\tilde{g}_l$ is designed to be always greater than or equal to the actual quantile loss function, which in turn is non-negative. Therefore, this term is non-negative for all $\mathbf{w}$ and $\mu > 0$.
    \item The regularization term $n P_{\lambda,\gamma} (\mathbf{w})$ involves penalties based on MCP and SCAD, both of which are non-negative for any $\mathbf{w}$. These penalties are designed to induce sparsity while ensuring the overall term remains non-negative.
    \item The quadratic term $\sigma\left\|\mathbf{w}-\mathbf{w}'\right\|_2^2$ is non-negative, as it is the squared Euclidean norm of the difference between $\mathbf{w}$ and $\mathbf{w}'$, scaled by a positive constant $\sigma$. 
\end{enumerate}

By combining these observations, we conclude that $\Phi\left(\mathbf{w},\mathbf{w}',\mu\right)$, being the sum of non-negative terms, is itself non-negative. Thus, $\Phi\left(\mathbf{w},\mathbf{w}',\mu\right) \geq 0$ for all $\mathbf{w}, \mathbf{w}' \in \mathbb{R}^n$ and $\mu > 0$, establishing that $\Phi\left(\mathbf{w},\mathbf{w}',\mu\right)$ is lower bounded.
\end{proof}
Having established the lower boundedness of $\Phi_\sigma$, we now focus on the smoothness properties of the function $\tilde{g}_l$, which are crucial for ensuring stability convergence, as detailed in the following lemma. 
\begin{lemma}\label{lem3}
Given the function $\tilde{g}_l(\mathbf{w},\mu) =  h\left(\mathbf{y}^{(l)} - \left(\bar{\mathbf{X}}^{(l)}\right)^\top\mathbf{w},\mu\right)$, with $h(\cdot,\mu)$ as defined in \eqref{eq10}, the smoothness constant $Li$ of $\tilde{g}_l\left(\mathbf{w},\mu\right)$ with respect to $\mathbf{w}$ can be upper bounded by $\frac{\lambda_{\max}\left(\left(\bar{\mathbf{X}}^{(l)}\right)^\top\bar{\mathbf{X}}^{(l)}\right)}{2\mu}$, where $\lambda_{\max} (\cdot)$ denotes the maximum eigenvalue of the matrix. Similarly, the smoothness of the sum $\sum_{l=1}^{L} \tilde{g}_l\left(\mathbf{w}',\mu\right)$ is upper bounded by $\frac{\lambda_{\max}\mathopen{}\left(\bar{\mathbf{X}}^\top\bar{\mathbf{X}}\right)\mathclose{}}{2\mu}$.
\end{lemma}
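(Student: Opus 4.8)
The plan is to bound the Lipschitz constant of the gradient $\nabla_{\mathbf{w}} \tilde g_l(\mathbf{w},\mu)$ by analyzing the composition structure of $\tilde g_l$. Recall from \eqref{eq10}--\eqref{eq11} that $\tilde g_l(\mathbf{w},\mu) = h(\mathbf{y}^{(l)} - (\bar{\mathbf{X}}^{(l)})^\top \mathbf{w}, \mu)$, where (up to the affine linear term, which contributes nothing to the Hessian) $h$ is the sum over $i = 1,\dots,M_l$ of the one-dimensional Huber-type smoothing functions $\tfrac12 f(\cdot,\mu)$ from \eqref{eq9}. The first step is to observe that each scalar function $u \mapsto f(u,\mu)$ is continuously differentiable with derivative $\operatorname{sgn}(u)$ for $|u| \ge \mu$ and $u/\mu$ for $|u| < \mu$, so its second derivative is $0$ for $|u|>\mu$ and $1/\mu$ for $|u|<\mu$; hence $f(\cdot,\mu)$ has Lipschitz-continuous derivative with constant exactly $1/\mu$, and $\tfrac12 f(\cdot,\mu)$ with constant $1/(2\mu)$.

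Next I would assemble the Hessian of $\tilde g_l$ via the chain rule. Writing $r_i = y_i^{(l)} - (\bar{\mathbf{x}}_i^{(l)})^\top \mathbf{w}$, we have from \eqref{eq14}--\eqref{eq15} that $\nabla_{\mathbf{w}} \tilde g_l(\mathbf{w},\mu) = -\tfrac12 \sum_{i=1}^{M_l} \bar{\mathbf{x}}_i^{(l)} f'(r_i,\mu) - (\tau - \tfrac12)\bar{\mathbf{X}}^{(l)}\mathbf{1}_{M_l}$, so the Hessian is $\nabla^2_{\mathbf{w}} \tilde g_l(\mathbf{w},\mu) = \tfrac12 \sum_{i=1}^{M_l} f''(r_i,\mu)\, \bar{\mathbf{x}}_i^{(l)} (\bar{\mathbf{x}}_i^{(l)})^\top = (\bar{\mathbf{X}}^{(l)})^\top \mathbf{D} \bar{\mathbf{X}}^{(l)}$, where $\mathbf{D}$ is the diagonal matrix with entries $\tfrac12 f''(r_i,\mu) \in \{0, 1/(2\mu)\}$. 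Then $0 \preceq \mathbf{D} \preceq \tfrac{1}{2\mu} \mathbf{I}$ gives $\|\nabla^2_{\mathbf{w}} \tilde g_l(\mathbf{w},\mu)\| \le \tfrac{1}{2\mu}\,\lambda_{\max}\big((\bar{\mathbf{X}}^{(l)})^\top \bar{\mathbf{X}}^{(l)}\big)$ uniformly in $\mathbf{w}$. Since $\tilde g_l(\cdot,\mu)$ is $\mathcal{C}^1$ with piecewise-constant Hessian bounded in operator norm by this quantity (and the boundary set $\{|r_i| = \mu\}$ has measure zero), the mean-value form of the fundamental theorem of calculus yields that $\nabla_{\mathbf{w}} \tilde g_l(\cdot,\mu)$ is Lipschitz with the stated constant $L_l \le \tfrac{\lambda_{\max}((\bar{\mathbf{X}}^{(l)})^\top \bar{\mathbf{X}}^{(l)})}{2\mu}$.

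For the aggregate bound, I would sum the Hessians: $\nabla^2_{\mathbf{w}} \sum_{l=1}^{L} \tilde g_l(\mathbf{w},\mu) = \sum_{l=1}^{L} (\bar{\mathbf{X}}^{(l)})^\top \mathbf{D}^{(l)} \bar{\mathbf{X}}^{(l)} = \bar{\mathbf{X}}^\top \mathbf{D} \bar{\mathbf{X}}$ with $\mathbf{D} = \operatorname{blkdiag}(\mathbf{D}^{(1)},\dots,\mathbf{D}^{(L)})$ and $\bar{\mathbf{X}}$ the stacked design matrix, so again $0 \preceq \mathbf{D} \preceq \tfrac{1}{2\mu}\mathbf{I}$ forces the operator norm to be at most $\tfrac{\lambda_{\max}(\bar{\mathbf{X}}^\top \bar{\mathbf{X}})}{2\mu}$. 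Alternatively one can invoke the subadditivity of Lipschitz constants under summation together with $\lambda_{\max}(\bar{\mathbf{X}}^\top\bar{\mathbf{X}}) \ge \sum_l \lambda_{\max}((\bar{\mathbf{X}}^{(l)})^\top\bar{\mathbf{X}}^{(l)})$; the direct Hessian argument is cleaner and tighter, so that is the route I would present.

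The only real subtlety — and thus the step I would be most careful about — is justifying Lipschitz continuity of the gradient despite the Hessian failing to exist on the measure-zero ``kink'' set $\{\mathbf{w} : |r_i(\mathbf{w})| = \mu \text{ for some } i\}$. The clean fix is to argue it at the scalar level first: $f(\cdot,\mu)$ is explicitly $\mathcal{C}^1$ with derivative $f'(\cdot,\mu)$ that is globally $\tfrac{1}{\mu}$-Lipschitz (it is piecewise linear, continuous, with slopes in $\{0,1/\mu\}$), and this property is preserved under composition with the affine map $\mathbf{w} \mapsto r_i(\mathbf{w})$ scaled by $\|\bar{\mathbf{x}}_i^{(l)}\|$ and under finite summation; the eigenvalue bound then emerges from writing the resulting Lipschitz estimate in matrix form. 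This avoids any appeal to a Hessian at nonsmooth points while delivering exactly the constant claimed in the lemma.
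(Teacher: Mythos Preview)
Your main argument is correct and follows essentially the same route as the paper's proof: both bound the Hessian of $\tilde g_l(\cdot,\mu)$ by $(\bar{\mathbf{X}}^{(l)})^\top \mathbf{D}\,\bar{\mathbf{X}}^{(l)}$ with $\mathbf{D}\preceq \tfrac{1}{2\mu}\mathbf{I}$ and read off the eigenvalue bound, and your treatment is in fact more careful than the paper's in handling the measure-zero kink set. One small correction to your aside: the inequality $\lambda_{\max}(\bar{\mathbf{X}}^\top\bar{\mathbf{X}}) \ge \sum_l \lambda_{\max}\big((\bar{\mathbf{X}}^{(l)})^\top\bar{\mathbf{X}}^{(l)}\big)$ is false in general (take $\bar{\mathbf{X}}^{(1)}=[1\;0]$, $\bar{\mathbf{X}}^{(2)}=[0\;1]$), so the ``alternative'' route you mention would not work---but since you explicitly discard it in favor of the direct block-diagonal Hessian computation, this does not affect your proof.
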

\begin{proof}
The gradient of $\tilde{g}_l\left(\mathbf{w},\mu\right)$ with respect to $\mathbf{w}$ in the quadratic region of $f(\cdot,\mu)$ is given by $\frac{{y}^{(l)}_{i} - \left(\bar{\mathbf{x}}^{(l)}_{i}\right)^\top\mathbf{w}}{\mu} \cdot (-\bar{\mathbf{x}}^{(l)}_{i})$. The change in this gradient is influenced by the spectral properties of $\bar{\mathbf{X}}^{(l)}$, specifically the maximum eigenvalue $\lambda_{\max}\left(\left(\bar{\mathbf{X}}^{(l)}\right)^\top\bar{\mathbf{X}}^{(l)}\right)$ as it can be determined from Hessian function, and inversely proportional to the smoothing parameter $\mu$. Therefore, considering the maximum rate of change in the gradient across all data points, the smoothness constant $Li$ of $\tilde{g}_l(\mathbf{w},\mu)$ can be upper bounded by $\frac{\lambda_{\max}\left(\left(\bar{\mathbf{X}}^{(l)}\right)^\top\bar{\mathbf{X}}^{(l)}\right)}{2\mu}$, reflecting the combined effect of the data matrix's spectral norm and the smoothing parameter on the function's smoothness. Similarly, the smoothness of $\sum_{l=1}^{L} \tilde{g}_l\left(\mathbf{w}',\mu\right)$ can be derived as $\frac{\lambda_{\max}\mathopen{}\left(\bar{\mathbf{X}}^\top\bar{\mathbf{X}}\right)\mathclose{}}{2\mu}$.
\end{proof}
With the smoothness of $\tilde{g}_l(\cdot,\mu)$ established, we now consider the behavior of the  $\Phi_\sigma$ over successive iterations. The following lemma, Lemma \ref{lem4} below, provides a crucial bound that highlights the progression towards convergence.
\begin{lemma}\label{lem4}
Given the function $\Phi_{\sigma}\mathopen{}\left(\mathbf{w},\mathbf{w}',\mu\right)\mathclose{} = \sum_{l=1}^{L} \tilde{g}_l\left(\mathbf{w}',\mu\right) + n P_{\lambda,\gamma} (\mathbf{w})+\sigma\left\|\mathbf{w}-\mathbf{w}'\right\|_2^2$, and given that  $\beta c \geq \frac{\lambda_{\max}\mathopen{}\left(\bar{\mathbf{X}}^\top\bar{\mathbf{X}}\right)\mathclose{}}{2}$, the difference in $\Phi_{\sigma}$ at successive iterations is bounded as follows:
\begin{multline}\label{eq18}\Phi_{\sigma^{(k+1)}}\mathopen{}\left(\mathbf{w}^{(k+1)},\mathbf{w}^{(k+1)},\mu^{(k+1)}\right)\mathclose{} \\ - \Phi_{\sigma^{(k)}}\mathopen{}\left(\mathbf{w}^{(k)},\mathbf{w}^{(k)},\mu^{(k)}\right)\mathclose{} \\ \leq -\left(\sigma^{(k+1)}-n\rho\right)\left\|\mathbf{w}^{(k+1)}-\mathbf{w}^{(k)}\right\|^2_2,
\end{multline}
where $\rho$ is the weak convexity parameter of $P_{\lambda,\gamma}(\cdot)$.
\end{lemma}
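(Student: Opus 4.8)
The plan is to exploit the two structural facts established so far: (i) $\tilde g_l(\cdot,\mu)$ is $L_l$-smooth in $\mathbf w$ with $L_l \le \lambda_{\max}\big((\bar{\mathbf X}^{(l)})^\top\bar{\mathbf X}^{(l)}\big)/(2\mu)$ (Lemma \ref{lem3}), so that $\sum_l \tilde g_l(\cdot,\mu^{(k+1)})$ is smooth with constant at most $\lambda_{\max}(\bar{\mathbf X}^\top\bar{\mathbf X})/(2\mu^{(k+1)})$; and (ii) $P_{\lambda,\gamma}$ is $\rho$-weakly convex, so $nP_{\lambda,\gamma}(\cdot) + \tfrac{n\rho}{2}\|\cdot\|^2$ is convex. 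The update \eqref{eq16}--\eqref{eq17} is precisely the minimizer of a model that linearizes the smooth part around $\mathbf w^{(k)}$, adds the (possibly nonconvex) penalty, and adds the proximal term $\tfrac{\sigma^{(k+1)}}{2}\|\mathbf w-\mathbf w^{(k)}\|_2^2$. The key quantitative input is the step-size condition: by \eqref{eq12}--\eqref{eq13}, $\sigma^{(k+1)}\mu^{(k+1)} = c\beta \ge \tfrac12\lambda_{\max}(\bar{\mathbf X}^\top\bar{\mathbf X})$, so $\sigma^{(k+1)}$ dominates the smoothness constant of $\sum_l\tilde g_l(\cdot,\mu^{(k+1)})$ at every iteration — this is the whole point of the chosen update rules.

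First I would write the descent inequality for the smooth part: by the $L$-smoothness bound with $L\le\sigma^{(k+1)}$,
\[
\sum_l \tilde g_l(\mathbf w^{(k+1)},\mu^{(k+1)}) \le \sum_l \tilde g_l(\mathbf w^{(k)},\mu^{(k+1)}) + \sum_l\langle\nabla\tilde g_l(\mathbf w^{(k)},\mu^{(k+1)}),\mathbf w^{(k+1)}-\mathbf w^{(k)}\rangle + \frac{\sigma^{(k+1)}}{2}\|\mathbf w^{(k+1)}-\mathbf w^{(k)}\|_2^2.
\]
Next, since $\mathbf w^{(k+1)}$ minimizes the model in \eqref{eq16}, comparing its value at $\mathbf w^{(k+1)}$ against $\mathbf w^{(k)}$ gives
\[
\sum_l\langle\nabla\tilde g_l(\mathbf w^{(k)},\mu^{(k+1)}),\mathbf w^{(k+1)}-\mathbf w^{(k)}\rangle + nP_{\lambda,\gamma}(\mathbf w^{(k+1)}) + \frac{\sigma^{(k+1)}}{2}\|\mathbf w^{(k+1)}-\mathbf w^{(k)}\|_2^2 \le nP_{\lambda,\gamma}(\mathbf w^{(k)});
\]
here I must be careful that the argmin in \eqref{eq16} genuinely decreases the model value relative to the feasible point $\mathbf w^{(k)}$ — for a nonconvex penalty this is just the defining inequality of a (global) minimizer, which is what the separable prox in \eqref{eq17} computes. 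Adding the two displays and cancelling the inner-product terms yields
\[
\sum_l \tilde g_l(\mathbf w^{(k+1)},\mu^{(k+1)}) + nP_{\lambda,\gamma}(\mathbf w^{(k+1)}) + \sigma^{(k+1)}\|\mathbf w^{(k+1)}-\mathbf w^{(k)}\|_2^2 \le \sum_l \tilde g_l(\mathbf w^{(k)},\mu^{(k+1)}) + nP_{\lambda,\gamma}(\mathbf w^{(k)}).
\]
Since $\Phi_{\sigma^{(k+1)}}(\mathbf w^{(k+1)},\mathbf w^{(k+1)},\mu^{(k+1)})$ has a vanishing quadratic term ($\mathbf w=\mathbf w'$), the left side equals $\Phi_{\sigma^{(k+1)}}(\mathbf w^{(k+1)},\mathbf w^{(k+1)},\mu^{(k+1)}) + \sigma^{(k+1)}\|\mathbf w^{(k+1)}-\mathbf w^{(k)}\|_2^2$, and similarly the right side is $\Phi_{\sigma^{(k)}}(\mathbf w^{(k)},\mathbf w^{(k)},\mu^{(k)})$ plus the correction $\sum_l[\tilde g_l(\mathbf w^{(k)},\mu^{(k+1)}) - \tilde g_l(\mathbf w^{(k)},\mu^{(k)})]$.

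The main obstacle is controlling this last correction term and recovering the clean bound with coefficient $(\sigma^{(k+1)}-n\rho)$ rather than $\sigma^{(k+1)}$ on the right of \eqref{eq18}. For the $\mu$-correction: each $f$ in \eqref{eq9} is nonincreasing in $\mu$ on the relevant branch — actually $f(r,\mu)$ evaluated with the smaller $\mu^{(k+1)}<\mu^{(k)}$ is no larger, since the $l_1$-surrogate only improves as $\mu$ shrinks — so $\sum_l\tilde g_l(\mathbf w^{(k)},\mu^{(k+1)}) \le \sum_l\tilde g_l(\mathbf w^{(k)},\mu^{(k)})$ and the correction is $\le 0$, and can be dropped; I would verify this monotonicity directly from the two-branch formula \eqref{eq9} (the quadratic branch value $r^2/(2\mu)+\mu/2$ is increasing in $\mu$ for $|r|<\mu$, and the crossover is continuous). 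To upgrade the coefficient from $\sigma^{(k+1)}$ to $\sigma^{(k+1)}-n\rho$, I would instead use the $\rho$-weak-convexity of $P_{\lambda,\gamma}$ in the optimality comparison: $nP_{\lambda,\gamma}(\mathbf w) + \tfrac{n\rho}{2}\|\mathbf w - \mathbf w^{(k)}\|_2^2$ is convex, so the minimizer-comparison inequality picks up an extra $\tfrac{n\rho}{2}\|\mathbf w^{(k+1)}-\mathbf w^{(k)}\|_2^2$ on the favorable side; equivalently, rewrite the model's quadratic coefficient as $\sigma^{(k+1)} = (\sigma^{(k+1)}-n\rho) + n\rho$, absorb $n\rho/2$ into convexifying the penalty and use strong-convexity of the remaining $(\sigma^{(k+1)}-n\rho)$-quadratic to get a two-sided descent. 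This is the step that requires the most care, since the nonconvexity of $P_{\lambda,\gamma}$ forbids naive first-order arguments; weak convexity is exactly the property that rescues it, and the condition $\beta c \ge \lambda_{\max}(\bar{\mathbf X}^\top\bar{\mathbf X})/2$ guarantees $\sigma^{(k+1)}$ is large enough that $\sigma^{(k+1)}-n\rho$ still dominates the smoothness constant for all $k$, keeping the right-hand side of \eqref{eq18} a genuine (eventually negative) descent term.
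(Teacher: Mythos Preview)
Your approach uses the same ingredients as the paper's proof (smoothness of $\sum_l\tilde g_l$ with constant $\le\sigma^{(k+1)}$, optimality of the prox step \eqref{eq16}, monotonicity of the smoothing in $\mu$, and weak convexity of $P_{\lambda,\gamma}$), organised slightly differently but equivalently. However, the third display --- the one obtained by ``adding the two displays and cancelling the inner-product terms'' --- is wrong: the two $\tfrac{\sigma^{(k+1)}}{2}\|\mathbf w^{(k+1)}-\mathbf w^{(k)}\|_2^2$ terms sit on \emph{opposite} sides of the two inequalities and cancel, so the naive combination yields only
\[
\sum_l \tilde g_l(\mathbf w^{(k+1)},\mu^{(k+1)}) + nP_{\lambda,\gamma}(\mathbf w^{(k+1)}) \le \sum_l \tilde g_l(\mathbf w^{(k)},\mu^{(k+1)}) + nP_{\lambda,\gamma}(\mathbf w^{(k)}),
\]
with no quantified descent at all. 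Consequently your next paragraph is misframed: you are not ``upgrading'' an existing coefficient $\sigma^{(k+1)}$ down to $\sigma^{(k+1)}-n\rho$; rather, the plain minimizer comparison gives zero, and it is the $(\sigma^{(k+1)}-n\rho)$-strong convexity of the model in \eqref{eq16} (coming from the $\rho$-weak convexity of $nP_{\lambda,\gamma}$ plus the $\sigma^{(k+1)}$-quadratic) that inserts the extra $\tfrac{\sigma^{(k+1)}-n\rho}{2}\|\mathbf w^{(k+1)}-\mathbf w^{(k)}\|_2^2$ on the favourable side of the second display in the first place. This is exactly what the paper does in its ``updating $\mathbf w$'' step \eqref{eq20}.

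One further slip in the last sentence: the condition $\beta c\ge \lambda_{\max}(\bar{\mathbf X}^\top\bar{\mathbf X})/2$ ensures that $\sigma^{(k+1)}$ dominates the smoothness constant, not that $\sigma^{(k+1)}-n\rho$ does; the latter is never required. All that is needed for \eqref{eq18} to be a genuine descent is $\sigma^{(k+1)}>n\rho$, which holds for all sufficiently large $k$ and is treated separately (cf.\ Theorem~\ref{theo1}). Your monotonicity-in-$\mu$ argument for dropping the correction term is correct and matches the paper's step \eqref{eq19}.
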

\begin{proof}
The proof systematically addresses updates to $\sigma$, $\mu$, and $\mathbf{w}$, as well as the subsequent update to $\mathbf{w}'$.\\

\textit{Updating $\sigma$ and $\mu$:} The first step involves updating $\sigma$ from $\sigma^{(k)}$ to $\sigma^{(k+1)}$ and $\mu$ from $\mu^{(k)}$ to $\mu^{(k+1)}$. These updates impact the function due to the smoothness concerning $\mu$ and the non-decreasing nature of $\sigma$. For $\mathbf{w}$ and $\mathbf{w}'$ fixed at $\mathbf{w}^{(k)}$, this yields:
\begin{equation}\label{eq19}
\Phi_{\sigma^{(k+1)}}\mathopen{}\left(\mathbf{w}^{(k)},\mathbf{w}^{(k)},\mu^{(k+1)}\right)\mathclose{} - \Phi_{\sigma^{(k)}}\mathopen{}\left(\mathbf{w}^{(k)},\mathbf{w}^{(k)},\mu^{(k)}\right)\mathclose{} \leq 0.
\end{equation}

\textit{Updating $\mathbf{w}$:} The second step, with $\sigma^{(k+1)}$ and $\mu^{(k+1)}$ set, involves updating $\mathbf{w}$ from $\mathbf{w}^{(k)}$ to $\mathbf{w}^{(k+1)}$. This step leverages the smoothness of $\tilde{g}_l(\mathbf{w},\mu)$ and the $(\sigma^{(k+1)}-n\rho)$-strong convexity from the quadratic penalty. This update ensures a descent, bounded by:
\begin{multline}\label{eq20}
\Phi_{\sigma^{(k+1)}}\mathopen{}\left(\mathbf{w}^{(k+1)},\mathbf{w}^{(k)},\mu^{(k+1)}\right)\mathclose{} \\ - \Phi_{\sigma^{(k+1)}}\mathopen{}\left(\mathbf{w}^{(k)},\mathbf{w}^{(k)},\mu^{(k+1)}\right)\mathclose{}\\ \leq -\left(\sigma^{(k+1)}-n\rho\right)\left\|\mathbf{w}^{(k+1)}-\mathbf{w}^{(k)}\right\|^2_2.
\end{multline}

\textit{Updating $\mathbf{w}'$:} For the final step, updating $\mathbf{w}'$ to $\mathbf{w}^{(k+1)}$, we leverage the smoothness of $\tilde{g}_l\left(\mathbf{w},\mu^{(k+1)}\right)$ and the condition $\beta c \geq \frac{\lambda_{\max}\mathopen{}\left(\bar{\mathbf{X}}^\top\bar{\mathbf{X}}\right)\mathclose{}}{2}$:
\begin{multline}\label{eq21}
\sum_{l=1}^{L} \tilde{g}_l\mathopen{}\left(\mathbf{w}^{(k+1)},\mu^{(k+1)}\right)\mathclose{} \\ \leq \sum_{l=1}^{L} \bigg[ \left\langle \nabla \tilde{g}_l\mathopen{}\left(\mathbf{w}^{(k)},\mu^{(k+1)}\right)\mathclose{}, \mathbf{w}^{(k+1)}-\mathbf{w}^{(k)} \right\rangle \\
 + \tilde{g}_l\mathopen{}\left(\mathbf{w}^{(k)},\mu^{(k+1)}\right)\mathclose{} + \frac{\sigma^{(k+1)}}{2} \left\|\mathbf{w}^{(k+1)}-\mathbf{w}^{(k)}\right\|_2^2 \bigg].
\end{multline}

By combining these updates, we observe an overall non-positive change in $\Phi_{\sigma}$, confirming the descent property across iterations as \eqref{eq18}.
\end{proof}
Building upon the bounded difference in $\Phi_\sigma$, we now explore the sufficient descent property, which demonstrates the capability of the algorithm to iteratively reduce the objective function value. This property is formalized in Theorem \ref{theo1} below, providing insights into the convergence dynamics of the algorithm.
  \begin{theorem}[Sufficient Descent Property]\label{theo1}
Assume there exists a $K$ such that for all $k \geq K$, the condition  $\sigma_{\Psi}^{(k+1)} > { n \rho}$ holds, in which $\rho$ is the weak convexity parameter of $P_{\lambda,\gamma} (\cdot)$. Suppose we update $\sigma^{\mathopen{}\left(k+1\right)\mathclose{}}=c(k+1)^{d}$ with $c > 0$, and $d\in (0,1)$ and $\mu^{\mathopen{}\left(k+1\right)\mathclose{}} = \frac{\beta}{(k+1)^{d}}$, where $\beta c\geq {\lambda_{\max}\left(\bar{\mathbf{X}}^\top\bar{\mathbf{X}}\right)}$. Then, under the iterations of Algorithm \ref{alg:1}, the following relation holds:
\begin{align}\label{eq23}
\mathopen{}\left\|\mathbf{w}^{\mathopen{}\left(k+1\right)\mathclose{}} - \mathbf{w}^{\mathopen{}\left(k\right)\mathclose{}}\right\|_2^2\mathclose{} \in o\mathopen{}\left({k^{-1-d}}\right)\mathclose{}. 
\end{align}
 \end{theorem}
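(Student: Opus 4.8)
The plan is to use $V_k := \Phi_{\sigma^{(k)}}\bigl(\mathbf{w}^{(k)},\mathbf{w}^{(k)},\mu^{(k)}\bigr)$ as a Lyapunov function and chain together the three lemmas already established. On the diagonal the quadratic term of $\Phi$ vanishes, so $V_k = \sum_{l=1}^{L}\tilde{g}_l(\mathbf{w}^{(k)},\mu^{(k)}) + nP_{\lambda,\gamma}(\mathbf{w}^{(k)})$, which is nonnegative by the same term-by-term reasoning used in the proof of Lemma \ref{lem2}; in particular $V_k\ge 0$. By hypothesis there is $K$ such that $\sigma^{(k+1)}-n\rho>0$ for all $k\ge K$, so Lemma \ref{lem4} gives $V_{k+1}-V_k\le -(\sigma^{(k+1)}-n\rho)\|\mathbf{w}^{(k+1)}-\mathbf{w}^{(k)}\|_2^2\le 0$ for $k\ge K$. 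Hence $\{V_k\}_{k\ge K}$ is nonincreasing and bounded below, so it converges to some $V_\infty\ge 0$.

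Next I would telescope the inequality of Lemma \ref{lem4} from $K$ to $N$ and let $N\to\infty$:
\[
\sum_{k=K}^{\infty}\bigl(\sigma^{(k+1)}-n\rho\bigr)\,\bigl\|\mathbf{w}^{(k+1)}-\mathbf{w}^{(k)}\bigr\|_2^2 \;\le\; V_K - V_\infty \;<\;\infty .
\]
Writing $a_k := \|\mathbf{w}^{(k+1)}-\mathbf{w}^{(k)}\|_2^2$ and using $\sigma^{(k+1)}=c(k+1)^d$, choose $K_1\ge K$ so that $\sigma^{(k+1)}-n\rho\ge \tfrac{c}{2}(k+1)^d$ for $k\ge K_1$; this yields $\sum_{k\ge K_1}(k+1)^d a_k<\infty$. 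Set $\gamma_k:=(k+1)^d$ and $\Gamma_k:=\sum_{j=K_1}^{k}\gamma_j$, so that $\Gamma_k\asymp \tfrac{1}{1+d}\,k^{1+d}$ and $\gamma_k/\Gamma_k\to 0$. The final ingredient is the elementary fact that if $\sum_k\gamma_k a_k<\infty$ with $a_k\ge 0$, $\gamma_k>0$, $\Gamma_k\to\infty$, $\gamma_k/\Gamma_k\to 0$, and $\{a_k\}$ eventually nonincreasing, then $\Gamma_k a_k\to 0$ (pick $m(k)$ with $\Gamma_{m(k)}\le\tfrac12\Gamma_k$; then $\tfrac12\Gamma_k a_k\le\sum_{j>m(k)}\gamma_j a_j\to 0$). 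Applied here this gives exactly $a_k\in o(\Gamma_k^{-1})=o(k^{-1-d})$.

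The main obstacle is the monotonicity input: one must show that $a_k=\|\mathbf{w}^{(k+1)}-\mathbf{w}^{(k)}\|_2^2$ (equivalently, the Lyapunov decrements $V_k-V_{k+1}$) is eventually nonincreasing. Bare summability $\sum_k(k+1)^d a_k<\infty$ only forces $(k+1)^d a_k\to 0$, i.e.\ the weaker rate $a_k\in o(k^{-d})$, so a monotonicity/regularity property of $a_k$ is genuinely needed to reach $o(k^{-1-d})$. I would derive it from the proximal update \eqref{eq16}--\eqref{eq17}: $\mathbf{w}^{(k+1)}$ minimizes a $(\sigma^{(k+1)}-n\rho)$-strongly convex surrogate, so subtracting the first-order optimality conditions at iterations $k$ and $k-1$ and using (i) the monotonicity of $\nabla\sum_l\tilde{g}_l(\cdot,\mu)$ (each $\tilde{g}_l$ is convex in $\mathbf{w}$), (ii) the $\rho$-weak convexity of $P_{\lambda,\gamma}$, (iii) the Lipschitz-in-$\mu$ and bounded-$\nabla_\mu$ properties from Definition 1 together with the $O(k^{-1-d})$ per-step change of $\mu^{(k)}$, and (iv) the monotone growth of $\sigma^{(k)}$, one can bound $a_{k+1}$ by $a_k$ up to higher-order terms and conclude eventual monotonicity. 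This stability estimate for consecutive proximal iterates is the technical heart of the argument; the rest is the bookkeeping above.
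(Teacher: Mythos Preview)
Your Lyapunov/telescoping argument in the first two paragraphs is exactly the paper's proof: set $V_k=\Phi_{\sigma^{(k)}}(\mathbf{w}^{(k)},\mathbf{w}^{(k)},\mu^{(k)})$, apply Lemma~\ref{lem4} to get $V_{k+1}-V_k\le -(\sigma^{(k+1)}-n\rho)\,a_k$, telescope, and invoke the lower bound of Lemma~\ref{lem2} to obtain $\sum_{k\ge K}(\sigma^{(k+1)}-n\rho)\,a_k<\infty$. From there the paper passes in a single line from $\eta^{(k+1)}a_k\to 0$ with $\eta^{(k+1)}=\Omega(k^d)$ directly to $a_k\in o(k^{-1-d})$, with no monotonicity argument whatsoever.

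You are right that this final step, as written in the paper, does not follow: $\eta^{(k+1)}a_k\to 0$ with $\eta^{(k+1)}\asymp k^d$ yields only $a_k=o(k^{-d})$, and summability $\sum_k k^d a_k<\infty$ is compatible with $a_k\notin o(k^{-1-d})$ along a sparse subsequence (e.g.\ $a_{k_j}=k_j^{-1-d/2}$ at $k_j=2^j$, zero elsewhere). Your proposed remedy---eventual monotonicity of $a_k$ combined with the partial-sum trick $\tfrac12\Gamma_k a_k\le\sum_{j>m(k)}\gamma_j a_j$---would indeed deliver the claimed rate, and it constitutes additional work that the paper simply does not do. The sketch you give for monotonicity (subtract consecutive optimality conditions, exploit convexity of $\tilde{g}_l(\cdot,\mu)$, $\rho$-weak convexity of $P_{\lambda,\gamma}$, and the slow drift of $\mu^{(k)},\sigma^{(k)}$) is the natural line of attack, but it is not yet a proof: the proximal map of the nonconvex penalty and the simultaneously changing $\mu$ and $\sigma$ make a clean inequality of the form $a_{k+1}\le a_k+o(a_k)$ delicate, and you have not carried out those estimates. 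In summary, your approach coincides with the paper's through the telescoping step; beyond that you have correctly flagged a gap that the paper's own proof leaves open and outlined---but not completed---a plausible repair.
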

 \begin{proof}
     Consider $K'$ greater than $K$. The overall change in the augmented Lagrangian from iteration $K$ to iteration $K'$, using Lemma \ref{lem4} for single iteration, is given as:
\begin{multline}\label{eq24}
    \Phi_{\sigma^{(K')}}\mathopen{}\left(\mathbf{w}^{(K')},\mathbf{w}^{(K')},\mu^{(K')}\right)\mathclose{} - \Phi_{\sigma^{(K)}}\mathopen{}\left(\mathbf{w}^{(K)},\mathbf{w}^{(K)},\mu^{(K)}\right)\mathclose{}\\ \leq - S_{K'}.
\end{multline}
where 
\begin{equation}\label{eq25}
   S_{K'}=\sum_{k=K}^{K'-1} \eta^{(k+1)}\left\|\mathbf{w}^{(k+1)}-\mathbf{w}^{(k)}\right\|^2_2, 
\end{equation}
in which $\eta^{(k+1)}=\sigma^{(k+1)}-n\rho$.
Given that $\Phi_{\sigma}\mathopen{}\left(\mathbf{w},\mathbf{w}',\mu\right)\mathclose{}$ is lower-bounded by Lemma \ref{lem2} and since $\sigma^{(K)}\geq n\rho$, teach element of $S_{K'}$ is non-negative. Thus, we can conclude that $0 \leq \lim_{K'\rightarrow \infty} S_{K'}<\infty$. This implies $\lim \eta^{(k+1)}\left\|\mathbf{w}^{(k+1)}-\mathbf{w}^{(k)}\right\|^2_2=0$. As each
 $\eta^{\mathopen{}\left(k+1\right)\mathclose{}}$ is positive and increases at rate of $\Omega\mathopen{}\left(k^{d}\right)\mathclose{}$, it follows that $\mathopen{}\left\|\mathbf{w}^{\mathopen{}\left(k+1\right)\mathclose{}}-\mathbf{w}^{\mathopen{}\left(k\right)\mathclose{}}\right\|_2^2\mathclose{}$ decrease at a rate of $o\mathopen{}\left(k^{-1-d}\right)\mathclose{}$.
 \end{proof}
 \begin{remark}
Theorem \ref{theo1} provides the flexibility to update $\sigma_\Psi^{\mathopen{}\left(k+1\right)\mathclose{}}$ and $\mu^{\mathopen{}\left(k+1\right)\mathclose{}}$ differently for iterations where $k<K$.This flexibility can be advantageous in practice, allowing for tailored updates that may enhance algorithm performance.
\end{remark}
 After establishing sufficient descent property in Theorem \ref{theo1}, which assures that the algorithm makes significant progress towards the solution in each iteration, we further refine our understanding of the behavior of FSPG through Lemma \ref{lem5} below. In particular, Lemma \ref{lem5} provides insights on the iteration-to-iteration dynamics, specifically focusing on the changes in the gradient of the sum of functions $\tilde{g}_l (\cdot,\mu)$. This lemma is crucial because it bounds these gradient changes, which are indicative of stability and smoothness in navigating the solution space. 
 \begin{lemma}\label{lem5} When following Algorithm \ref{alg:1}, the change in the derivative of  $ \sum_{l=1}^{L} \tilde{g}_l (\mathbf{w},\mu)$  between successive iterations is bounded by:
\begin{multline}\label{eq26}
 \mathopen{}\left\|\nabla \sum_{l=1}^{L} \tilde{g}_l\left(\mathbf{w}^{(k+1)},\mu^{(k+2)}\right)- \nabla \sum_{l=1}^{L} \tilde{g}_l\left(\mathbf{w}^{(k)},\mu^{(k+1)}\right) \right\|_2\mathclose{} \\ \leq \frac{\lambda_{\max}\left(\bar{\mathbf{X}}^\top\bar{\mathbf{X}}\right)}{2\mu^{(k+1)}} \left\|\mathbf{w}^{\mathopen{}\left(k+1\right)\mathclose{}}-\mathbf{w}^{\mathopen{}\left(k\right)\mathclose{}}\right\|_2\mathclose{}\\+ \frac{n\sqrt{\lambda_{\max}\left(\bar{\mathbf{X}}^\top\bar{\mathbf{X}}\right)}}{2} \mathopen{}\left(\frac{\mu^{\mathopen{}\left(k+1\right)\mathclose{}}-\mu^{\mathopen{}\left(k+2\right)\mathclose{}}}{\mu^{\mathopen{}\left(k+2\right)\mathclose{}}}\right)\mathclose{}.
\end{multline}
\end{lemma}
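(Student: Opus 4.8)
The plan is to bound the gradient difference by inserting a telescoping intermediate term, namely $\nabla \sum_{l=1}^L \tilde{g}_l(\mathbf{w}^{(k)},\mu^{(k+2)})$, and then use the triangle inequality to split the left-hand side into a term that measures sensitivity to the change in $\mathbf{w}$ (at fixed smoothing parameter $\mu^{(k+2)}$) and a term that measures sensitivity to the change in $\mu$ (at fixed point $\mathbf{w}^{(k)}$). Concretely, I would write
\begin{multline*}
\bigl\|\nabla \textstyle\sum_l \tilde{g}_l(\mathbf{w}^{(k+1)},\mu^{(k+2)}) - \nabla \textstyle\sum_l \tilde{g}_l(\mathbf{w}^{(k)},\mu^{(k+1)})\bigr\|_2
\leq \bigl\|\nabla \textstyle\sum_l \tilde{g}_l(\mathbf{w}^{(k+1)},\mu^{(k+2)}) - \nabla \textstyle\sum_l \tilde{g}_l(\mathbf{w}^{(k)},\mu^{(k+2)})\bigr\|_2 \\
+ \bigl\|\nabla \textstyle\sum_l \tilde{g}_l(\mathbf{w}^{(k)},\mu^{(k+2)}) - \nabla \textstyle\sum_l \tilde{g}_l(\mathbf{w}^{(k)},\mu^{(k+1)})\bigr\|_2.
\end{multline*}
The first summand is then handled by Lemma \ref{lem3}: since $\sum_l \tilde{g}_l(\cdot,\mu)$ is $\frac{\lambda_{\max}(\bar{\mathbf{X}}^\top\bar{\mathbf{X}})}{2\mu}$-smooth and $\mu^{(k+2)} < \mu^{(k+1)}$, evaluating the smoothness constant at the smaller $\mu$ would give $\frac{\lambda_{\max}(\bar{\mathbf{X}}^\top\bar{\mathbf{X}})}{2\mu^{(k+2)}}$; to land exactly on $\frac{\lambda_{\max}(\bar{\mathbf{X}}^\top\bar{\mathbf{X}})}{2\mu^{(k+1)}}$ as claimed I would instead route through the intermediate term $\nabla\sum_l\tilde g_l(\mathbf w^{(k)},\mu^{(k+1)})$ and the point $\mathbf w^{(k+1)}$, i.e.\ compare at the larger parameter $\mu^{(k+1)}$ — this is a small bookkeeping choice about which order to do the telescoping, and I would pick whichever insertion makes the $\mu^{(k+1)}$ denominator come out.

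The second summand — the sensitivity of the gradient to $\mu$ at a fixed point — is the crux and the main obstacle. Here I would go back to the explicit formula \eqref{eq15} for $\nabla f$. For a fixed residual $r_i = y_i^{(l)} - (\bar{\mathbf{x}}_i^{(l)})^\top \mathbf{w}^{(k)}$, the gradient contribution is $-\bar{\mathbf{x}}_i^{(l)}\operatorname{sgn}(r_i)$ when $|r_i|\geq\mu$ and $-\bar{\mathbf{x}}_i^{(l)} r_i/\mu$ when $|r_i|<\mu$. I would bound $|\,\text{(component at }\mu^{(k+2)}) - \text{(component at }\mu^{(k+1)})\,|$ for each $i$: if $|r_i| \geq \mu^{(k+1)} \geq \mu^{(k+2)}$ both are $\operatorname{sgn}(r_i)$ and the difference is zero; if $|r_i| < \mu^{(k+2)} \leq \mu^{(k+1)}$ the difference is $|r_i|\,|1/\mu^{(k+2)} - 1/\mu^{(k+1)}| = |r_i|(\mu^{(k+1)}-\mu^{(k+2)})/(\mu^{(k+1)}\mu^{(k+2)}) \le (\mu^{(k+1)}-\mu^{(k+2)})/\mu^{(k+2)}$ since $|r_i|<\mu^{(k+1)}$; and the mixed case $\mu^{(k+2)}\le |r_i|<\mu^{(k+1)}$ gives $|\operatorname{sgn}(r_i) - r_i/\mu^{(k+1)}| = 1 - |r_i|/\mu^{(k+1)} \le (\mu^{(k+1)}-\mu^{(k+2)})/\mu^{(k+1)} \le (\mu^{(k+1)}-\mu^{(k+2)})/\mu^{(k+2)}$. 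So each scalar factor is bounded by $(\mu^{(k+1)}-\mu^{(k+2)})/\mu^{(k+2)}$, and summing the vector contributions $-\bar{\mathbf{x}}_i^{(l)}\times(\cdot)$ over all $i$ and all $l$ with the factor $\tfrac12$ from \eqref{eq10}, together with $\|\sum_i c_i \bar{\mathbf{x}}_i\|_2 \le \sqrt{\lambda_{\max}(\bar{\mathbf{X}}^\top\bar{\mathbf{X}})}\,\|\mathbf c\|_2 \le \sqrt{n\,\lambda_{\max}(\bar{\mathbf{X}}^\top\bar{\mathbf{X}})}\max_i|c_i|$ (or a direct $\sum_i\|\bar{\mathbf x}_i\|$-type estimate), yields the second term $\frac{n\sqrt{\lambda_{\max}(\bar{\mathbf{X}}^\top\bar{\mathbf{X}})}}{2}\cdot\frac{\mu^{(k+1)}-\mu^{(k+2)}}{\mu^{(k+2)}}$. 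The linear-in-$\mu$-drift contributions of the $(\tau-\tfrac12)\mathbf{1}^\top(\cdot)$ term in \eqref{eq10} have gradient independent of $\mu$, so they cancel exactly and contribute nothing.

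Putting the two bounds together gives \eqref{eq26}. The step I expect to demand the most care is the case analysis in the second summand — correctly handling the boundary region $\mu^{(k+2)} \le |r_i| < \mu^{(k+1)}$ where the active branch of $f$ switches, and then converting the per-coordinate scalar bounds into the vector norm bound with the right $\sqrt{\lambda_{\max}(\bar{\mathbf{X}}^\top\bar{\mathbf{X}})}$ and $n$ factors; everything else is a routine application of Lemma \ref{lem3} and the triangle inequality. I would also note in passing that since $\mu^{(k+1)} = \beta/(k+1)^d$ is decreasing, $\mu^{(k+1)} - \mu^{(k+2)} > 0$, so all the absolute values above resolve with the stated signs and \eqref{eq26} is stated with the correct orientation.
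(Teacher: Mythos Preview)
Your proposal is correct and follows essentially the same route as the paper: the paper inserts the intermediate term $\nabla\sum_l\tilde g_l(\mathbf w^{(k+1)},\mu^{(k+1)})$ (the ``other'' telescoping choice you identified in order to get $\mu^{(k+1)}$ in the denominator), applies the triangle inequality, invokes Lemma~\ref{lem3} for the $\mathbf w$-change term, and for the $\mu$-change term cites an external result (Lemma~6 of \cite{mirzaeifard2023smoothing}) together with $\|\bar{\mathbf X}\|=\sqrt{\lambda_{\max}(\bar{\mathbf X}^\top\bar{\mathbf X})}$ via Cauchy--Schwarz. Your explicit three-case analysis on the residuals is precisely the content of that cited lemma, so your argument is a self-contained version of the paper's proof rather than a different approach.
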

\begin{proof}
To establish the bound on the change in the derivative of the function $\sum_{l=1}^{L} \tilde{g}_l(\mathbf{w},\mu)$ between successive iterations, we begin by examining the difference in gradients at iterations $k$ and $k+1$. 

First, consider the expression for the change in the gradient:
\begin{multline}\label{eq27}
 \mathopen{}\left\|\nabla \sum_{l=1}^{L} \tilde{g}_l\mathopen{}\left(\mathbf{w}^{(k+1)},\mu^{(k+2)}\right)\mathclose{}- \nabla \sum_{l=1}^{L} \tilde{g}_l\mathopen{}\left(\mathbf{w}^{(k)},\mu^{(k+1)}\right)\mathclose{} \right\|_2\mathclose{}  = \\ 
\bigg\| \nabla \sum_{l=1}^{L} \tilde{g}_l\mathopen{}\left(\mathbf{w}^{(k+1)},\mu^{(k+2)}\right)\mathclose{}- \nabla \sum_{l=1}^{L} \tilde{g}_l\mathopen{}\left(\mathbf{w}^{(k+1)},\mu^{(k+1)}\right)\mathclose{} \\+\nabla \sum_{l=1}^{L} \tilde{g}_l\mathopen{}\left(\mathbf{w}^{(k+1)},\mu^{(k+1)}\right)\mathclose{}- \nabla \sum_{l=1}^{L} \tilde{g}_l\mathopen{}\left(\mathbf{w}^{(k)},\mu^{(k+1)}\right)\mathclose{}  \bigg\|_2
\end{multline}
Next, we apply the triangle inequality to separate the terms:
\begin{multline}\label{eq28}
 \mathopen{}\left\|\nabla \sum_{l=1}^{L} \tilde{g}_l\mathopen{}\left(\mathbf{w}^{(k+1)},\mu^{(k+2)}\right)\mathclose{}- \nabla \sum_{l=1}^{L} \tilde{g}_l\mathopen{}\left(\mathbf{w}^{(k)},\mu^{(k+1)}\right)\mathclose{} \right\|_2\mathclose{}  \leq \\ 
\left\| \nabla \sum_{l=1}^{L} \tilde{g}_l\mathopen{}\left(\mathbf{w}^{(k+1)},\mu^{(k+2)}\right)\mathclose{}- \nabla \sum_{l=1}^{L}   \tilde{g}_l\left(\mathbf{w}^{(k+1)},\mu^{(k+1)}\right)\mathclose{}  \right\|_2\\+\left\|\nabla \sum_{l=1}^{L} \tilde{g}_l\mathopen{}\left(\mathbf{w}^{(k+1)},\mu^{(k+1)}\right)\mathclose{}- \nabla \sum_{l=1}^{L} \tilde{g}_l\left(\mathbf{w}^{(k)},\mu^{(k+1)}\right)\mathclose{} \right\|_2
\end{multline}
We now need to bound these two terms separately. 

For the first term, we leverage the smoothness property of the function $\tilde{g}_l(\mathbf{w},\mu)$, specifically the $\frac{\lambda_{\max}\left(\bar{\mathbf{X}}^\top\bar{\mathbf{X}}\right)}{2\mu^{(k+1)}}$-smoothness, which gives us:
\begin{multline}\label{eq29}
    \left\| \nabla \sum_{l=1}^{L} \tilde{g}_l\left(\mathbf{w}^{(k+1)},\mu^{(k+1)}\right)- \nabla \sum_{l=1}^{L}   \tilde{g}_l\left(\mathbf{w}^{(k)},\mu^{(k+1)}\right) \right\|_2 \\
    \leq \frac{\lambda_{\max}\left(\bar{\mathbf{X}}^\top\bar{\mathbf{X}}\right)}{2\mu^{(k+1)}} \left\|\mathbf{w}^{\mathopen{}\left(k+1\right)\mathclose{}}-\mathbf{w}^{\mathopen{}\left(k\right)\mathclose{}}\right\|_2\mathclose{}.
\end{multline}
For the second term, we utilize the result from \cite{mirzaeifard2023smoothing} (Lemma 6), which provides a bound on the difference in gradients with respect to the smoothing parameter $\mu$:
\begin{multline}\label{eq30}
  \mathopen{}\left\|\nabla  h\mathopen{}\left(\mathbf{z},{\mu^{\mathopen{}\left(k+1\right)\mathclose{}}}\right)\mathclose{} -\nabla  h\mathopen{}\left(\mathbf{z},{\mu^{\mathopen{}\left(k+2\right)\mathclose{}}}\right)\mathclose{}\right\|\mathclose{} \leq \\ \frac{n}{2} \mathopen{}\left(\frac{\mu^{\mathopen{}\left(k+1\right)\mathclose{}}-\mu^{\mathopen{}\left(k+2\right)\mathclose{}}}{\mu^{\mathopen{}\left(k+1\right)\mathclose{}}}\right)\mathclose{}, 
  \end{multline}
where $n$ is the number of samples.   Considering that $\nabla \sum_{l=1}^{L} \tilde{g}_l\left(\mathbf{w}^{(k+1)},\mu^{(k+1)}\right) = -\bar{\mathbf{X}} \nabla h(\mathbf{z},\mu^{(k+1)})$, where $\mathbf{z} = \mathbf{y} - \bar{\mathbf{X}}^{\top}\mathbf{w}^{(k+1)}$, and applying the Cauchy-Schwartz inequality, we obtain:
  \begin{multline}\label{eq31}
      \left\|\nabla \sum_{l=1}^{L} \tilde{g}_l\left(\mathbf{w}^{(k+1)},\mu^{(k+2)}\right)- \nabla \sum_{l=1}^{L} \tilde{g}_l\left(\mathbf{w}^{(k+1)},\mu^{(k+1)}\right)  \right\|_2 \\ 
      \leq   \frac{n \left\|\bar{\mathbf{X}}\right\|}{2} \mathopen{}\left(\frac{\mu^{\mathopen{}\left(k\right)\mathclose{}}-\mu^{\mathopen{}\left(k+1\right)\mathclose{}}}{\mu^{\mathopen{}\left(k\right)\mathclose{}}}\right)\mathclose{},   
  \end{multline}
  where $\|\bar{\mathbf{X}}\|=\sqrt{\lambda_{\max}\left(\bar{\mathbf{X}}^\top\bar{\mathbf{X}}\right)}$. Substituting \eqref{eq29} and \eqref{eq31} in \eqref{eq28} yields \eqref{eq26}.
\end{proof}
The bounded difference in successive gradients, as established by Lemma \ref{lem5}, ensures that the algorithm does not exhibit erratic behavior, which is essential for convergence to a local optimum.

Building on this, Lemma \ref{lem6} below provides further refinement by examining the behavior of the smoothness parameter $\mu$ across iterations. By establishing that the relative changes in $\mu$ decrease in a controlled manner (specifically, at the order of 
$O\left(k^{-1}\right)$, Lemma \ref{lem6} guarantees that the algorithm adjustments become increasingly subtle as it progresses. 
\begin{lemma}\label{lem6}
Given the update rule for  $\mu^{(k)}$ defined as $\mu^{(k)} = \frac{\beta}{k^d}$, where $\beta > 0$ and $d \in (0,1)$ is a constant, the relative difference between successive $\mu$ values, $\left\|\frac{\mu^{(k)} - \mu^{(k+1)}}{\mu^{(k)}}\right\|$, is of the order $O\left(k^{-1}\right)$.
\end{lemma}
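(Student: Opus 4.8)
The plan is to reduce the claim to an elementary estimate on the function $k \mapsto (k/(k+1))^d$. First I would substitute the update rule $\mu^{(k)} = \beta/k^d$ directly into the quantity of interest. Since $\beta > 0$ cancels, this yields the exact identity
\begin{equation*}
\frac{\mu^{(k)} - \mu^{(k+1)}}{\mu^{(k)}} = 1 - \frac{k^d}{(k+1)^d} = 1 - \left(\frac{k}{k+1}\right)^{d} = 1 - \left(1 - \frac{1}{k+1}\right)^{d},
\end{equation*}
and because $\mu^{(k)}$ is strictly decreasing in $k$, this expression is non-negative, so the norm (absolute value) in the statement may simply be dropped.

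Next I would establish an $O(k^{-1})$ upper bound for $1 - (1 - 1/(k+1))^d$. The cleanest route is the generalized Bernoulli inequality: for $t \in [0,1)$ and $d \in (0,1)$ one has $(1-t)^d \geq 1 - dt$, since the map $t \mapsto (1-t)^d$ is convex on $[0,1)$ when $d \in (0,1)$ and therefore lies above its tangent line at $t = 0$. Applying this with $t = 1/(k+1)$ gives
\begin{equation*}
0 \leq \frac{\mu^{(k)} - \mu^{(k+1)}}{\mu^{(k)}} = 1 - \left(1 - \frac{1}{k+1}\right)^{d} \leq \frac{d}{k+1} \leq \frac{d}{k},
\end{equation*}
which is precisely the asserted order $O(k^{-1})$, with an explicit constant $d$. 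Alternatively, a Taylor expansion $(1 + 1/k)^{-d} = 1 - d/k + O(k^{-2})$ reaches the same conclusion and moreover shows the bound is tight, i.e.\ the relative difference is in fact $\Theta(k^{-1})$, although only the $O(k^{-1})$ direction is needed when this lemma is combined with Lemma~\ref{lem5}.

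There is essentially no hard step here; the only point requiring care is the direction of the inequality for the fractional power, namely ensuring that the convexity of $t \mapsto (1-t)^d$ for $d \in (0,1)$ is invoked correctly so that $1 - dt$ serves as a valid \emph{lower} bound for $(1-t)^d$ rather than an upper bound. Once this is in place the estimate follows immediately, and having the constant $d$ explicit is convenient for propagating the $\mu$-dependent term through the gradient-change bound of Lemma~\ref{lem5}.
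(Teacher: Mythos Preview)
Your primary argument contains an error in the convexity step. For $d \in (0,1)$ the map $t \mapsto (1-t)^d$ is \emph{concave} on $[0,1)$, not convex: its second derivative is $d(d-1)(1-t)^{d-2} < 0$. Consequently the graph lies \emph{below} its tangent at $t=0$, giving the classical Bernoulli inequality $(1-t)^d \leq 1 - dt$ for $d \in (0,1)$, which is the reverse of what you wrote. Your displayed chain of inequalities therefore actually yields the lower bound $1 - (1-1/(k+1))^d \geq d/(k+1)$, not the upper bound you need. A simple fix is to use instead that $(1-t)^d \geq 1-t$ for $d\in(0,1)$ and $t\in[0,1]$, which gives $1-(1-1/(k+1))^d \leq 1/(k+1)$; your Taylor-expansion alternative also works and indeed shows the rate is tight.

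For comparison, the paper's proof takes a different and more heuristic route: it passes to the continuous analogue $\mu(k)=\beta k^{-d}$, computes $\mu'(k)=-\beta d k^{-d-1}$, and argues that the relative difference is approximately $|\mu'(k)/\mu(k)| = d/k$. This yields the same constant $d$ but without a rigorous bound on the approximation error. Your approach (once corrected) is more elementary and fully rigorous, giving an explicit non-asymptotic inequality rather than a first-order approximation; the paper's derivative-based argument is quicker to write down but leaves the ``$\approx$'' step unjustified.
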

\begin{proof}
Consider the continuous analog of $\mu^{(k)}$, defined as $\mu(k) = \frac{\beta}{k^d}$. The derivative of $\mu(k)$ with respect to $k$ is given by:
\[
\mu'(k) = -\frac{\beta d}{k^{d+1}}.
\]

The relative difference between $\mu^{(k)}$ and $\mu^{(k+1)}$, normalized by $\mu^{(k)}$, can be approximated by the derivative $\mu'(k)$ scaled by the change in $k$, which is 1 (from $k$ to $k+1$):
\[
\left\|\frac{\mu^{(k)} - \mu^{(k+1)}}{\mu^{(k)}}\right\| \approx \left| \frac{\mu'(k)}{\mu(k)} \right| = \left| -\frac{\beta d}{k^{d+1}} \cdot \frac{k^d}{\beta} \right| = \frac{d}{k}.
\]

Since $d \in (0, 1)$, this expression is well-defined for all $k > 0$. Therefore, the relative difference $\left\|\frac{\mu^{(k)} - \mu^{(k+1)}}{\mu^{(k)}}\right\|$ is of the order $O\left(k^{-1}\right)$.
\end{proof}
Lemma \ref{lem6} is pivotal in understanding the adaptability and fine-tuning of FSPG, ensuring that the updates facilitate convergence.

With the groundwork laid by Lemmas \ref{lem5} and \ref{lem6}, which ensure that both the gradient changes and the updates to the smoothness parameter $\mu$ are well-behaved, the following theorem, Theorem \ref{theo2}, introduces the concept of a sub-gradient bound. This theorem is pivotal because it transitions from the stability and smoothness of the FSPG (ensured by the lemmas) to establishing its effectiveness in approaching a solution.

\begin{theorem}[Sub-gradient Bound Property]\label{theo2}
Consider Algorithm \ref{alg:1} applied to the problem \eqref{eq5}. Suppose the condition $\beta c  \geq \frac{\lambda_{\max}\left(\bar{\mathbf{X}}^\top\bar{\mathbf{X}}\right)}{2}$ holds, and there exists a sufficiently large K such that $\sigma^{(k+1)} \geq n \rho$ for all $k \geq K$, where $\rho$ is the weak convexity parameter of $P_{\lambda,\gamma} (\cdot)$. Assuming the step-size $\sigma$ updates according to $\sigma^{(k+1)} = c (k+1)^{d}$, and the smoothness parameter $\mu$ updates via $\mu^{(k+1)} = \frac{\beta}{(k+1)^{d}}$, there exists a sequence $\boldsymbol{\kappa}^{(k+1)} \in \nabla \sum_{l=1}^{L} \tilde{g}_l\left(\mathbf{w}^{(k+1)},\mu^{(k+2)}\right) + n \partial P_{\lambda,\gamma}\left(\mathbf{w}^{(k+1)}\right)$ such that
\begin{equation}\label{eq32}
\left\|\boldsymbol{\kappa}^{(k+1)}\right\| \in o\left(k^{-\frac{1}{2}+\frac{d}{2}}\right).
\end{equation}
\end{theorem}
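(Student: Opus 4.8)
The plan is to read off the required residual from the first-order optimality condition of the proximal update \eqref{eq16}--\eqref{eq17}. Since $\mathbf{w}^{(k+1)}$ minimizes the sum of the linear term $\langle \sum_{l=1}^{L}\nabla\tilde{g}_l(\mathbf{w}^{(k)},\mu^{(k+1)}),\mathbf{w}-\mathbf{w}^{(k)}\rangle$, the regularizer $nP_{\lambda,\gamma}(\mathbf{w})$, and the quadratic term $\tfrac{\sigma^{(k+1)}}{2}\|\mathbf{w}-\mathbf{w}^{(k)}\|_2^2$ (a strongly convex subproblem for $k\geq K$, because $\sigma^{(k+1)}>n\rho$ makes $nP_{\lambda,\gamma}+\tfrac{\sigma^{(k+1)}}{2}\|\cdot-\mathbf{w}^{(k)}\|_2^2$ strongly convex by weak convexity of $P_{\lambda,\gamma}$), there is a subgradient $\boldsymbol{\xi}^{(k+1)}\in n\,\partial P_{\lambda,\gamma}(\mathbf{w}^{(k+1)})$ with
\[
\sum_{l=1}^{L}\nabla\tilde{g}_l(\mathbf{w}^{(k)},\mu^{(k+1)})+\boldsymbol{\xi}^{(k+1)}+\sigma^{(k+1)}\bigl(\mathbf{w}^{(k+1)}-\mathbf{w}^{(k)}\bigr)=\mathbf{0}.
\]
I would then set $\boldsymbol{\kappa}^{(k+1)}:=\nabla\sum_{l=1}^{L}\tilde{g}_l(\mathbf{w}^{(k+1)},\mu^{(k+2)})+\boldsymbol{\xi}^{(k+1)}$, which by construction lies in the target set $\nabla\sum_{l=1}^{L}\tilde{g}_l(\mathbf{w}^{(k+1)},\mu^{(k+2)})+n\,\partial P_{\lambda,\gamma}(\mathbf{w}^{(k+1)})$.

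Substituting the optimality condition for $\boldsymbol{\xi}^{(k+1)}$ gives
\[
\boldsymbol{\kappa}^{(k+1)}=\Bigl(\nabla\!\sum_{l=1}^{L}\tilde{g}_l(\mathbf{w}^{(k+1)},\mu^{(k+2)})-\nabla\!\sum_{l=1}^{L}\tilde{g}_l(\mathbf{w}^{(k)},\mu^{(k+1)})\Bigr)-\sigma^{(k+1)}\bigl(\mathbf{w}^{(k+1)}-\mathbf{w}^{(k)}\bigr),
\]
so by the triangle inequality and Lemma \ref{lem5},
\[
\|\boldsymbol{\kappa}^{(k+1)}\|\le\Bigl(\tfrac{\lambda_{\max}(\bar{\mathbf{X}}^\top\bar{\mathbf{X}})}{2\mu^{(k+1)}}+\sigma^{(k+1)}\Bigr)\|\mathbf{w}^{(k+1)}-\mathbf{w}^{(k)}\|_2+\tfrac{n\sqrt{\lambda_{\max}(\bar{\mathbf{X}}^\top\bar{\mathbf{X}})}}{2}\cdot\frac{\mu^{(k+1)}-\mu^{(k+2)}}{\mu^{(k+2)}}.
\]
The remaining work is an asymptotic count: with $\sigma^{(k+1)}=c(k+1)^{d}$ and $\mu^{(k+1)}=\beta(k+1)^{-d}$, the prefactor of $\|\mathbf{w}^{(k+1)}-\mathbf{w}^{(k)}\|_2$ is $\Theta(k^{d})$; Theorem \ref{theo1} gives $\|\mathbf{w}^{(k+1)}-\mathbf{w}^{(k)}\|_2\in o(k^{-(1+d)/2})$, so that product is $o\bigl(k^{\,d-(1+d)/2}\bigr)=o\bigl(k^{-1/2+d/2}\bigr)$. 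For the second term, Lemma \ref{lem6} (with index shifted by one) together with $\mu^{(k+1)}/\mu^{(k+2)}=((k+2)/(k+1))^{d}\to 1$ gives $\frac{\mu^{(k+1)}-\mu^{(k+2)}}{\mu^{(k+2)}}=O(k^{-1})$; since $d\in(0,1)$ implies $-1<-1/2+d/2$, we have $O(k^{-1})\subseteq o(k^{-1/2+d/2})$. Summing the two bounds yields \eqref{eq32}.

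The main obstacle is bookkeeping rather than conceptual. The gradient in the optimality condition is evaluated at the old point $(\mathbf{w}^{(k)},\mu^{(k+1)})$, while the target residual must be expressed at $(\mathbf{w}^{(k+1)},\mu^{(k+2)})$; the whole argument therefore rests on Lemma \ref{lem5} to bridge both the iterate change and the smoothing-parameter change, and on verifying that the extra $O(k^{-1})$ term Lemma \ref{lem5} introduces is genuinely dominated by the $o(k^{-1/2+d/2})$ rate extracted from the sufficient-descent estimate of Theorem \ref{theo1}. A secondary care point is that the optimality condition and Theorem \ref{theo1} are invoked only for $k\ge K$, where $\sigma^{(k+1)}>n\rho$ guarantees the proximal subproblem is well-posed and strongly convex so that the subdifferential sum rule applies.
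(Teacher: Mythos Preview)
Your proposal is correct and follows essentially the same approach as the paper: both extract the first-order optimality condition of the proximal step \eqref{eq16}, define $\boldsymbol{\kappa}^{(k+1)}$ by adding $\nabla\sum_{l}\tilde{g}_l(\mathbf{w}^{(k+1)},\mu^{(k+2)})$, and then bound the result via the triangle inequality, Lemma~\ref{lem5}, Lemma~\ref{lem6}, and Theorem~\ref{theo1}. Your presentation is in fact slightly more careful than the paper's in justifying why the proximal subproblem is well-posed for $k\ge K$ and in explicitly checking that the $O(k^{-1})$ term from Lemma~\ref{lem6} is absorbed into $o(k^{-1/2+d/2})$.
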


\begin{proof}
Given that $\beta c \geq \frac{\lambda_{\max}\left(\bar{\mathbf{X}}^\top\bar{\mathbf{X}}\right)}{2}$ and $\sigma^{(k+1)} \geq n \rho$ for $k \geq K$, the optimality conditions from the update step in \eqref{eq16} becomes:
\begin{multline}\label{eq33}
     \mathbf{0} \in  \sum_{l=1}^{L} \nabla \tilde{g}_l\left(\mathbf{w}^{(k)},\mu^{(k+1)}\right)+\sigma^{(k+1)}\left(\mathbf{w}^{(k+1)}-\mathbf{w}^{(k)}\right) \\ +n \partial P_{\lambda,\gamma}\left(\mathbf{w}^{(k+1)}\right).
\end{multline}
Rearranging, we obtain:
\begin{multline}\label{eq34}
    -\sum_{l=1}^{L} \nabla \tilde{g}_l\left(\mathbf{w}^{(k)},\mu^{(k+1)}\right)+\sigma^{(k+1)}\left(\mathbf{w}^{(k)}-\mathbf{w}^{(k+1)}\right) \\ \in n \partial P_{\lambda,\gamma}\left(\mathbf{w}^{(k+1)}\right).
\end{multline}
Adding $\sum_{l=1}^{L} \nabla \tilde{g}_l\left(\mathbf{w}^{(k+1)},\mu^{(k+2)}\right)$ to both sides of \eqref{eq34}, we get:
\begin{multline}\label{eq35}
     \boldsymbol{\kappa}^{(k+1)}  \in \sum_{l=1}^{L} \nabla \tilde{g}_l\left(\mathbf{w}^{(k+1)},\mu^{(k+2)}\right) + n \partial P_{\lambda,\gamma}\left(\mathbf{w}^{(k+1)}\right).
\end{multline}
where $\boldsymbol{\kappa}^{(k+1)} = \sum_{l=1}^{L} \nabla \tilde{g}_l\left(\mathbf{w}^{(k+1)},\mu^{(k+2)}\right) - \sum_{l=1}^{L} \nabla \tilde{g}_l\left(\mathbf{w}^{(k)},\mu^{(k+1)}\right) + \sigma^{(k+1)}\left(\mathbf{w}^{(k)}-\mathbf{w}^{(k+1)}\right)$. Applying the triangle inequality gives us:
\begin{multline}\label{eq36}
   \left\|\boldsymbol{\kappa}^{(k+1)} \right\|_2 \leq   \sigma^{(k+1)}\left\|\mathbf{w}^{(k+1)}-\mathbf{w}^{(k)}
    \right\|_2+ \\ \left\|\sum_{l=1}^{L} \nabla \tilde{g}_l\left(\mathbf{w}^{(k+1)},\mu^{(k+2)}\right) - \sum_{l=1}^{L} \nabla \tilde{g}_l\left(\mathbf{w}^{(k)},\mu^{(k+1)}\right)\right\|_2.
\end{multline}
Using the results from Lemma \ref{lem6} in \eqref{eq36}, we have:
\begin{multline}\label{eq37}
   \left\|\boldsymbol{\kappa}^{(k+1)} \right\|_2 \leq \\  \left(\sigma^{(k+1)}+\frac{\lambda_{\max}\left(\bar{\mathbf{X}}^\top\bar{\mathbf{X}}\right)}{2\mu^{(k+1)}}\right)\left\|\mathbf{w}^{(k+1)}-\mathbf{w}^{(k)}
    \right\|_2\\
     \\+ \frac{n\sqrt{\lambda_{\max}\left(\bar{\mathbf{X}}^\top\bar{\mathbf{X}}\right)}}{2} \mathopen{}\left(\frac{\mu^{\mathopen{}\left(k+1\right)\mathclose{}}-\mu^{\mathopen{}\left(k+2\right)\mathclose{}}}{\mu^{\mathopen{}\left(k+2\right)\mathclose{}}}\right)\mathclose{}.
\end{multline}

Considering the update rule for $\mu$ and $\sigma$, we observe that $\left(\sigma^{(k+1)}+\frac{\lambda_{\max}\left(\bar{\mathbf{X}}^\top\bar{\mathbf{X}}\right)}{2\mu^{(k+1)}}\right)$ increases at a rate of $\Omega(k^{d})$. According to Theorem \ref{theo1}, the term $\left\|\mathbf{w}^{(k+1)}-\mathbf{w}^{(k)}\right\|_2$ decreases at a rate of $o\left(k^{-\frac{1}{2}-\frac{d}{2}}\right)$. Thus, the product $\left(\sigma^{(k+1)}+\frac{\lambda_{\max}\left(\bar{\mathbf{X}}^\top\bar{\mathbf{X}}\right)}{2\mu^{(k+1)}}\right)\left\|\mathbf{w}^{(k+1)}-\mathbf{w}^{(k)}\right\|_2$ diminishes at a rate of $o\left(k^{-\frac{1}{2}+\frac{d}{2}}\right)$. Furthermore, from Lemma \ref{lem6}, we understand that the term $\frac{n\sqrt{\lambda_{\max}\left(\bar{\mathbf{X}}^\top\bar{\mathbf{X}}\right)}}{2} \left(\frac{\mu^{(k+1)}-\mu^{(k+2)}}{\mu^{(k+2)}}\right)$ decreases at a rate of $O\left(k^{-1}\right)$. Therefore, we can conclude that the norm $\left\|\boldsymbol{\kappa}^{(k+1)}\right\|_2$ decreases at a rate of $o\left(k^{-\frac{1}{2}+\frac{d}{2}}\right)$.
\end{proof}
\begin{remark}
The FSPG algorithm significantly improves the convergence rate compared to the FPG algorithm. Specifically, while the derivative of the Moreau envelope in sub-gradient algorithms and FPG indicates coefficient fluctuations and can be bounded by \(O\left(k^{-\frac{1}{4}}\right)\) \cite{davis2019stochastic}, the FSPG achieves a faster fluctuation rate bounded by \(o\left(k^{-\frac{3}{4}}\right)\). Moreover, the convergence rate of the sub-gradient of the augmented Lagrangian function in FSPG offers valuable insights into the optimality of the solution.
\end{remark}

\begin{remark}
There is a notable trade-off between the convergence rates measured by \(\left\|\mathbf{w}^{(k+1)} - \mathbf{w}^{(k)}\right\|^2\) and \(\left\|\boldsymbol{\kappa}^{(k+1)}\right\|\). Specifically, as the parameter \(d\) increases, we observe faster convergence in terms of \(\left\|\mathbf{w}^{(k+1)} - \mathbf{w}^{(k)}\right\|^2\) but a slower convergence for \(\left\|\boldsymbol{\kappa}^{(k+1)}\right\|\). 
\end{remark}

By showing that there exists a sequence within the sub-differential of the objective function that diminishes over time, Theorem \ref{theo2} essentially claims that not only does the algorithm progress smoothly, but it also does so in the right direction towards a point where the gradient can vanish, which is characteristic of optimality in non-convex optimization problems.

With a comprehensive understanding of the descent properties, gradient behaviors, and sub-gradient bounds, we are now in a position to synthesize these insights into a global perspective on the convergence of the FSPG.

\begin{figure*}[ht]
     \centering
     \begin{subfigure}[b]{0.245\textwidth}
         \centering
        \adjustbox{valign=t}{\includegraphics[width=49mm, height=37.5mm]{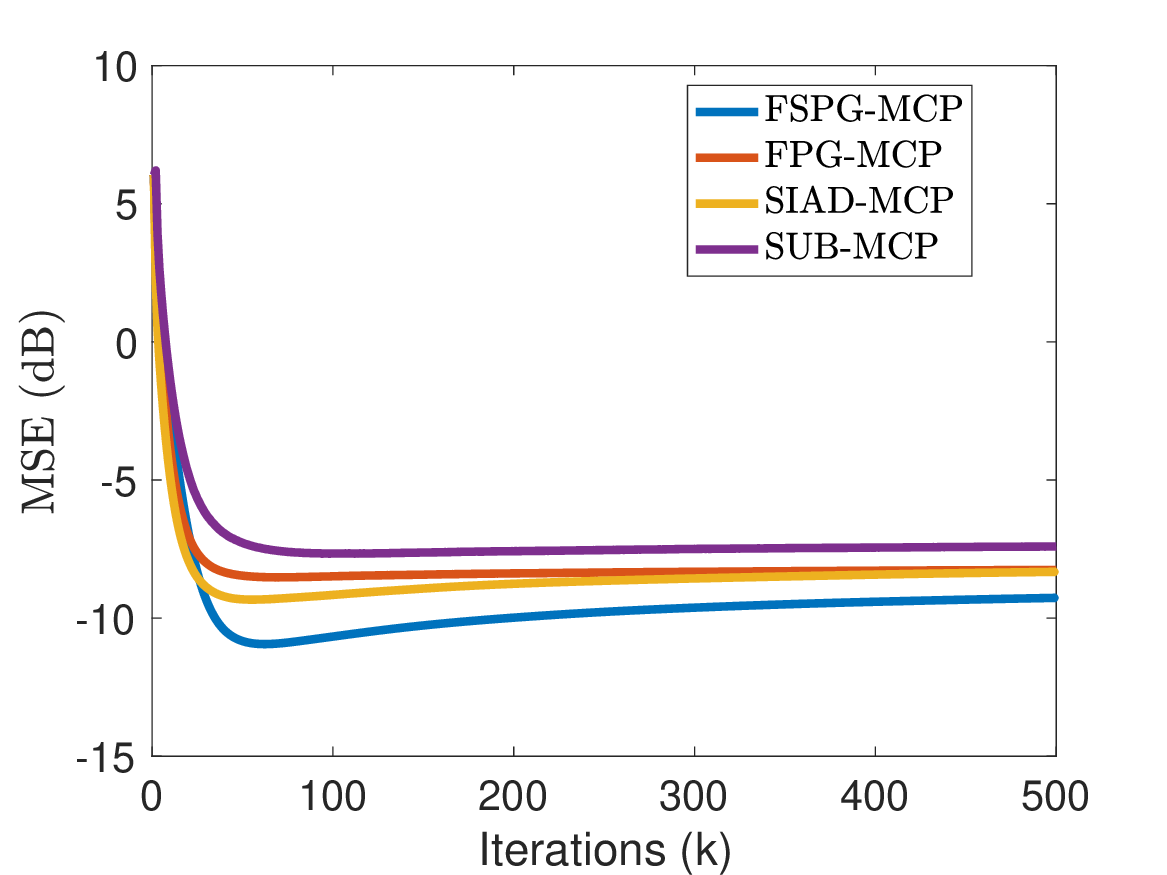}  \vspace{-1mm} }
      \caption{MCP ($\tau=0.55$)} 
     \end{subfigure}
     \hfill
     \begin{subfigure}[b]{0.245\textwidth}
         \centering  \adjustbox{valign=t}{\includegraphics[width=49mm, height=37.5mm]{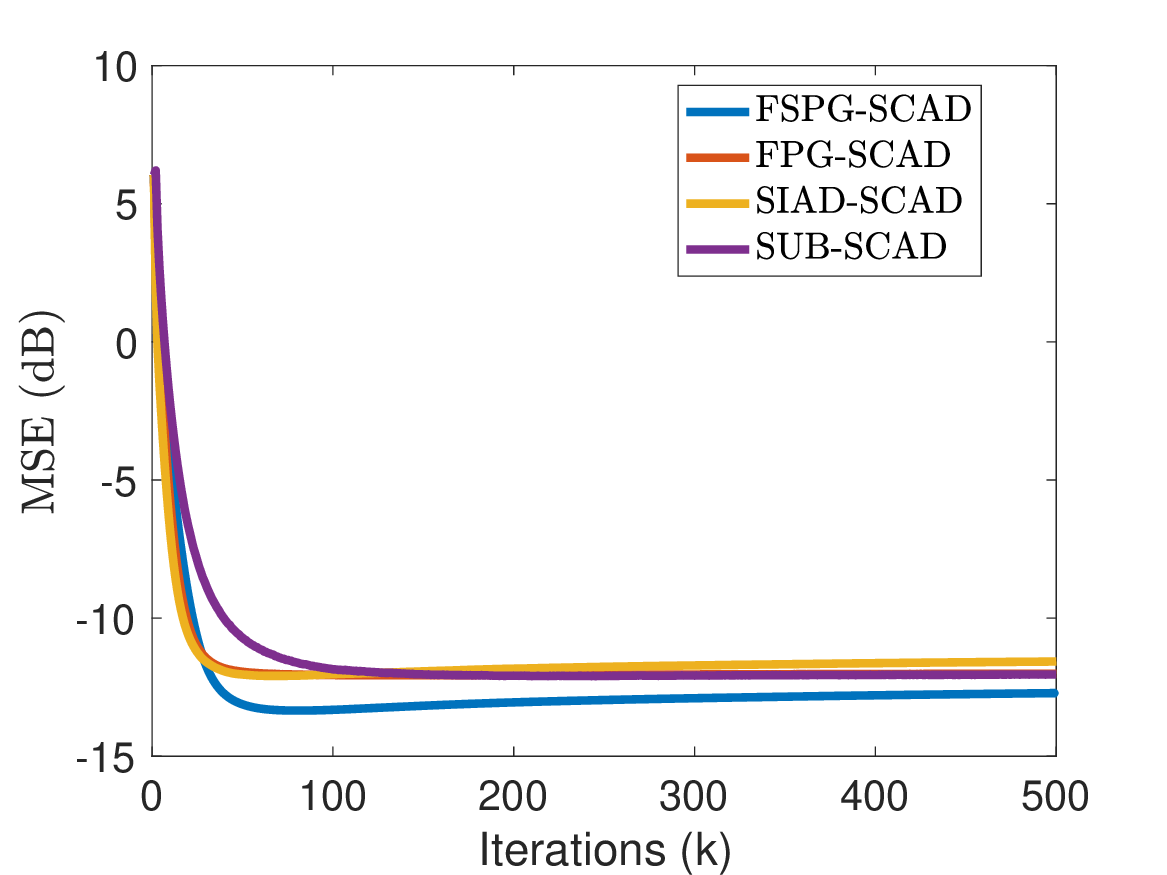}}
        \caption{SCAD ($\tau=0.55$)} 
     \end{subfigure}
    \centering
     \begin{subfigure}[b]{0.245\textwidth}
         \centering
        \adjustbox{valign=t}{\includegraphics[width=49mm, height=37.5mm]{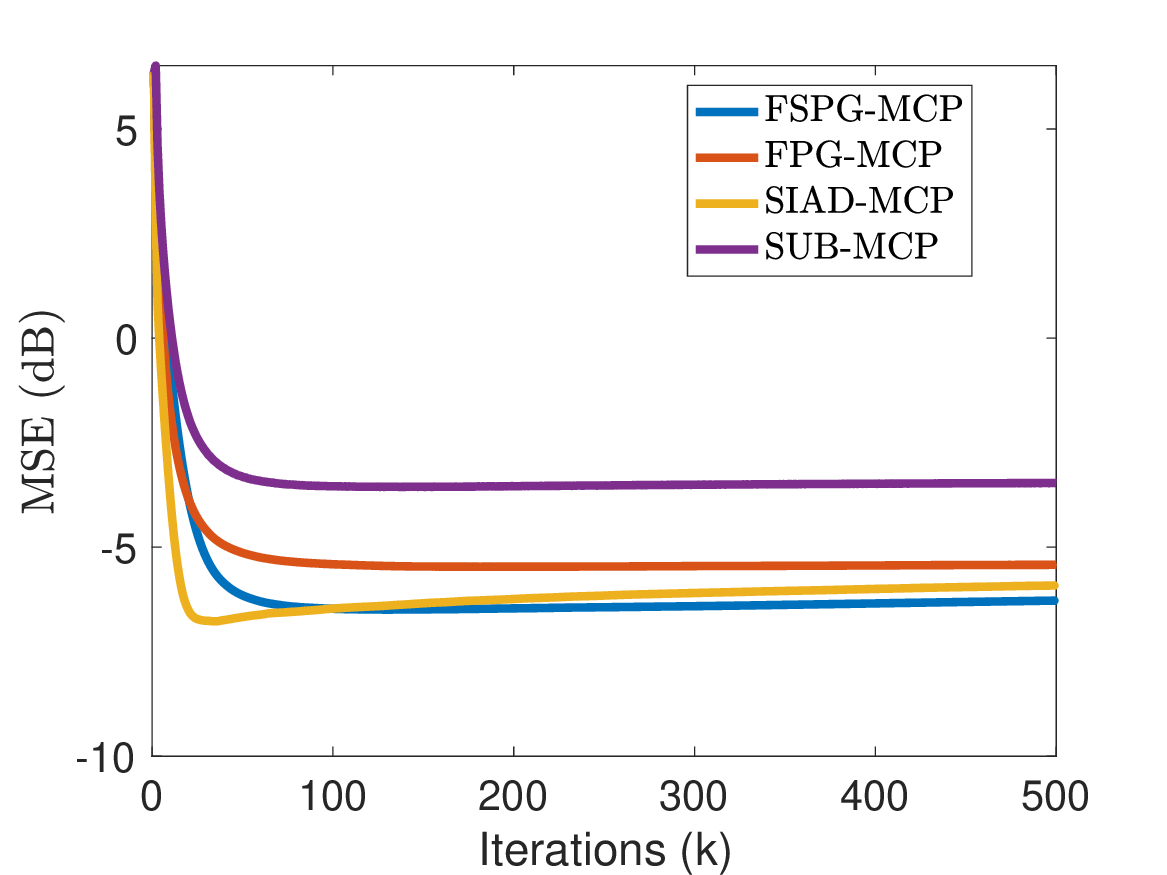}  \vspace{-1mm} }
      \caption{MCP ($\tau=0.7$)}   
     \end{subfigure}
     \hfill
     \begin{subfigure}[b]{0.245\textwidth}
         \centering  \adjustbox{valign=t}{\includegraphics[width=49mm, height=37.5mm]{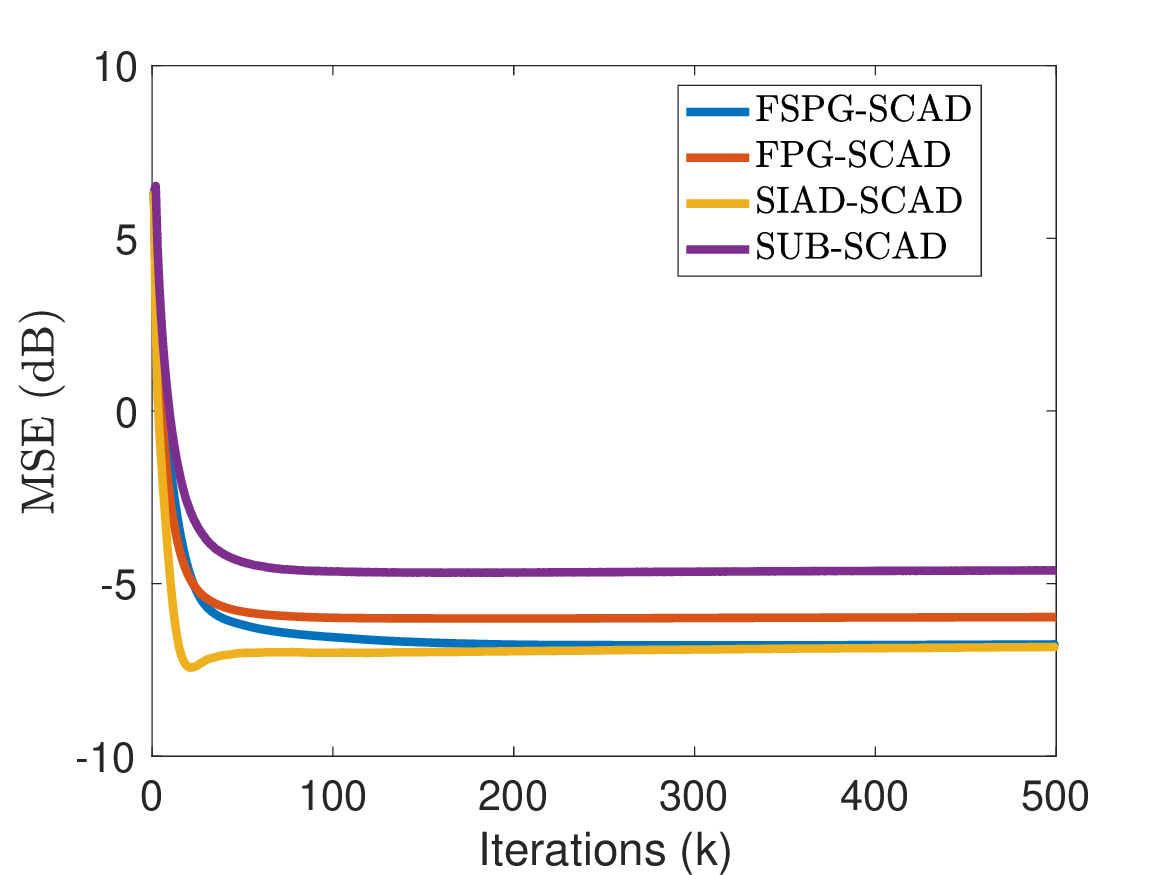}}
        \caption{SCAD ($\tau=0.7$)} 
     \end{subfigure}
     \centering
   \caption{MSE versus iterations.}
   \label{fig1}
\end{figure*}

\begin{figure*}[ht]
     \centering
       \begin{subfigure}[b]{0.245\textwidth}
         \centering  \adjustbox{valign=t}{\includegraphics[width=49mm, height=37.5mm]{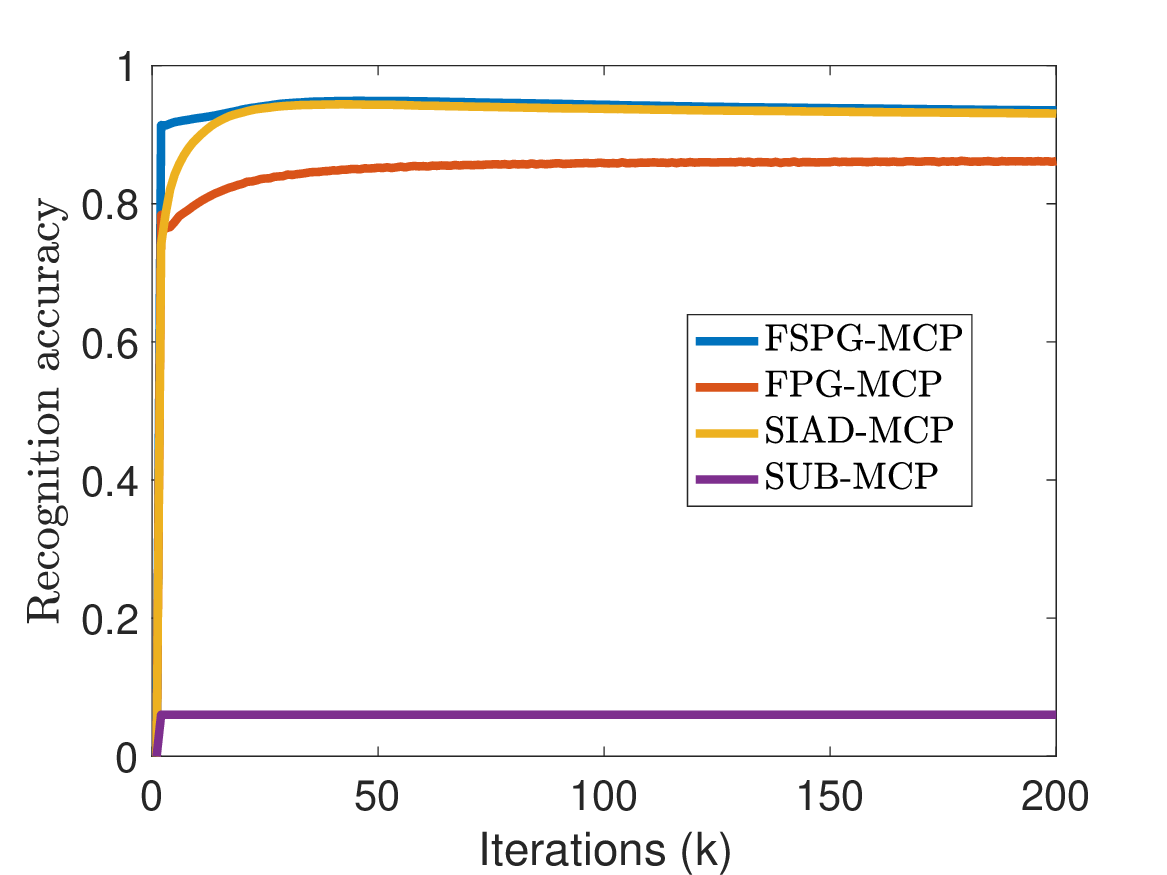}  \vspace{-1mm} }
      \caption{MCP ($\tau=0.55$)}   
     \end{subfigure}
     \hfill
     \begin{subfigure}[b]{0.245\textwidth}
         \centering  \adjustbox{valign=t}{\includegraphics[width=49mm, height=37.5mm]{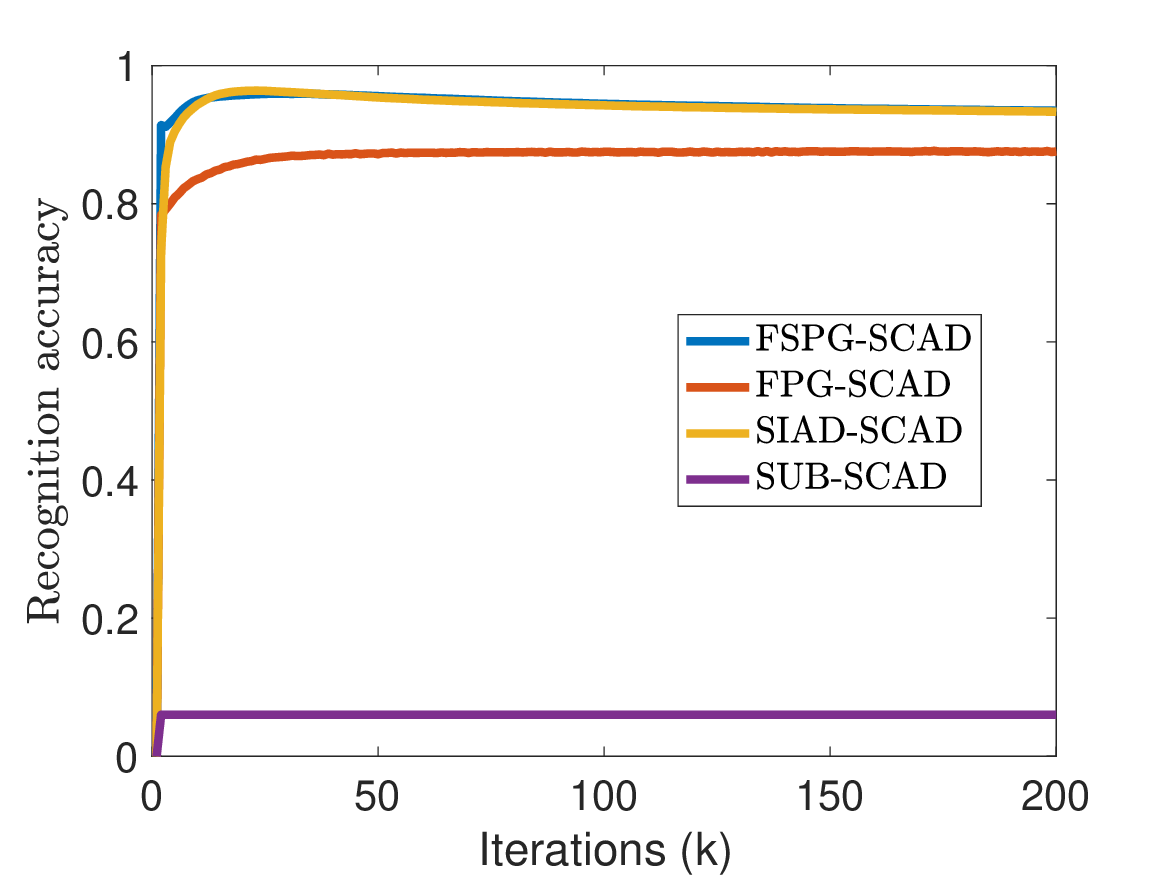}}
        \caption{SCAD ($\tau=0.55$)} 
     \end{subfigure}
     \centering
    \begin{subfigure}[b]{0.245\textwidth}
         \centering  \adjustbox{valign=t}{\includegraphics[width=49mm, height=37.5mm]{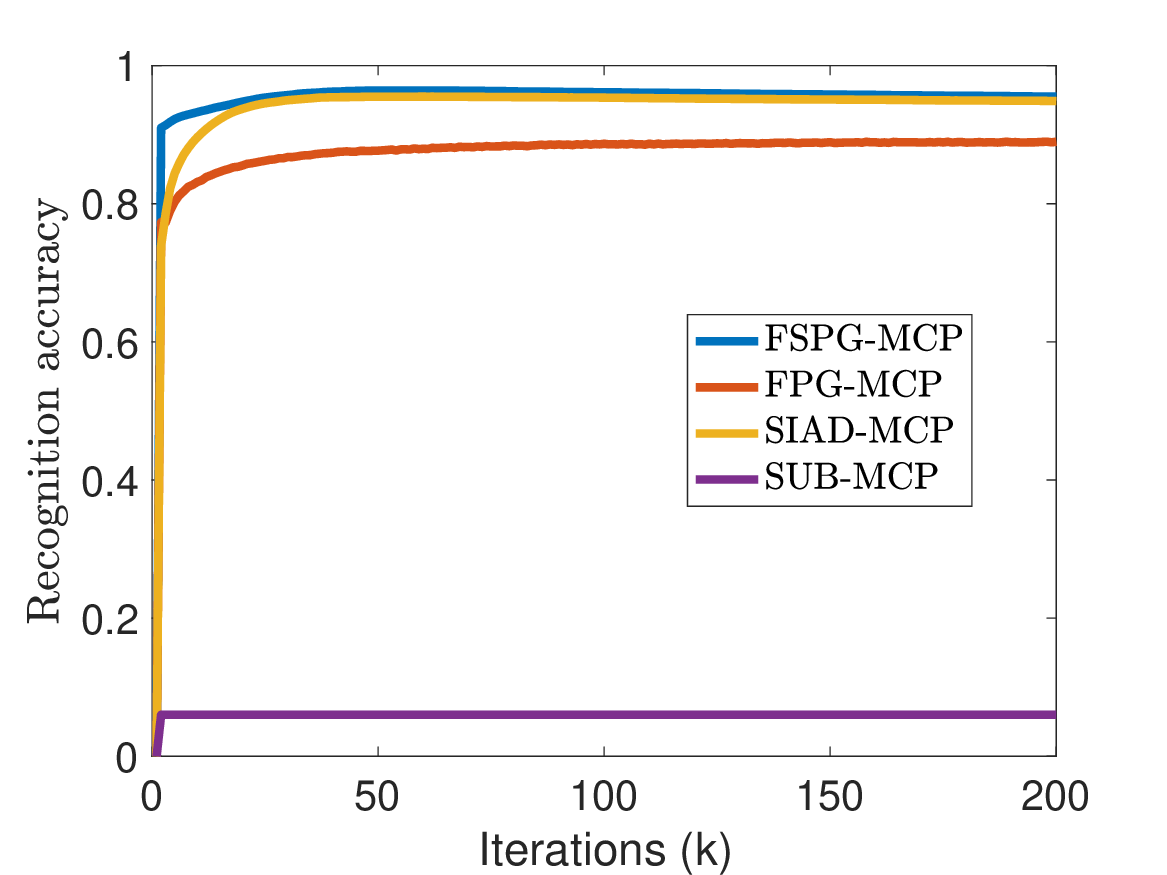}  \vspace{-1mm} }
      \caption{MCP ($\tau=0.7$)}   
     \end{subfigure}
     \hfill
     \begin{subfigure}[b]{0.245\textwidth}
         \centering  \adjustbox{valign=t}{\includegraphics[width=49mm, height=37.5mm]{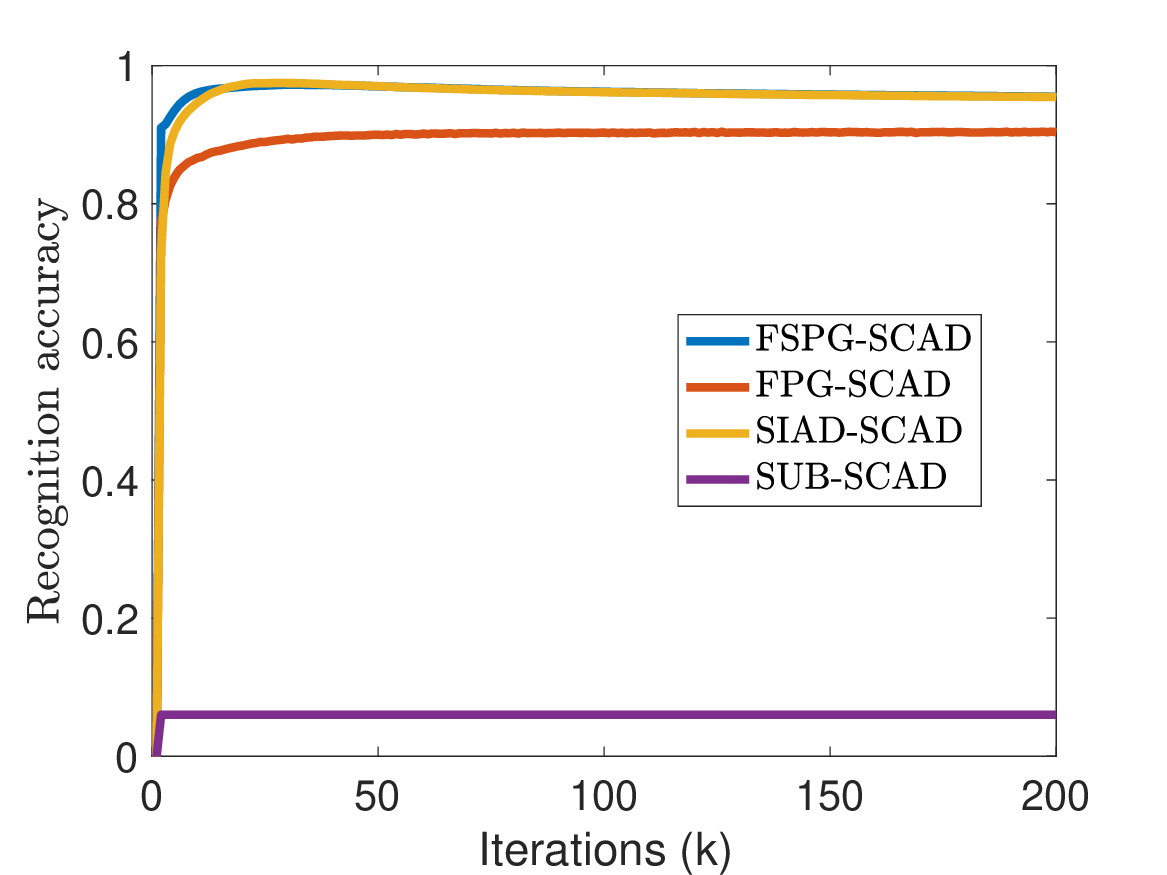}}
        \caption{SCAD ($\tau=0.7$)}
     \end{subfigure}
     \centering
   \caption{Accuracy of correctly recognizing active and non-active coefficients.}
   \label{fig2}
\end{figure*}

\begin{figure*}[ht]
\centering
\begin{subfigure}[b]{0.245\textwidth}
         \centering  \adjustbox{valign=t}{\includegraphics[width=49mm, height=37.5mm]{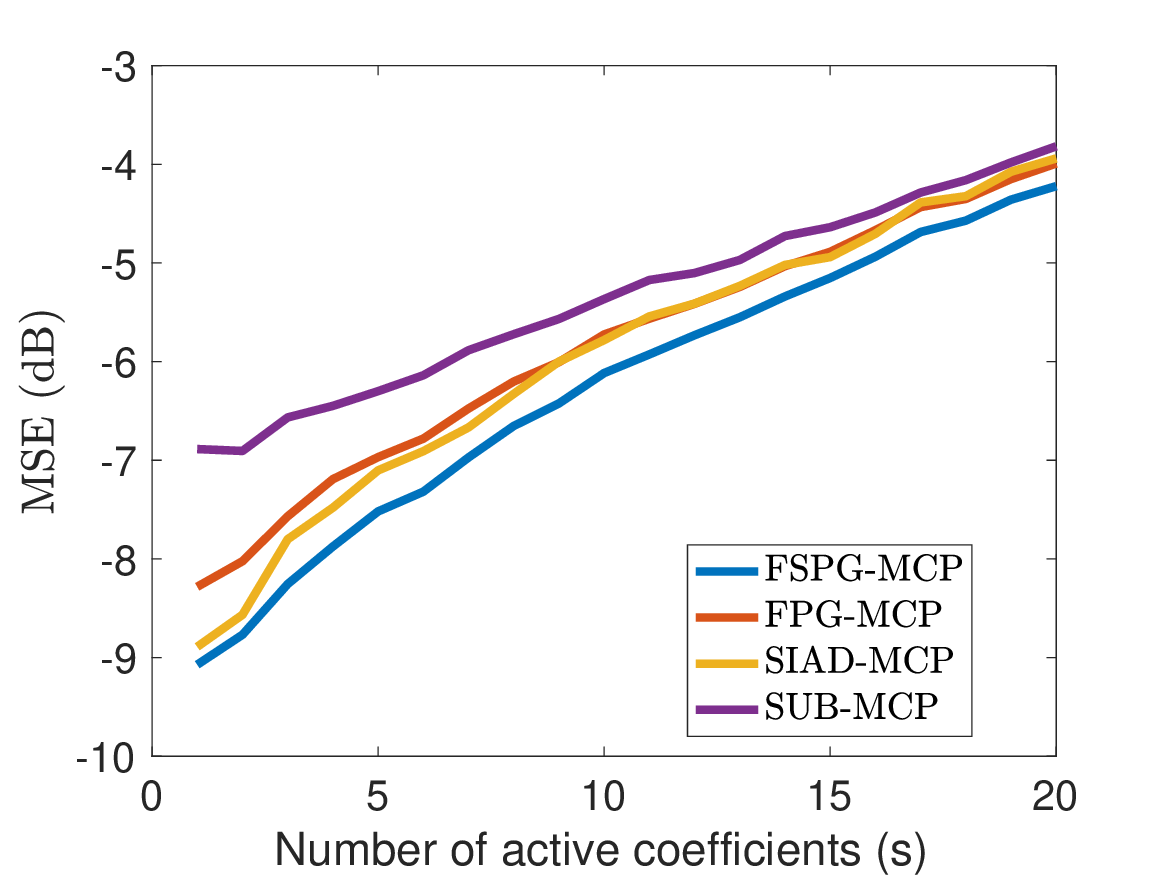}  \vspace{-1mm} }
      \caption{MCP ($\tau=0.55$)}   
     \end{subfigure}
     \hfill
     \begin{subfigure}[b]{0.245\textwidth}
         \centering  \adjustbox{valign=t}{\includegraphics[width=49mm, height=37.5mm]{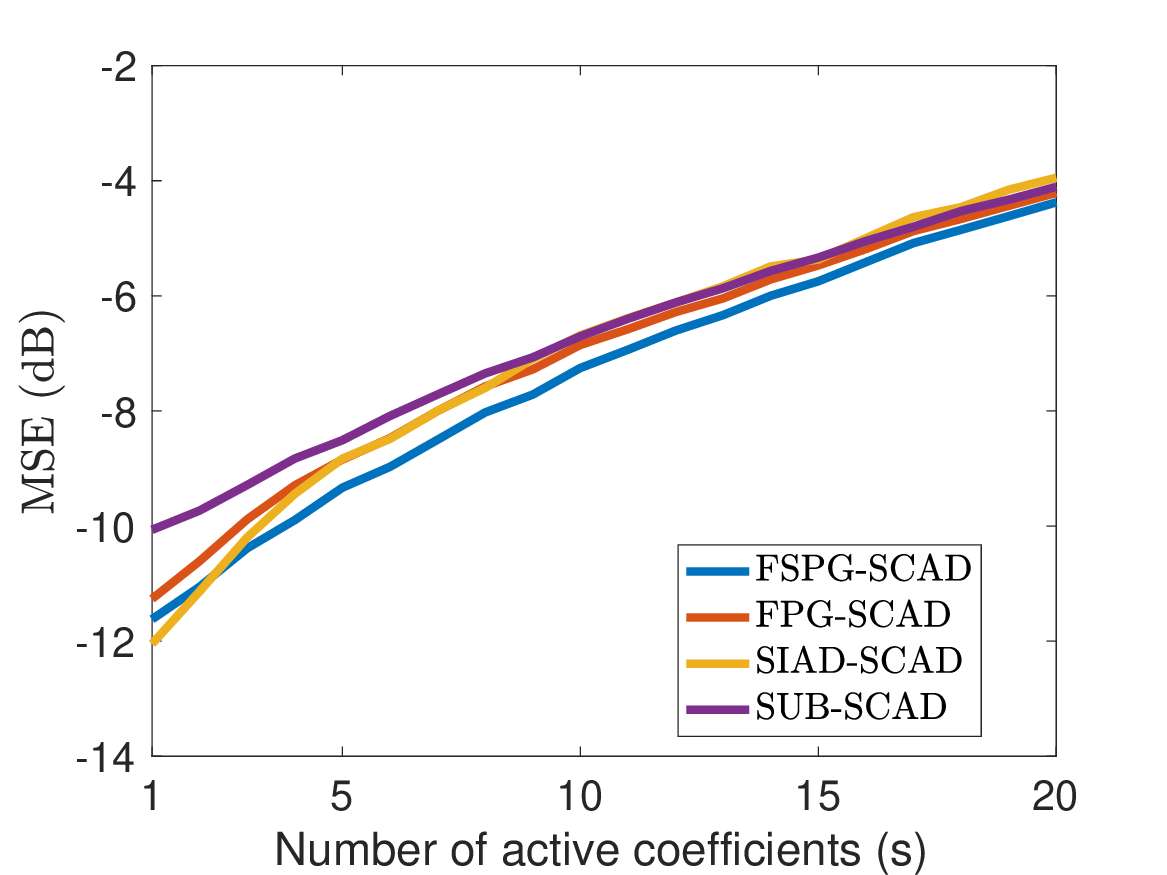}}
        \caption{SCAD ($\tau=0.55$)} 
     \end{subfigure}
     \centering
     \begin{subfigure}[b]{0.245\textwidth}
         \centering  \adjustbox{valign=t}{\includegraphics[width=49mm, height=37.5mm]{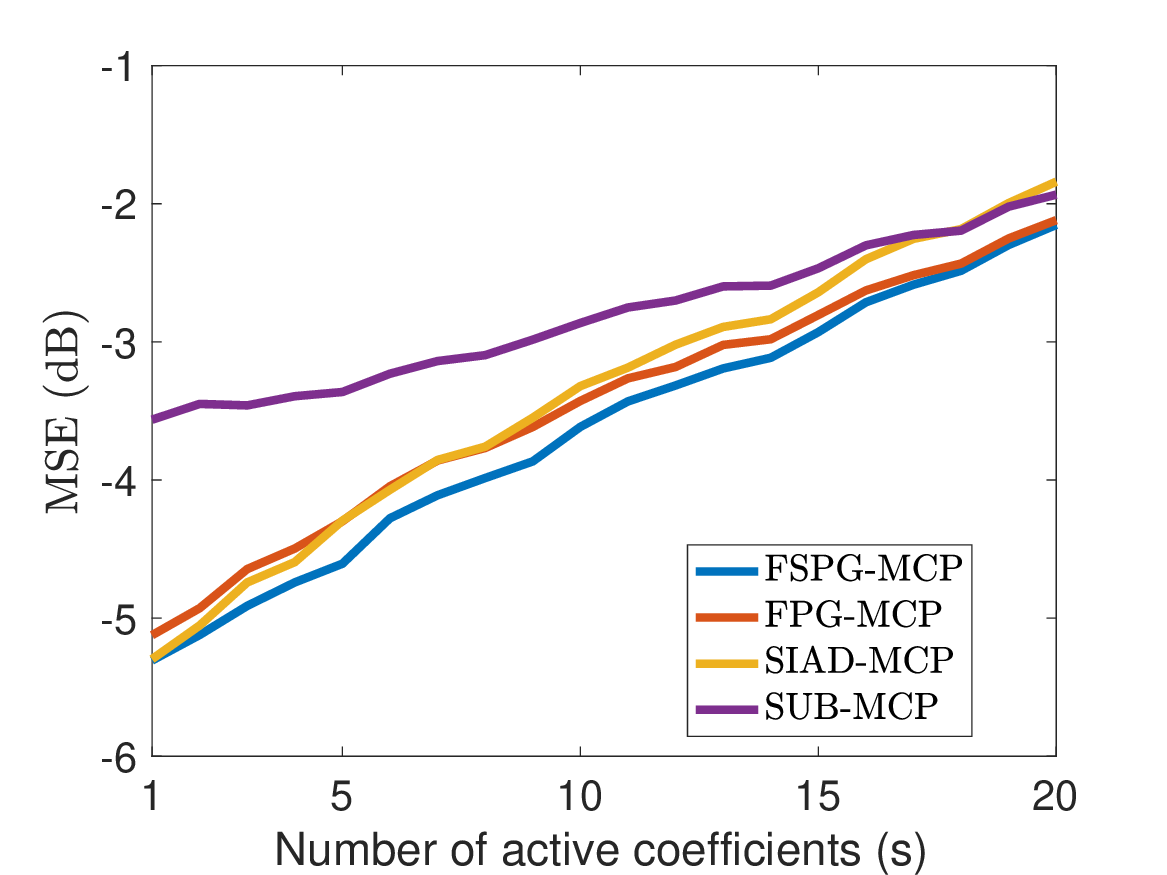}  \vspace{-1mm} }
      \caption{MCP ($\tau=0.7$)}   
     \end{subfigure}
     \hfill
     \begin{subfigure}[b]{0.245\textwidth}
         \centering  \adjustbox{valign=t}{\includegraphics[width=49mm, height=37.5mm]{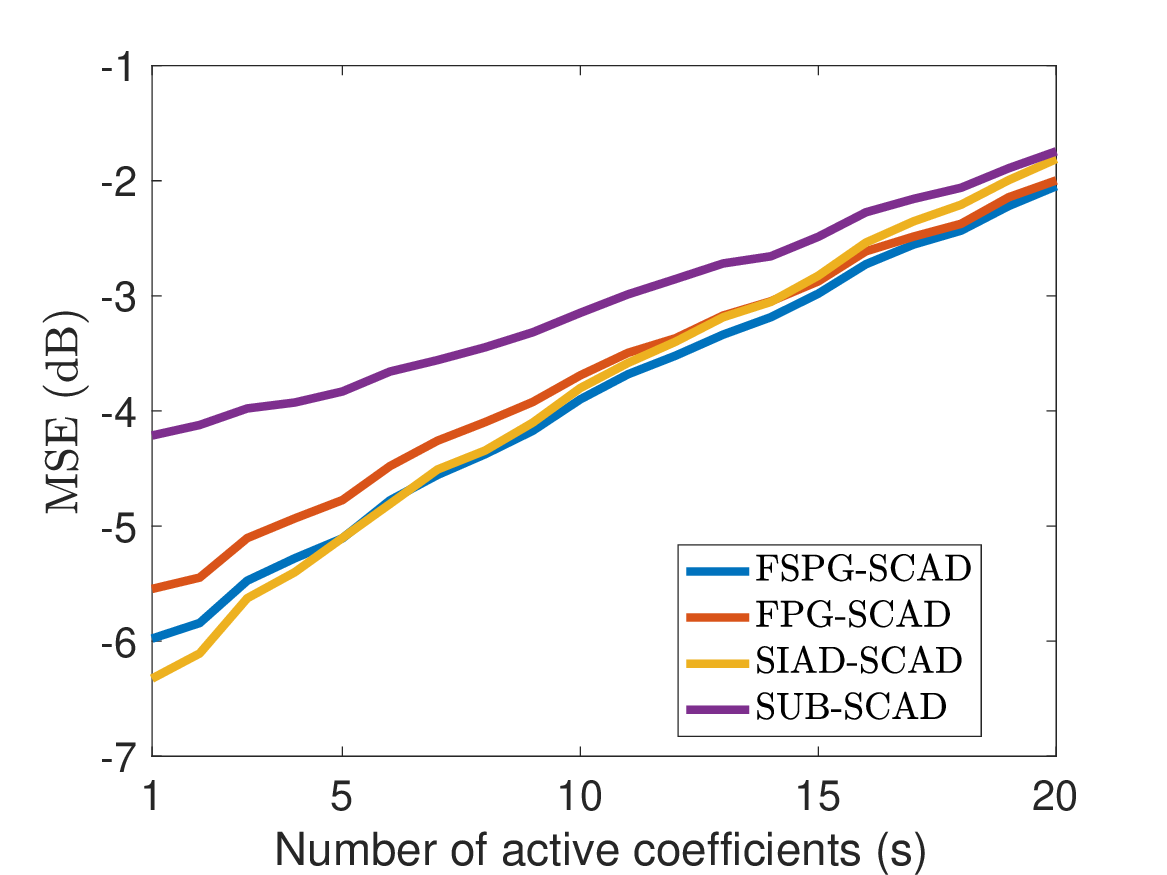}}
        \caption{SCAD ($\tau=0.7$)} 
     \end{subfigure}
     \centering
   \caption{MSE versus the number of active coefficients $s$ in model parameter $\boldsymbol{\beta}_{\tau} \in \mathbb{R}^P$.}
   \label{fig3}
\end{figure*}

\begin{figure*}[ht]
     \centering
     \begin{subfigure}[b]{0.245\textwidth}
         \centering  \adjustbox{valign=t}{\includegraphics[width=49mm, height=37.5mm]{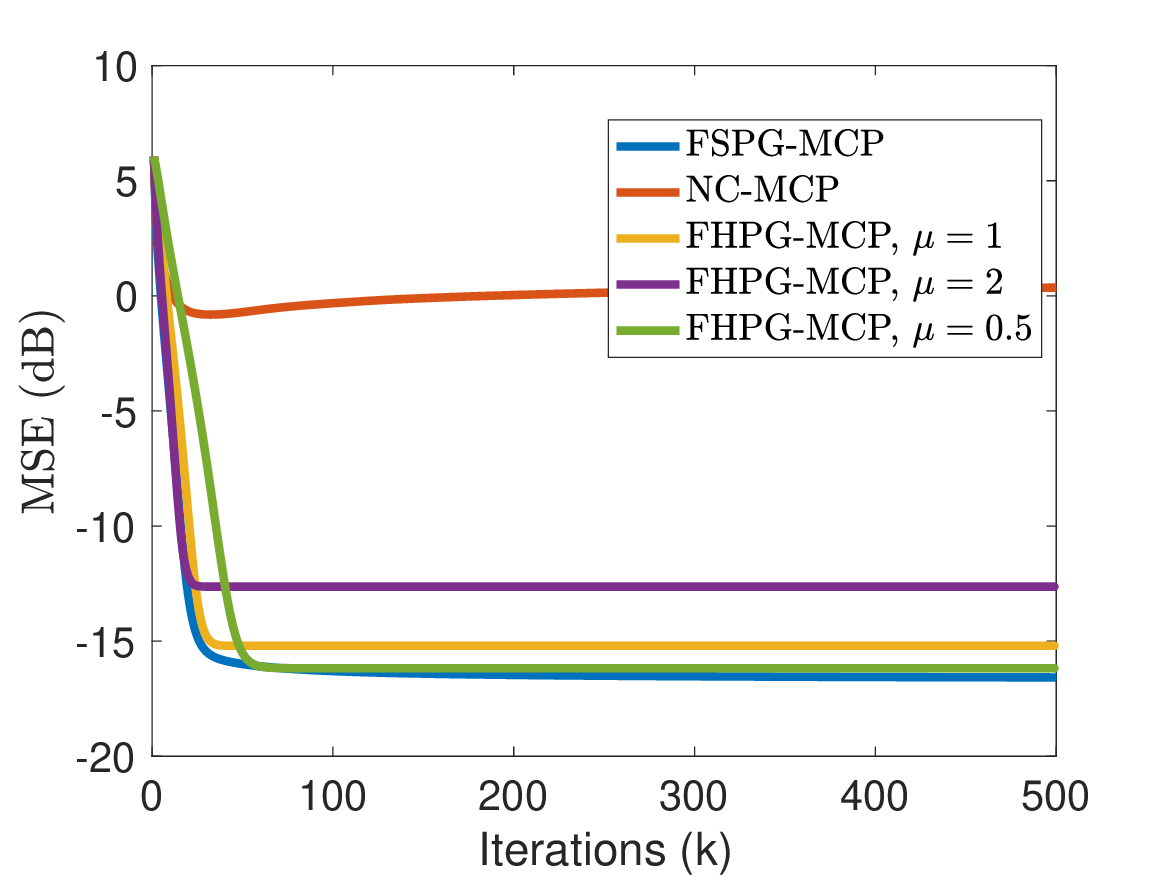}  \vspace{-1mm} }
      \caption{MCP ($\tau=0.55$)}   
     \end{subfigure}
     \hfill
     \begin{subfigure}[b]{0.245\textwidth}
         \centering  \adjustbox{valign=t}{\includegraphics[width=49mm, height=37.5mm]{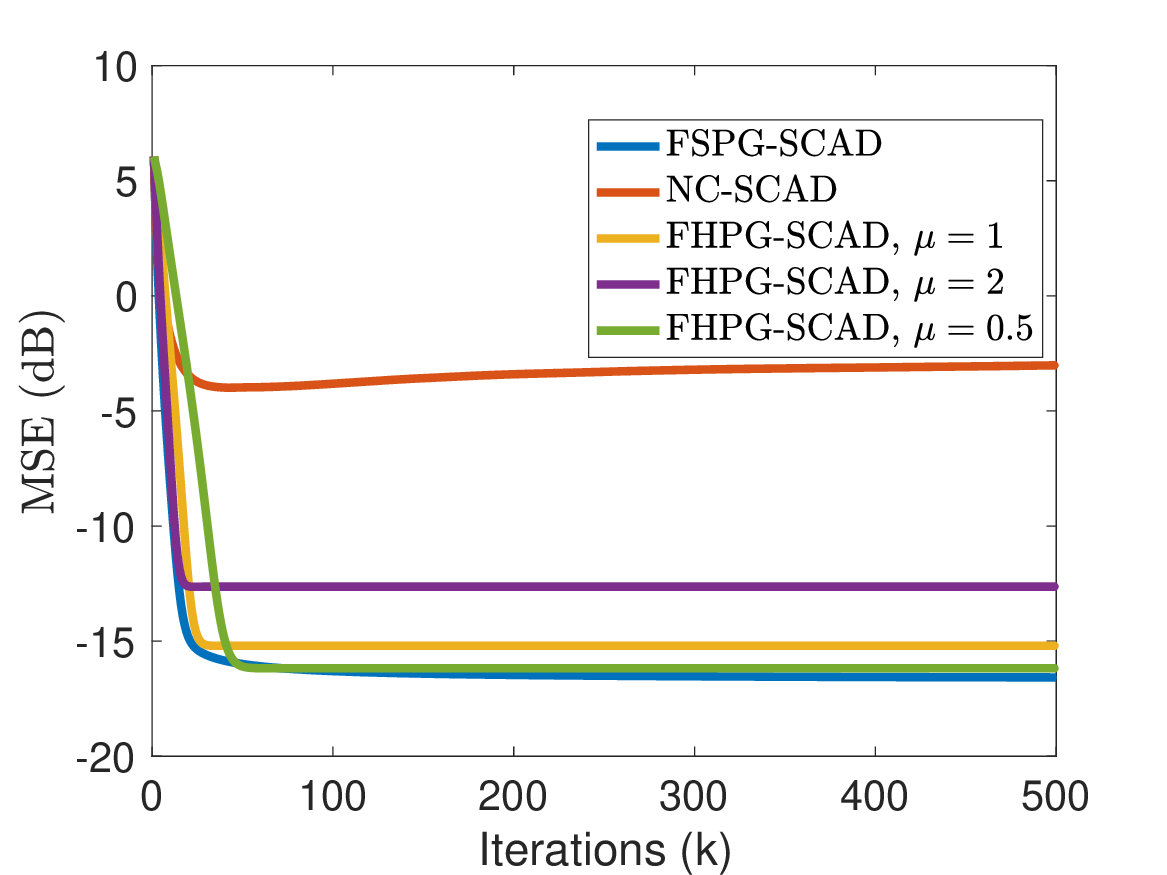}}
        \caption{SCAD ($\tau=0.55$)} 
     \end{subfigure}
    \centering
     \begin{subfigure}[b]{0.245\textwidth}
         \centering  \adjustbox{valign=t}{\includegraphics[width=49mm, height=37.5mm]{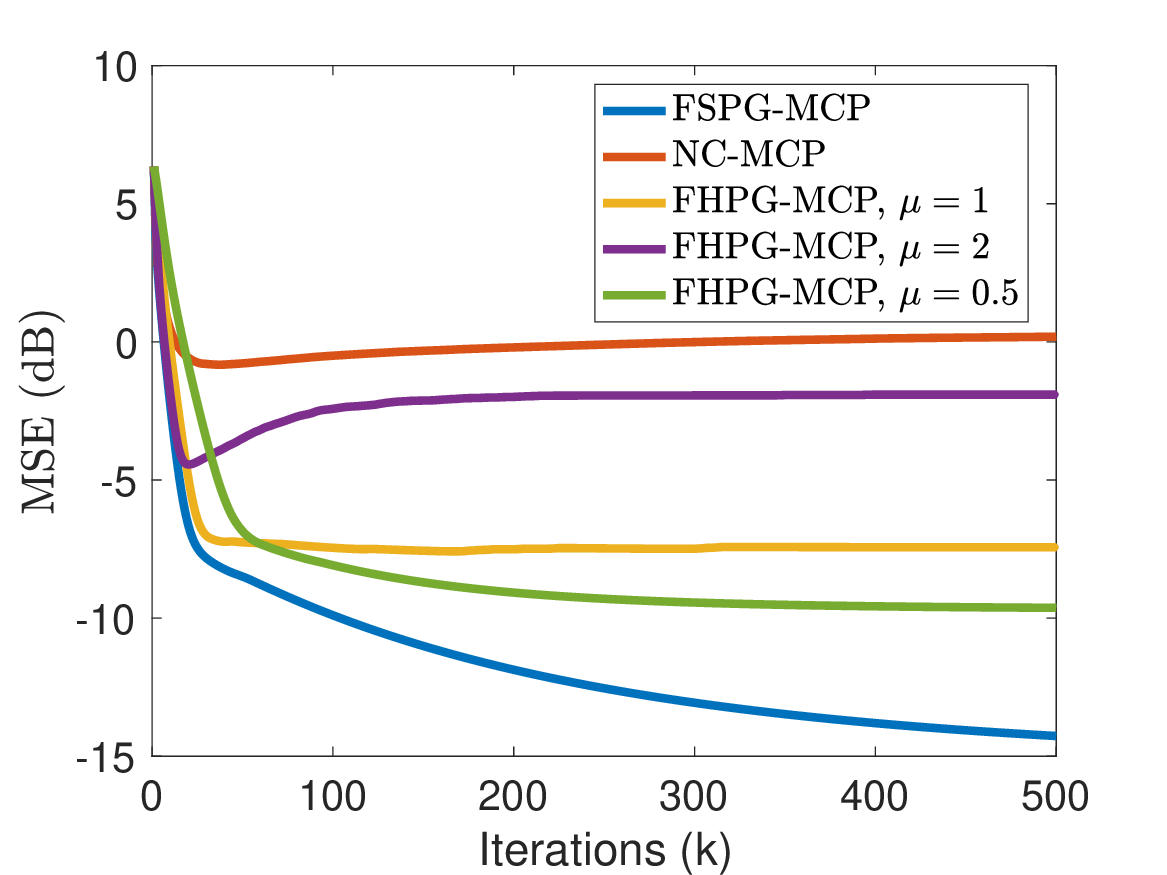}  \vspace{-1mm} }
      \caption{MCP ($\tau=0.7$)}   
     \end{subfigure}
     \hfill
     \begin{subfigure}[b]{0.245\textwidth}
         \centering  \adjustbox{valign=t}{\includegraphics[width=49mm, height=37.5mm]{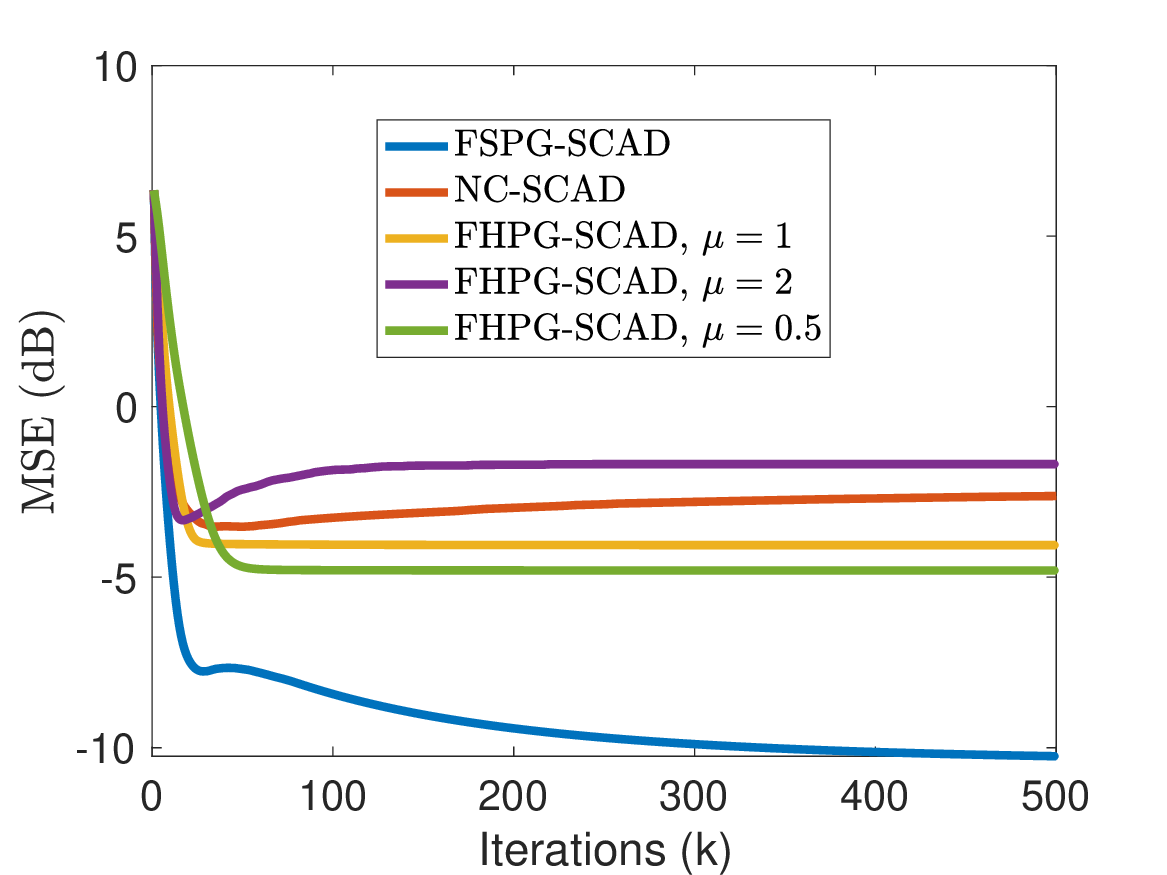}}
        \caption{SCAD ($\tau=0.7$)} 
     \end{subfigure}
     \centering
   \caption{MSE versus iterations for FSPG, FHPG, and non-cooperation scenario.}
   \label{fig4}
\end{figure*}

\begin{figure*}[ht]
     \centering
     \begin{subfigure}[b]{0.245\textwidth}
         \centering  \adjustbox{valign=t}{\includegraphics[width=49mm, height=37.5mm]{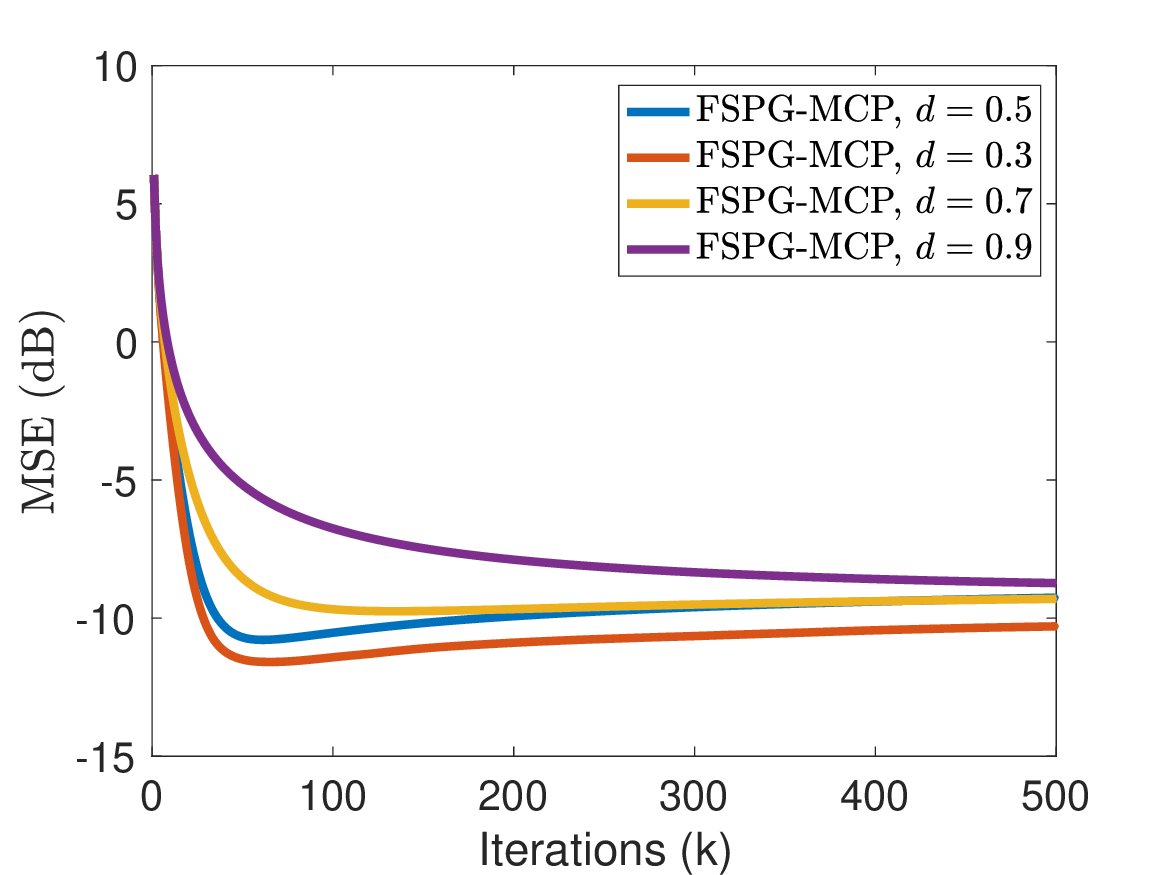}  \vspace{-1mm} }
      \caption{MCP ($\tau=0.55$)}   
     \end{subfigure}
     \hfill
     \begin{subfigure}[b]{0.245\textwidth}
         \centering  \adjustbox{valign=t}{\includegraphics[width=49mm, height=37.5mm]{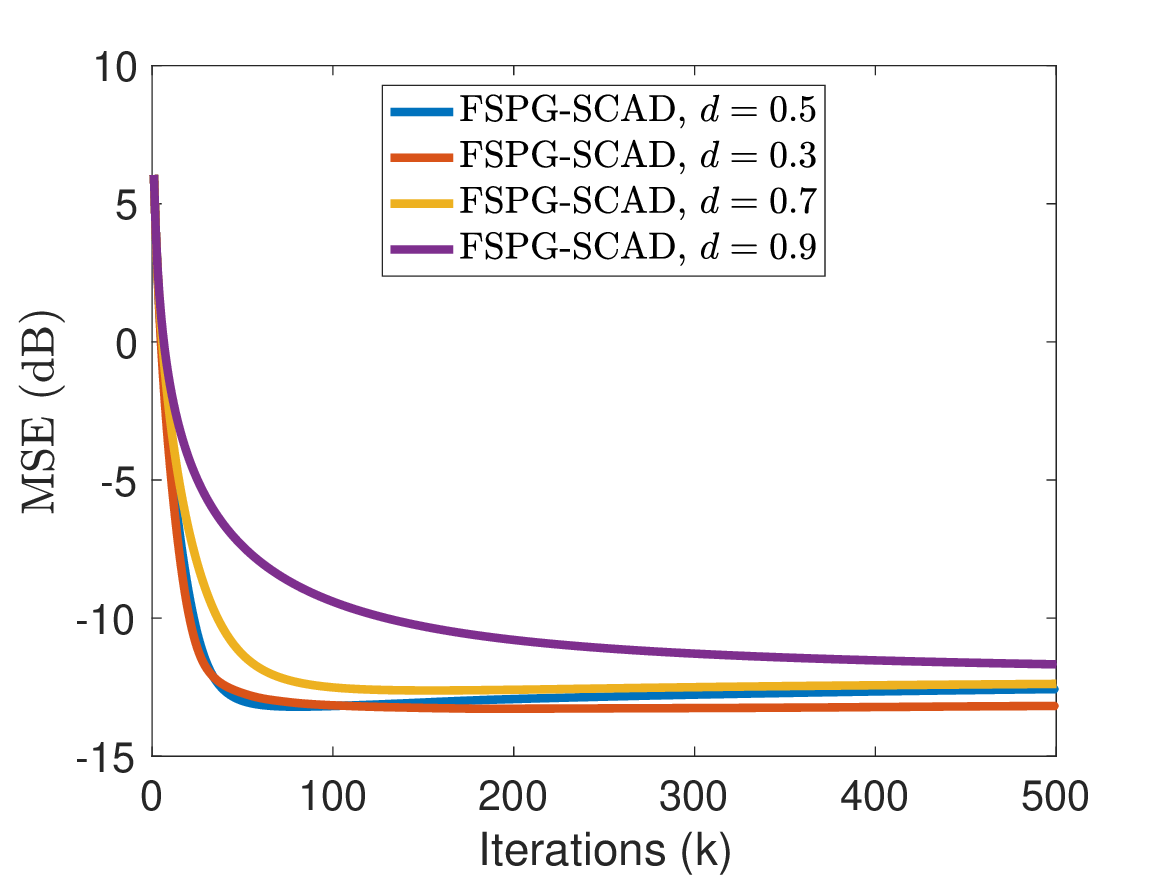}}
        \caption{SCAD ($\tau=0.55$)} 
     \end{subfigure}
    \centering
     \begin{subfigure}[b]{0.245\textwidth}
         \centering  \adjustbox{valign=t}{\includegraphics[width=49mm, height=37.5mm]{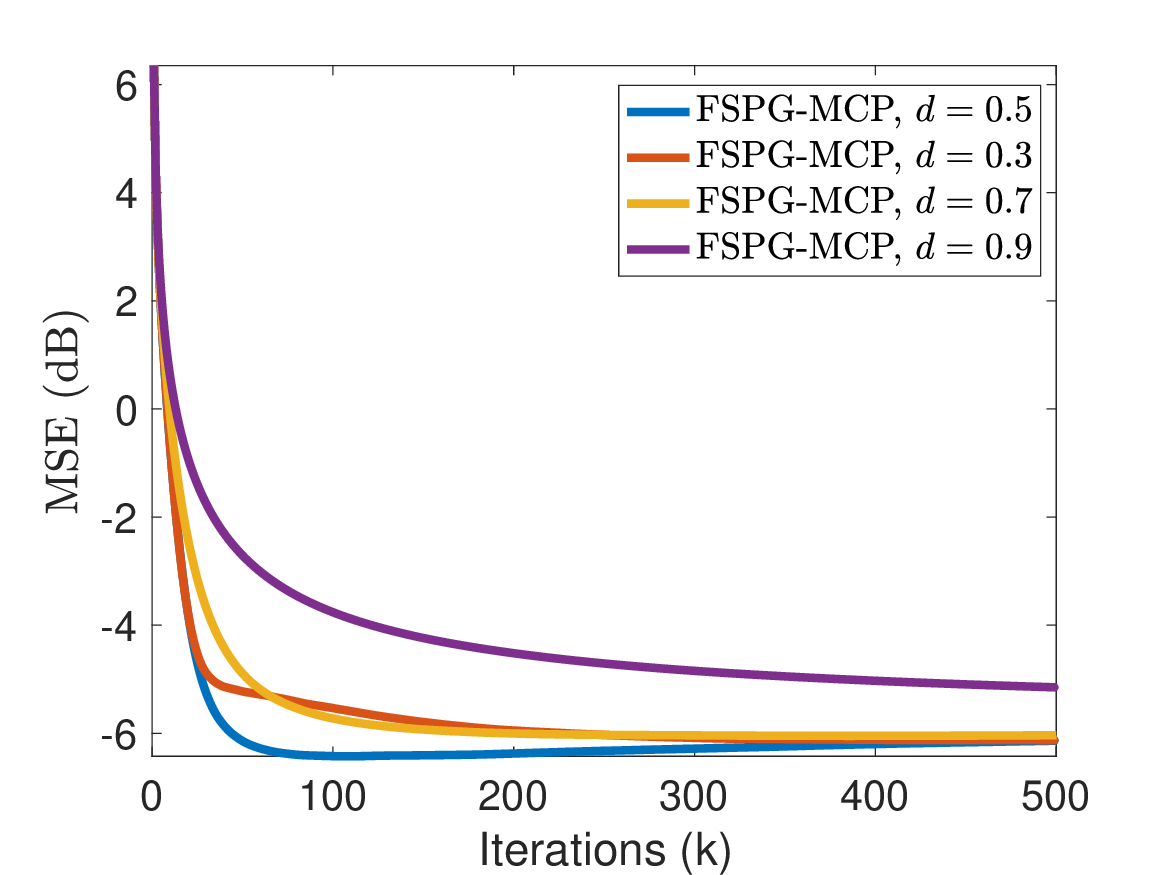}  \vspace{-1mm} }
      \caption{MCP ($\tau=0.7$)}   
     \end{subfigure}
     \hfill
     \begin{subfigure}[b]{0.245\textwidth}
         \centering  \adjustbox{valign=t}{\includegraphics[width=49mm, height=37.5mm]{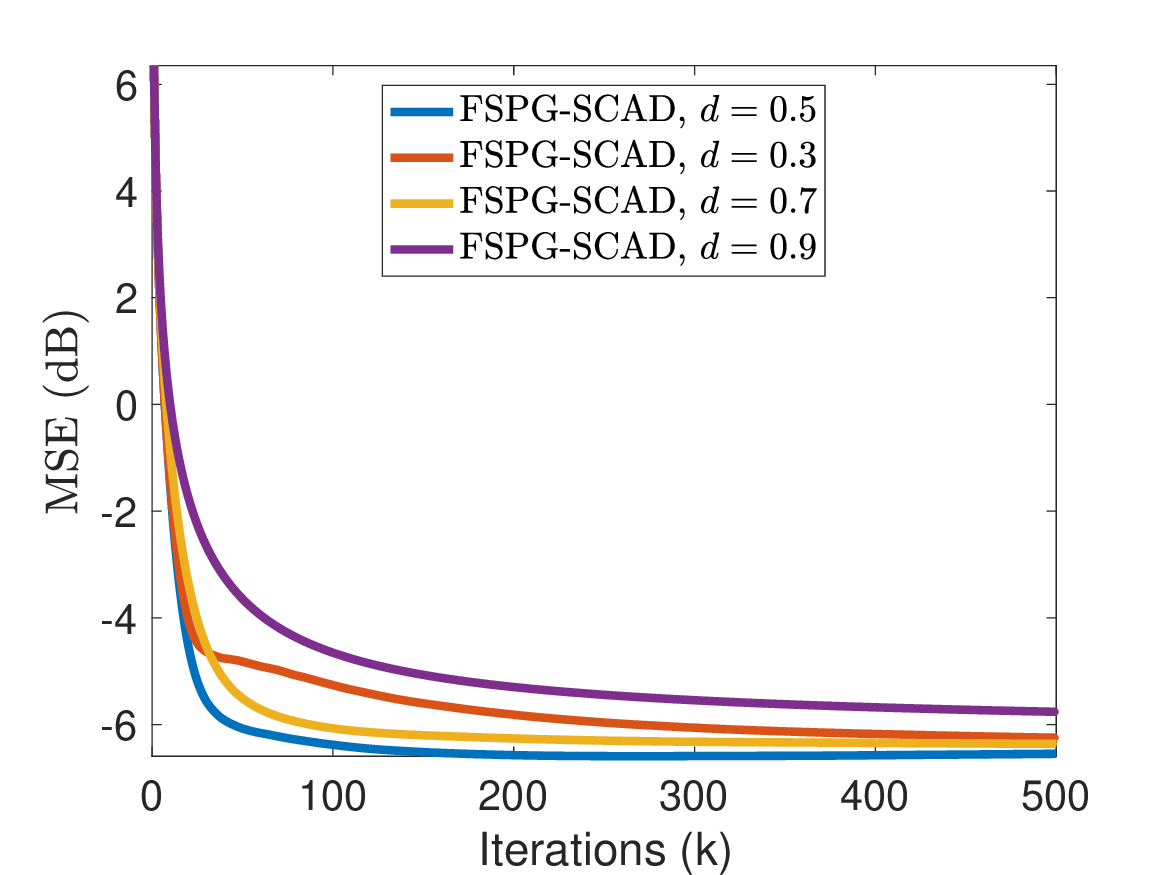}}
        \caption{SCAD ($\tau=0.7$)} 
     \end{subfigure}
     \centering
   \caption{MSE versus iterations for different $d$.}
   \label{fig5}
\end{figure*}

\begin{theorem}[Global Convergence]\label{theorem3}
Suppose there exist a constant $K$ such that for every $k \geq K$, the step size satifies $\sigma^{(k+1)} \geq n \rho$, where $\rho$ is the weak convexity parameter of $P_{\lambda,\gamma} (\cdot)$. Let the update rules for $\sigma$ and $\mu$ be given by $\sigma^{(k+1)} = c(k+1)^d$ and $\mu^{(k+1)} = \frac{\beta}{(k+1)^d}$ respectively, with $d \in (0,1)$, $c > 0$, $\beta \geq 0$, and $\beta c \geq \frac{\lambda_{\max}\left(\bar{\mathbf{X}}^\top\bar{\mathbf{X}}\right)}{2}$. Then, Algorithm \ref{alg:1} will converge to a stationary point $\mathbf{w}^{*}$ that satisfies the optimality condition:
\begin{equation}\label{eq38}
     \mathbf{0} \in \sum_{l=1}^{L} \partial {g}_l\left(\mathbf{w}^{*}\right) + n \partial P_{\lambda,\gamma}\left(\mathbf{w}^{*}\right).
\end{equation}
\end{theorem}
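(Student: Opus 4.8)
The plan is to assemble the four ingredients that \cite[Theorem 2.9]{attouch2013convergence} requires---lower boundedness, sufficient descent, subgradient bound, and a limiting-subdifferential statement---and then pass to the limit $\mu \to 0$ to convert the approximate optimality condition into the exact one in \eqref{eq38}. First I would invoke Lemma \ref{lem2} for lower boundedness of $\Phi_\sigma$, Theorem \ref{theo1} for the sufficient-descent estimate $\|\mathbf{w}^{(k+1)}-\mathbf{w}^{(k)}\|_2^2 \in o(k^{-1-d})$, and Theorem \ref{theo2} for the subgradient bound $\|\boldsymbol{\kappa}^{(k+1)}\| \in o(k^{-\frac12+\frac d2}) \to 0$, where $\boldsymbol{\kappa}^{(k+1)} \in \sum_{l=1}^{L}\nabla\tilde{g}_l(\mathbf{w}^{(k+1)},\mu^{(k+2)}) + n\,\partial P_{\lambda,\gamma}(\mathbf{w}^{(k+1)})$. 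From the descent chain one gets $\sum_{k\geq K}\eta^{(k+1)}\|\mathbf{w}^{(k+1)}-\mathbf{w}^{(k)}\|_2^2 < \infty$, and since the iterates remain in a bounded set (the sublevel set of $\Phi_\sigma$ together with coercivity of the penalty-plus-proximal structure), there is a convergent subsequence $\mathbf{w}^{(k_j)} \to \mathbf{w}^{*}$.

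Next I would identify the limit. Along the subsequence, $\mu^{(k_j)} \to 0$, so by the convergence and gradient-consistency properties of the smoothing approximation (Definition 1, items 2 and 4) we have $\tilde{g}_l(\mathbf{w}^{(k_j)},\mu^{(k_j+2)}) \to g_l(\mathbf{w}^{*})$ and any accumulation point of $\nabla \tilde{g}_l(\mathbf{w}^{(k_j)},\mu^{(k_j+2)})$ lies in $\partial g_l(\mathbf{w}^{*})$. Since $\|\boldsymbol{\kappa}^{(k+1)}\|\to 0$ and the $\nabla\tilde g_l$ terms are uniformly bounded (the subdifferential of the $\ell_1$-type loss is bounded by $\|\bar{\mathbf X}\|$ and the linear term is constant), we can extract a further subsequence along which $\sum_l \nabla\tilde{g}_l(\mathbf{w}^{(k_j+1)},\mu^{(k_j+2)}) \to \boldsymbol{\xi} \in \sum_l \partial g_l(\mathbf{w}^{*})$. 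Then $n\,\partial P_{\lambda,\gamma}(\mathbf{w}^{(k_j+1)}) \ni \boldsymbol{\kappa}^{(k_j+1)} - \sum_l \nabla\tilde{g}_l(\mathbf{w}^{(k_j+1)},\mu^{(k_j+2)}) \to -\boldsymbol{\xi}$, and by outer semicontinuity (closedness of the graph) of the limiting subdifferential of the weakly convex function $P_{\lambda,\gamma}$, together with $\mathbf{w}^{(k_j+1)}\to\mathbf{w}^*$ (which follows since $\|\mathbf{w}^{(k_j+1)}-\mathbf{w}^{(k_j)}\|\to 0$), we conclude $-\boldsymbol{\xi} \in n\,\partial P_{\lambda,\gamma}(\mathbf{w}^{*})$, i.e. $\mathbf{0} \in \sum_{l=1}^{L}\partial g_l(\mathbf{w}^{*}) + n\,\partial P_{\lambda,\gamma}(\mathbf{w}^{*})$, which is \eqref{eq38}.

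Finally, to upgrade subsequential convergence to genuine convergence of the whole sequence, I would invoke the Kurdyka--\L ojasiewicz machinery as in \cite{attouch2013convergence}: $\Phi_\sigma$ restricted to the relevant variables is a semialgebraic (hence KL) function---the check-loss smoothing, the quadratic coupling, and the MCP/SCAD penalties are all piecewise polynomial---so the descent plus subgradient bounds force the trajectory to have finite length and thus converge. The main obstacle I anticipate is technical bookkeeping around the mismatched smoothing indices (the gradient in $\boldsymbol{\kappa}^{(k+1)}$ is evaluated at $\mu^{(k+2)}$ while the descent lemma uses $\mu^{(k+1)}$) and confirming that the time-varying $\sigma^{(k)}$ and $\mu^{(k)}$ do not break the standard KL argument, which was developed for a fixed potential; one handles this by noting $\sigma^{(k)}$ is eventually monotone and $\mu^{(k)}\to 0$ at a controlled rate, so the perturbation terms are summable and the KL inequality can be applied to the limiting (non-smooth, $\mu=0$) objective.
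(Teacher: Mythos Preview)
Your outline is sound but takes a genuinely different route from the paper. The paper's proof is much shorter: it invokes Theorem~\ref{theo1} to assert that the rate $\|\mathbf{w}^{(k+1)}-\mathbf{w}^{(k)}\|_2^2 \in o(k^{-1-d})$ makes $\{\mathbf{w}^{(k)}\}$ Cauchy and hence convergent to some $\mathbf{w}^*$; it then uses Theorem~\ref{theo2} for $\|\boldsymbol{\kappa}^{(k+1)}\|\to 0$ together with the gradient-consistency clause of the smoothing approximation (Definition~1, item~4, $\lim_{\mu\to 0}\nabla\tilde g_l(\cdot,\mu)\subseteq\partial g_l(\cdot)$) to pass directly to \eqref{eq38}. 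No subsequence extraction, no explicit boundedness of iterates, and no KL property are used.

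Your version instead follows the full Attouch--Bolte--Svaiter template: bounded sublevel set $\Rightarrow$ convergent subsequence, limit passage via outer semicontinuity of both $\partial g_l$ and $\partial P_{\lambda,\gamma}$, then a KL/semialgebraicity argument to upgrade subsequential to whole-sequence convergence. This is more laborious but structurally more careful, since the paper's Cauchy claim is not immediate from the stated rate (increments of order $o(k^{-(1+d)/2})$ with $d<1$ are not summable on their face). The price you pay is exactly the obstacle you already flag: the KL framework in \cite{attouch2013convergence} is written for a fixed potential, while here $\sigma^{(k)}$ and $\mu^{(k)}$ vary, so you must treat the smoothing discrepancy as a summable perturbation of the limiting non-smooth objective. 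The paper simply sidesteps this by the direct Cauchy assertion.
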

\begin{proof}
By Theorem \ref{theo1}, the iterative sequence $\{\mathbf{w}^{(k)}\}$ generated by Algorithm \ref{alg:1} exhibits a sufficient descent property. Specifically, the gap between consecutive iterations decreases at a rate of $o\left(k^{-1-d}\right)$, which implies that the sequence is Cauchy and thus converges to a limit point $\mathbf{w}^{*}$.

According to Theorem \ref{theo2}, there exists a sequence $\boldsymbol{\kappa}^{(k+1)}$ related to the gradients of the smoothed approximations $\tilde{g}_l$ and the sub-differential of $P_{\lambda,\gamma}$, showcasing a reduction in the norm of $\boldsymbol{\kappa}^{(k+1)}$ at the rate of $o\left(k^{-\frac{1}{2}+\frac{d}{2}}\right)$. This implies that $\boldsymbol{\kappa}^{(k+1)}$ approaches $\mathbf{0}$ as $k\rightarrow \infty$, indicating that the sequence approaches  a stationary point.

Furthermore, the update formula for $\mu^{(k+1)}$ coupled with the limiting behavior $\lim_{\mu \rightarrow 0} \nabla \tilde{g_l}(\cdot,\mu) \in \partial g_l (\cdot)$ ensure that the gradients of $\tilde{g_l}$ converge to the sub-gradients of the original $g_l$ functions at $\mathbf{w}^{*}$ as $k\rightarrow \infty$ (thus, $\mu^{(k)} \to 0$). This convergence, along with the diminishing norm of $\boldsymbol{\kappa}^{(k+1)}$, confirms that $\mathbf{w}^{*}$ satisfies the optimality condition given by \eqref{eq38}. Therefore, Algorithm \ref{alg:1} converges to a stationary point of the original problem, as required.
\end{proof}

\begin{figure*}[ht]
     \centering
     \begin{subfigure}[b]{0.245\textwidth}
         \centering  \adjustbox{valign=t}{\includegraphics[width=49mm, height=37.5mm]{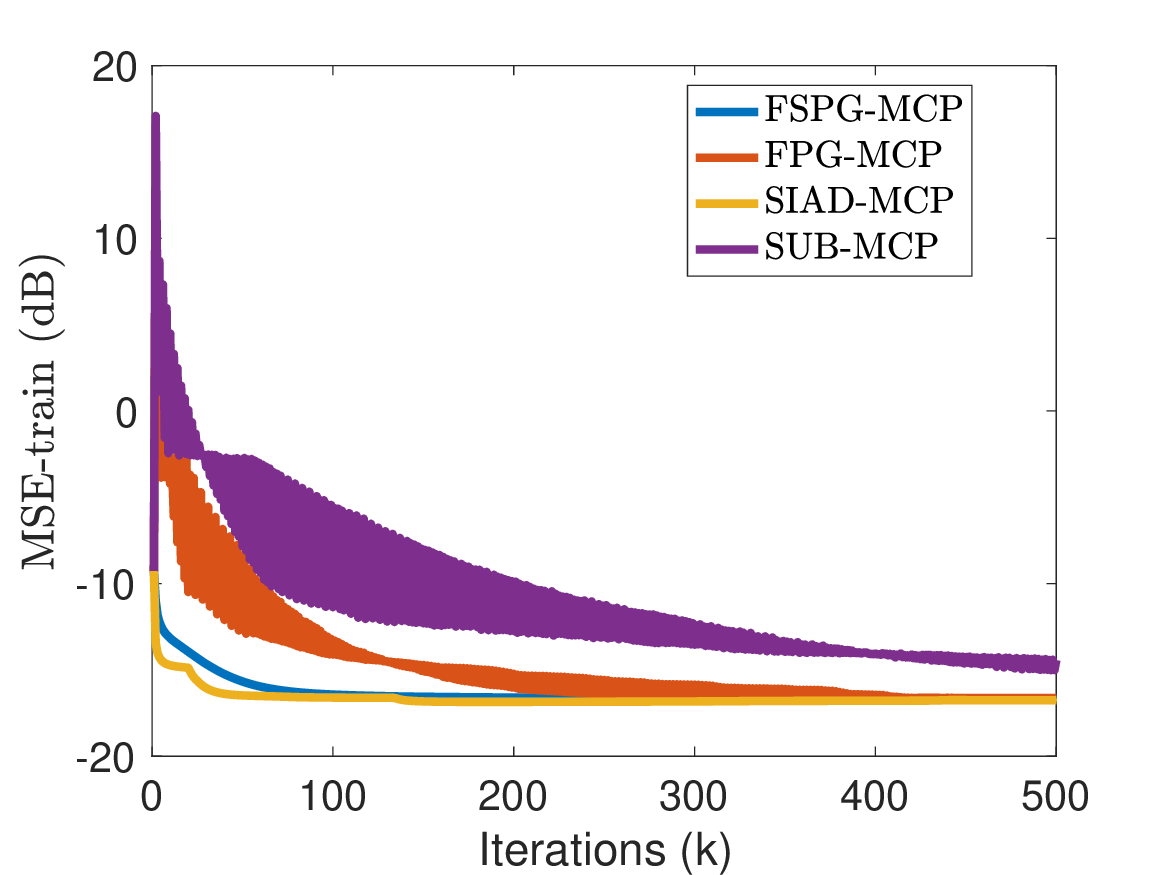}  \vspace{-1mm} }
      \caption{MCP (Train)}   
     \end{subfigure}
     \hfill
     \begin{subfigure}[b]{0.245\textwidth}
         \centering  \adjustbox{valign=t}{\includegraphics[width=49mm, height=37.5mm]{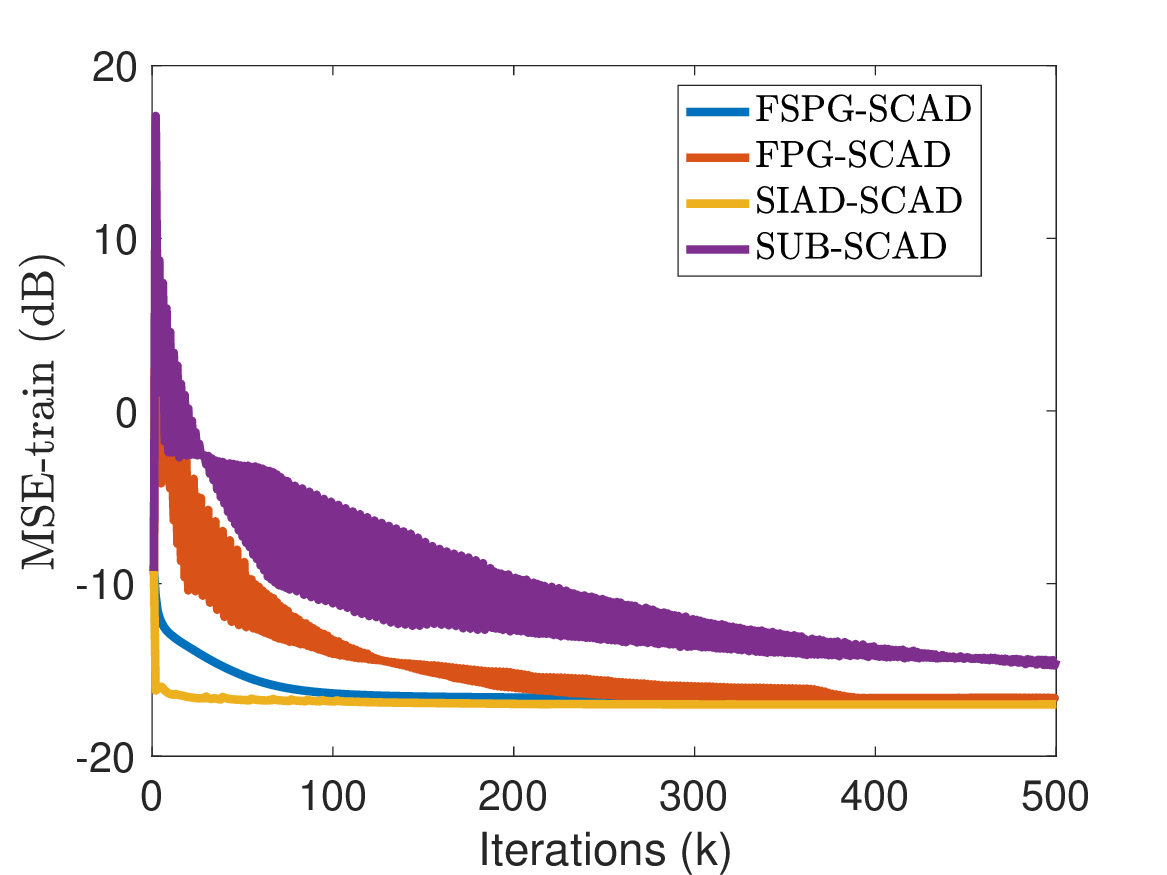}}
        \caption{SCAD (Train)} 
     \end{subfigure}
    \centering
     \begin{subfigure}[b]{0.245\textwidth}
         \centering  \adjustbox{valign=t}{\includegraphics[width=49mm, height=37.5mm]{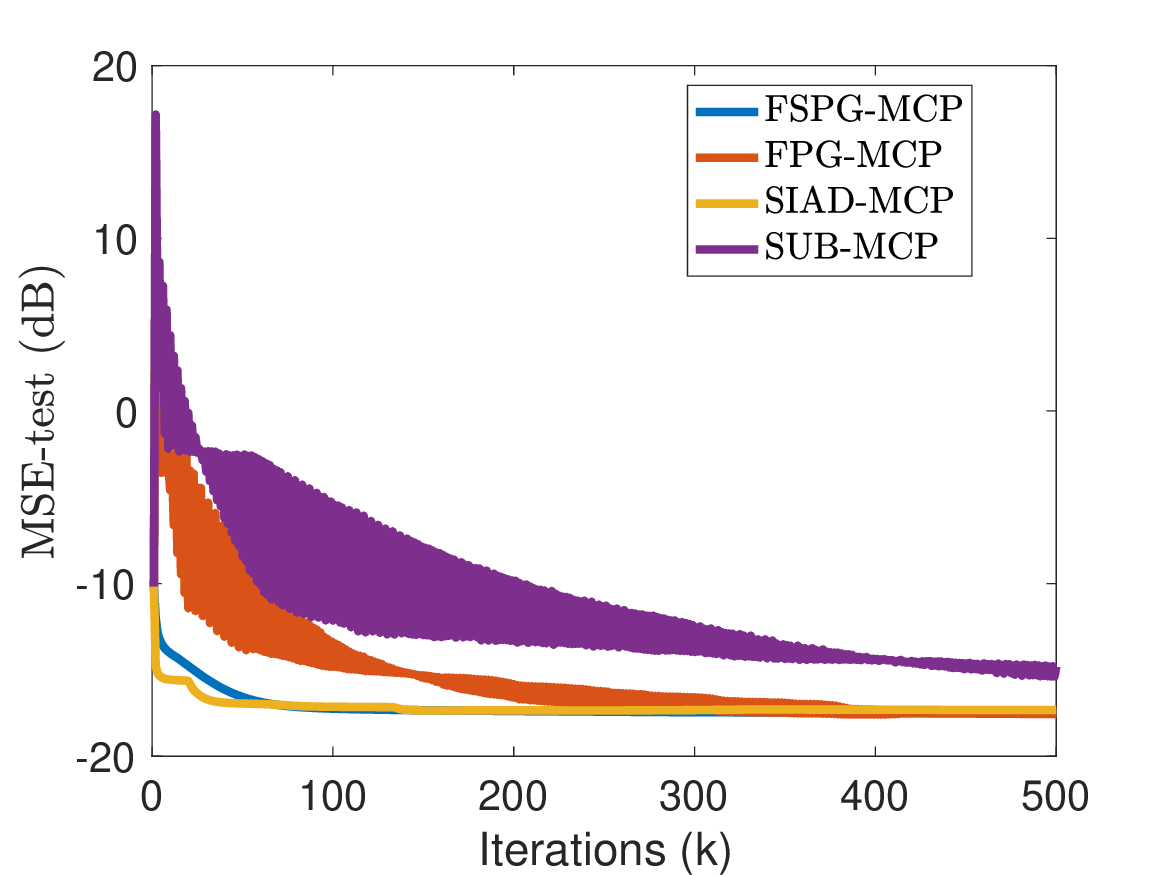}  \vspace{-1mm} }
      \caption{MCP (Test)}   
     \end{subfigure}
     \hfill
     \begin{subfigure}[b]{0.245\textwidth}
         \centering  \adjustbox{valign=t}{\includegraphics[width=49mm, height=37.5mm]{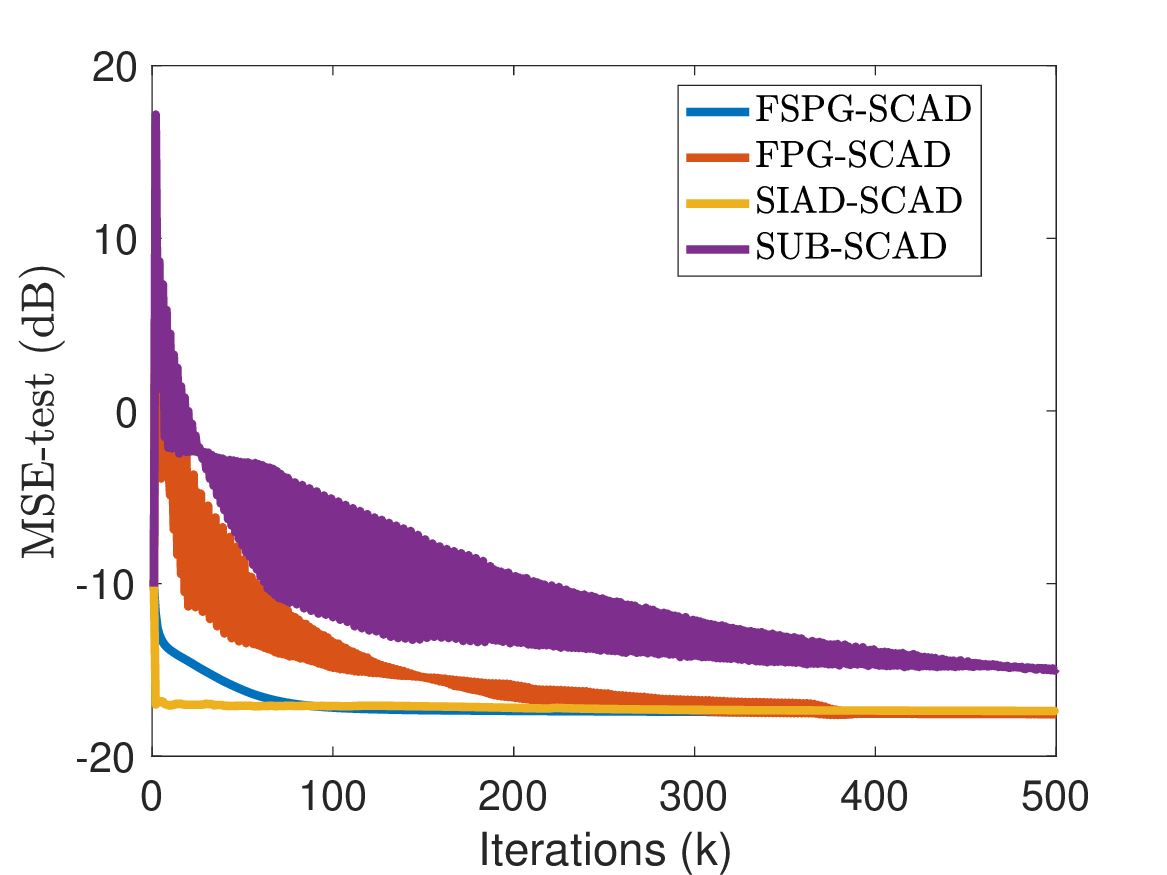}}
        \caption{SCAD (Test)} 
     \end{subfigure}
     \centering
   \caption{}
   \label{fig6}
\end{figure*}

\section{Simulation Results}\label{sec5}
In this section, we present an extensive simulation study to evaluate the efficacy of our federated smoothing proximal gradient (FSPG) algorithm in the context of sparse quantile regression. We compare the FSPG with leading contemporary approaches, including the proximal gradient descent with increasing penalty (PGD) \cite{gao2024stochastic} and the sub-gradient method (SUB) \cite{mirzaeifard2023distributed} tailored for federated settings, and the smoothing time-increasing penalty ADMM (SIAD) \cite{mirzaeifard2023smoothing} as a benchmark for centralized scenarios. Our evaluation criteria include convergence rate, efficiency in minimizing the mean square error (MSE) across both synthetic and real-world datasets, as well as accuracy in recognizing active and non-active coefficients. 

Additionally, we compare the performance and convergence dynamics of the FSPG algorithm against the Federated Huber Loss Proximal Gradient (FHPG) –- a variant of FSPG with static $\mu$ and $\sigma_{\lambda}$ parameters –- focusing on the MSE performance. This examination aims to highlight the nuanced advantages and potential trade-offs inherent in using the FSPG approach to quantile regression.

\subsection{Simulation Setup}
We evaluate the performance of the FSPG algorithm across six distinct simulation scenarios. In these evaluations, the penalty parameters are set to $\gamma_{\text{SCAD}} = 3.1$, $\gamma_{\text{MCP}} = 2.4$, $\beta = 4$, and $c = \frac{\lambda_{\max}\left(\bar{\mathbf{X}}^\top\bar{\mathbf{X}}\right)}{8}$. 

For all synthetic data scenarios, we fix $\lambda=0.055$, and the results are averaged over $1000$ independent experiments. The primary performance metric for synthetic data is the MSE, defined as $\text{E}\left\|\mathbf{\hat{w}} - \mathbf{w}\right\|_2^2$. We assess algorithm performance based on convergence speed, MSE, and accuracy in identifying active and inactive coefficients. 

For real-world datasets, we utilize a $\lambda=0.01$ and evaluate MSE performance for both training and testing datasets, calculated as $\frac{1}{n} \sum_{i=1}^{n} \left\|y_i - \bar{\mathbf{x}}_i^{\top}\mathbf{w}\right\|_2^2$.

The synthetic data generation process is as follows: For each observation vector $\left(\tilde{x}^{(l)}_{i,1}, \tilde{x}^{(l)}_{i,2}, \ldots, \tilde{x}^{(l)}_{i,P}\right)^{\top}$, drawn i.i.d samples from a multivariate normal distribution $\mathcal{N}\left(\mathbf{0}, \boldsymbol{\Sigma}\right)$ for all $i \in \{1, \ldots, M_l\}$ and $l \in \{1, \ldots, L\}$, where the covariance between any two elements 
$p,q$ is defined by ${\Sigma}{p,q} = 0.5^{|p-q|}$. We apply a transformation $x^{(l)}_{i,1} = \Phi\mathopen{}\left(\tilde{x}^{(l)}_{i,1}\right)\mathclose{}$ to the first component of each observation vector, where $\Phi(\cdot)$ denotes the cumulative distribution function of the standard normal distribution. The remaining components $x^{(l)}_{i,p} = \tilde{x}_{i,p}$ for $p = 2, 3, \ldots, P$ are kept unaltered. Furthermore, we augment each observation matrix $\mathbf{X}^{(l)}$ for all $l \in \{1, \ldots, L\}$ with a column of ones to form $\tilde{\mathbf{X}}^{(l)} = [(\mathbf{X}^{(l)})^\top, \mathbf{1}]^\top$.

For scenarios one, two, four, and five the measurement model for generating data is given by:

\begin{equation*}
    y^{(l)}_i = \sum_{p=1}^{P} \xi_p x^{(l)}_{i,p} + x^{(l)}_{i,6} + x^{(l)}_{i,12} + x^{(l)}_{i,15} + x^{(l)}_{i,20} + 0.7 \epsilon^{(l)}_i x^{(l)}_{i,1} + \nu^{(l)}_i,
\end{equation*}
where $\epsilon^{(l)}_i \distas{\text{i.i.d}}  \mathcal{N}(0,1)$, $\xi_p \distas{\text{i.i.d}} \mathcal{N}\left(0,10^{-6}\right)$, and $\nu^{(l)}_i \distas{\text{i.i.d}} \mathcal{N}\left(0,(\Upsilon^{(l)})^{2}\right)$, in which $\Upsilon^{(l)})=\left|\upsilon^{(l)}\right|$ where each $\upsilon^{(l)} \distas{\text{i.i.d}} \mathcal{N}(0,1)$. Under these settings, the model is characterized as a compressible system \cite{lima2014sparsity}. The $\tau$-th conditional quantile linear function can be formulated as $\sum_{p=1}^{P} \xi_p x_{i,p} + x_{i,6} + x_{i,12} + x_{i,15} + x_{i,20} + 0.7 \Phi^{-1}(\tau) x_{i,1}+Q^{\nu}_{\tau}$ in which $Q^{\nu}_{\tau}$ can be obtained by:
\begin{equation}\label{eq:quantile_error}
   Q^{\nu}_{\tau}=\left\{ Q:\int_{-\infty}^{Q} \frac{1}{L}\sum_{l=1}^{L}\frac{1}{\Upsilon^{(l)}\sqrt{2\pi} } e^{-\frac{1}{2}\left(\frac{x}{\Upsilon^{(l)}}\right)^2}dx=\tau\right\}.
\end{equation}
We assume   $\mathopen{}\left(M, L, P\right)\mathclose{}=\mathopen{}\left(10,20,100\right)\mathclose{}$ for the first, second, and fifth scenarios, where $M_l=M$ for all $l \in \{1,\cdots, L\}$. For the fourth scenario to have more data in local data in each agent, we assume $\mathopen{}\left(M, L, P\right)\mathclose{}=\mathopen{}\left(100,10,100\right)\mathclose{}$, where $M_l=M$ for all $l \in \{1,\cdots, L\}$. In all of these scenarios, we examine the results for two different values of $\tau$ as  $0.55$ and $0.7$. 

For the third scenario, we adjust the measurement model for generating data as follows:
\begin{align*}
 y^{(l)}_i = \sum_{p=1}^{P} \xi_p x^{(l)}_{i,p} + \sum_{p \in \mathcal{M}} x^{(l)}_{i,p}  + 0.7 \epsilon^{(l)}_i x^{(l)}_{i,1} + \nu^{(l)}_i,
\end{align*}
with $\mathcal{M}\subseteq \{2,\ldots,P\}$, where $\epsilon^{(l)}_i \distas{\text{i.i.d}}  \mathcal{N}(0,1)$, $\xi_p \distas{\text{i.i.d}} \mathcal{N}\left(0,10^{-6}\right)$, and $\nu^{(l)}_i \distas{\text{i.i.d}} \mathcal{N}\left(0,(\Upsilon^{(l)})^{2}\right)$, in which $\Upsilon^{(l)})=|\upsilon^{(l)}|$ where each $\upsilon^{(l)} \distas{\text{i.i.d}} \mathcal{N}(0,1)$. In this scenario, the model also presents a compressible system \cite{lima2014sparsity}, but with a different structure compared to the other scenarios with synthetic data. The $\tau$-th conditional quantile linear function can be characterized by $\sum_{p=1}^{P} \xi_p x_{i,p} + \sum_{p \in \mathcal{M}} x_{i,p} + 0.7 \Phi^{-1}(\tau) x_{i,1}+Q^{\nu}_{\tau}$, where $Q^{\nu}_{\tau}$ can also be derived by \eqref{eq:quantile_error}. we take $\tau \in \{0.55,0.7\}$ and $\mathopen{}\left(M, L, P\right)\mathclose{}=\mathopen{}\left(10,10,20\right)\mathclose{}$, where $M_l=M$ for all $l \in \{1,\cdots, L\}$.

In the sixth scenario, we evaluate the efficacy of the FSPG algorithm on the Communities and Crime dataset, sourced from the UCI Machine Learning Repository \cite{misc_communities_and_crime_unnormalized_211}. We assume a linear relationship between demographic variables (comprising 99 features) and the metric ViolentCrimesPerPop, which denotes the total number of violent crimes per $100,000$ population. Our objective is to fit a linear model to these data and pinpoint demographic features that significantly impact the crime rate. The dataset is randomly partitioned across $10$ clients, with $80\%$ allocated to the training set and the remaining $20\%$ designated as the test set.
\subsection{Results}
In the first scenario, we assessed the algorithms based on their convergence speed and MSE. The learning curves, which plot MSE against the number of iterations for different values of \(\tau = \{0.55, 0.7\}\), are shown in Figure \ref{fig1}. We see that the FSPG algorithm consistently outperforms existing methods in reducing MSE, regardless of the chosen penalty function and the specific value of \(\tau\). Moreover, the FSPG algorithm demonstrates a faster convergence rate compared to the FPG and SUB algorithms and achieves convergence speeds comparable to those of the SIAD algorithm.

In the second scenario, we evaluated the algorithms based on their ability to accurately identify active and non-active coefficients. The accuracy metric is defined as the proportion of correctly identified active and non-active coefficients relative to the total number of coefficients. Figure \ref{fig2} shows the accuracy as a function of iteration number for each algorithm. We see that the FSPG algorithm matches the accuracy of the SIAD algorithm, demonstrating that FSPG performs comparably to state-of-the-art methods in a centralized setting. In contrast, the SUB algorithm demonstrates significantly inferior performance, primarily due to its inability to produce exact zeros, thus necessitating a post-hoc thresholding of coefficients. Meanwhile, the FPG algorithm also fails to achieve results on par with those of FSPG.

In the third scenario, we examined the robustness of the algorithms across varying sparsity levels. The performance of each algorithm was assessed by tracking the MSE after $10,000$ iterations as the number of active coefficients was incrementally increased from $1$ to $P$. Figure \ref{fig3} illustrates the relationship between MSE and the number of active coefficients for each algorithm. As depicted in Figure \ref{fig3}, the FSPG algorithm demonstrates consistent performance across the entire range of sparsity, from highly sparse to dense configurations. In comparison, other state-of-the-art algorithms generally show poorer performance at most sparsity levels.

In the fourth scenario, we evaluated the convergence properties of the FSPG algorithm and compared them with those of the FHPG algorithm and the non-cooperation quantile (NC) algorithm, where each client estimates the parameters independently. For FHPG, we tested three different smoothing parameters, $\mu = \{1, 2, 0.5\}$, and adjusted penalty parameters to ensure convergence. Figure \ref{fig4} displays the learning curves for these algorithms. As can be seen, despite each client having $100$ samples, the NC algorithm exhibits poor MSE performance. While FHPG successfully converges to a stationary point, the performance of the FSPG marginally surpasses that of FHPG, particularly with the tested smoothing parameters. Specifically, FSPG achieves the rapid convergence rates of FHPG when high smoothness parameters are used, and maintains the high accuracy similar to FHPG with low smoothness parameters.

In the fifth scenario, we explored the impact of the parameter \(d\) on the convergence behavior of the FSPG algorithm. This parameter governs the frequency of updating the parameters \(\mu\) and \(\sigma\). We tested values of \(d = \{0.3, 0.5, 0.7, 0.9\}\). Figure \ref{fig5} shows the learning curves of FSPG for these settings. The results indicate that a setting of \(d = 0.5\) leads to faster convergence. In contrast, higher values such as \(d = 0.7\) and \(d = 0.9\) yield more robust but slower convergence patterns.

In the final scenario, we assessed the performance of the FSPG algorithm on a real-world dataset. Figure \ref{fig6} illustrates the relationship between the MSE of the training and testing datasets and the number of iterations for each algorithm when $\tau$ is fixed to $0.5$ for robust regression. The learning curves reveal that while the FPG and SUB algorithms exhibit significant fluctuations, the FSPG and SIAD algorithms demonstrate a consistent decrease in MSE with increasing iterations. This stability is attributed to the ability of FSPG and SIAD to systematically reduce the objective function in each iteration, a property not guaranteed by the FPG and SUB algorithms.

\section{Conclusion}\label{sec6}
This paper introduced the federated smoothing proximal gradient (FSPG) approach, a novel approach for handling sparse-penalized quantile regression with non-convex and non-smooth penalties in federated settings. We rigorously demonstrated the convergence behavior of the FSPG, highlighting its efficiency with a sub-gradient bound rate of $o\left(k^{-\frac{1}{2}+\frac{d}{2}}\right)$ for the proximal gradient steps. This represents a significant enhancement over traditional methods such as the sub-gradient (SUB) algorithm and proximal gradient descent with time-increasing penalty (PGD). Uniquely, FSPG guarantees quantifiable improvements in model accuracy with each iteration, a feature not consistently provided by other methods.  The algorithm achieves a reduction in the fluctuation of coefficients across successive iterations at a rate of $o\left(k^{-\frac{1}{2}-\frac{d}{2}}\right)$, markedly superior to conventional methods. This advantage underscores the superior optimization dynamics of our approach, especially suited for the federated settings. Empirical results confirm that FSPG is more capable of providing accurate coefficients in complex scenarios, and excels in precisely identifying active and non-active coefficients. This capability ensures more reliable and interpretable models, which are crucial for practical applications requiring precise predictive analytics. As a result, the FSPG algorithm emerges as a robust and efficient framework for addressing the complex challenges posed by non-smooth and non-convex objective functions in federated quantile regression. This method significantly advances the field by enhancing the reliability and interpretability of the models in decentralized data environments.


\end{document}